\documentclass[11pt,reqno]{amsart}
\usepackage{graphicx,enumerate,amssymb,bbold}
\usepackage[notrig]{physics}
\usepackage[labelfont=small]{subcaption}
\usepackage[ruled,vlined]{algorithm2e}
\usepackage[left=3.0cm,right=3.0cm,top=2.5cm,bottom=2.5cm,includeheadfoot]{geometry}

\usepackage[colorlinks=true, pdfstartview=FitV, linkcolor=blue,
            citecolor=blue, urlcolor=blue]{hyperref}

\usepackage[capitalise,noabbrev,nameinlink]{cleveref}

\usepackage{multirow}

\numberwithin{equation}{section}
\numberwithin{figure}{section}
\setcounter{tocdepth}{3}

\usepackage{booktabs}
\usepackage{mathtools}
\usepackage[hang,flushmargin]{footmisc} 
\usepackage{tabularx}
\usepackage[table]{xcolor}
\usepackage{tikz} 
\usepackage{pgfplots}
\pgfplotsset{compat=newest} 
\usetikzlibrary{calc, chains, fit, positioning, shapes, patterns}
\usepackage{gensymb}
\usepackage{adjustbox}
\usepackage{pifont}


\theoremstyle{plain}
\newtheorem{theorem}{Theorem}[section]
\newtheorem{lemma}[theorem]{Lemma}
\newtheorem{proposition}[theorem]{Proposition}
\newtheorem{corollary}[theorem]{Corollary}

\theoremstyle{definition}
\newtheorem{definition}{Definition}[section]

\newtheorem{remark}{Remark}[section]


\newcommand{\bitem}{\begin{itemize}}
\newcommand{\eitem}{\end{itemize}}
\newcommand{\mc}[1]{\mathcal{#1}}
\newcommand{\mb}[1]{\mathbb{#1}}

\newcommand{\N}{\mathbb{N}}
\newcommand{\R}{\mathbb{R}}

\newcommand{\EE}{\mathbb{E}}

\newcommand{\bpm}{\begin{pmatrix}}
\newcommand{\epm}{\end{pmatrix}}
\newcommand{\bsm}{\left(\begin{smallmatrix}}
\newcommand{\esm}{\end{smallmatrix}\right)}
\newcommand{\T}{\top}

\newcommand{\la}{\langle}
\newcommand{\ra}{\rangle}

\newcommand{\mrm}[1]{\mathrm{#1}}

\newcommand{\veps}{\varepsilon}

\newcommand{\dotcup}{\mathbin{\dot{\cup}}}

\newcommand{\eins}{\mathbb{1}}

\DeclareMathOperator{\Diag}{Diag}
\DeclareMathOperator{\diag}{diag}

\DeclareMathOperator{\ggrad}{grad}

\DeclareMathOperator{\Exp}{Exp}

\newcommand{\cmark}{\ding{51}}%
\newcommand{\xmark}{\ding{55}}%

\definecolor{colorM}{RGB}{255,140,0}
\definecolor{colorL}{RGB}{0,140,200}
\definecolor{colorR}{RGB}{140,0,200}

\newcommand{\PlusMarker}{\protect\tikz{\protect\draw[line width=0.3ex, x=1ex, y=1ex] (0.5,0) -- (0.5,1)(0,0.5) -- (1,0.5);}}

\newcommand{\SelfAssignmentGraph}{
\begin{tikzpicture}[x=1mm, y=1mm,
    node distance = 4 and 21,
      start chain = going below,
         V/.style = {circle, draw,
                     fill=##1,
                     inner sep=0, minimum size=10,
                     node contents={}, thick},
                    ]
\foreach \i in {1,...,5}
{
	\node (n1\i) [V=colorL, on chain, label={[text=colorL]left:$f_{\i}'$}]  {};
	\ifnum\i>1
		\ifnum\i<5
			\node (n2\i) [V=colorM, right= 40 of n1\i] {};
		\fi
	\fi
	\node (n3\i) [V=colorR, right= 80 of n1\i, label={[text=colorR]right:$f_{\i}$}]  {};
}
\node (label) [colorL,fit=(n15) (n11), label=above:\textbf{Labeling}] {};
\node (data) [colorR,fit=(n35) (n31), label=above:\textbf{Data}] {};
\node [anchor=south, above=1.5 of n22] {\textbf{Prototypes}};

\foreach \i in {1,...,5}
{
	\foreach \j in {1,...,5}
	{
		\ifnum\j>1
			\ifnum\j<5
				\draw[-, shorten >=1mm, shorten <= 1mm,thick] (n1\i) edge (n2\j);
			\fi
		\fi
		\ifnum\i>1
			\ifnum\i<5
				\draw[-, shorten >=1mm, shorten <= 1mm,thick] (n2\i) edge (n3\j);
			\fi
		\fi
	}
}
\node[anchor=center] at ($(n15)!0.25!(n34)+(0,-2.5)$) {\small { $W$ } };
\node[anchor=center] at ($(n14)!0.75!(n35)+(0,-2.5)$) {\small { $C(W)^{-1}W^\top$} };
\end{tikzpicture}
}

\definecolor{myblue}{RGB}{0,113,188}

\definecolor{mycolorA}{RGB}{113,188,0}
\definecolor{mycolorB}{RGB}{188,0,113}
\definecolor{mycolorC}{RGB}{188,113,0}

\pgfdeclarepatternformonly{my grid}{\pgfqpoint{-1pt}{-1pt}}{\pgfqpoint{2pt}{2pt}}{\pgfqpoint{2pt}{2pt}}%
{
  \pgfsetlinewidth{0.2pt}
  \pgfpathmoveto{\pgfqpoint{0pt}{0pt}}
  \pgfpathlineto{\pgfqpoint{0pt}{2pt}}
  \pgfpathmoveto{\pgfqpoint{0pt}{0pt}}
  \pgfpathlineto{\pgfqpoint{2pt}{0pt}}
  \pgfusepath{stroke}
}

\definecolor{colorHighlight}{RGB}{80,80,80}

\newcommand{\showExpPatchTransferTrain}[2]{
	\begin{centering}
		\begin{tabular}{c@{\hskip 0.6em}c@{\hskip 0.4em}c@{\hskip 0.4em}c@{\hskip 0.4em}c}

			& \multicolumn{4}{c@{\hskip 0.6em}}{\footnotesize \textbf{Locally Invariant Patch Dictionary Learning using the SAF } \boldmath $(s=0)$} \\
			\cmidrule[0.5pt](l{1.2em}r{1.65em}){2-5}
			\addlinespace
			& \cellcolor{colorHighlight} & \cellcolor{colorHighlight} & \cellcolor{colorHighlight} & \cellcolor{colorHighlight} \\[-0.8em]

			\rotatebox{90}{\, \quad \footnotesize \textbf{images}}
			&
			\cellcolor{colorHighlight}
			\includegraphics[width=#1\textwidth]{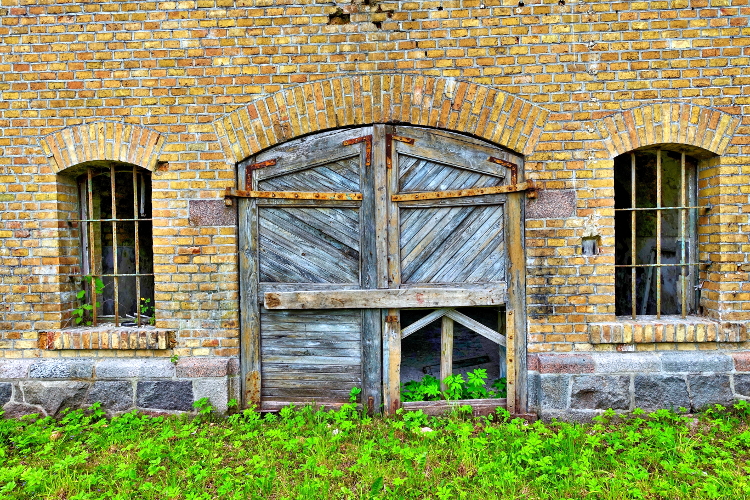}
			&
			\cellcolor{colorHighlight}
			\includegraphics[width=#1\textwidth]{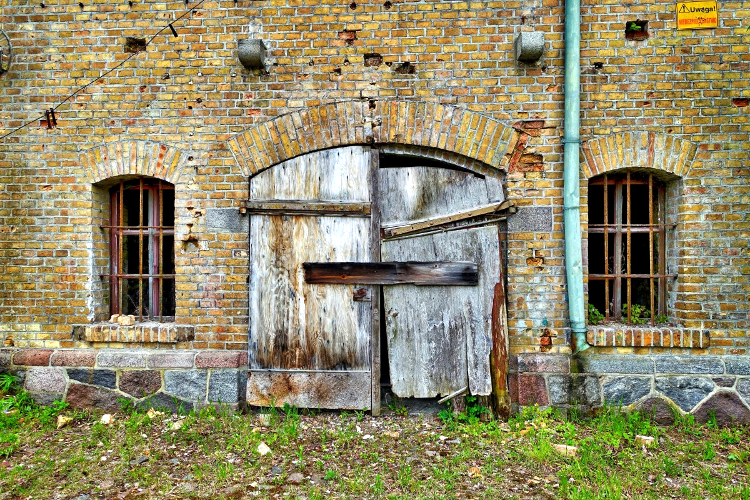}
			&
			\cellcolor{colorHighlight}
			\includegraphics[width=#1\textwidth]{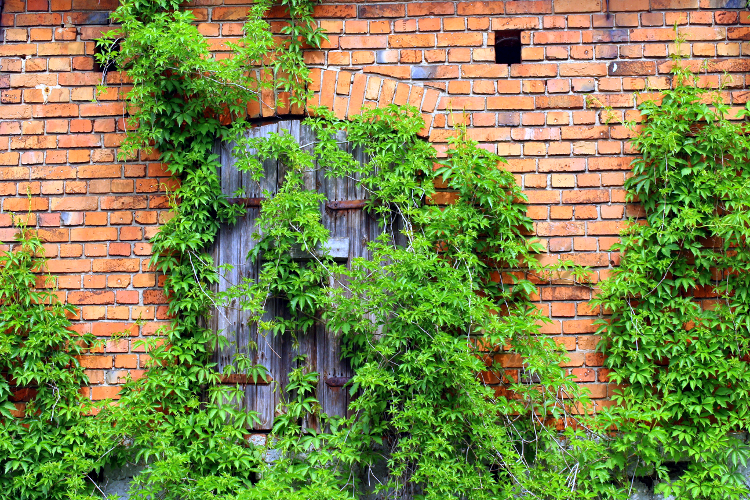}
			&
			\cellcolor{colorHighlight}
			\includegraphics[width=#1\textwidth]{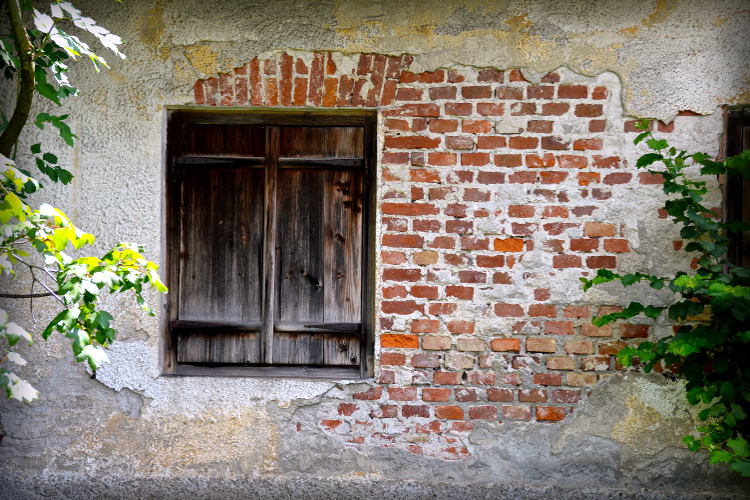}
			\\[-0.8em]
			& & & & \\[0.6em]

			\rotatebox{90}{\ \quad \footnotesize \textbf{partition}}
			&
			\includegraphics[width=#1\textwidth]{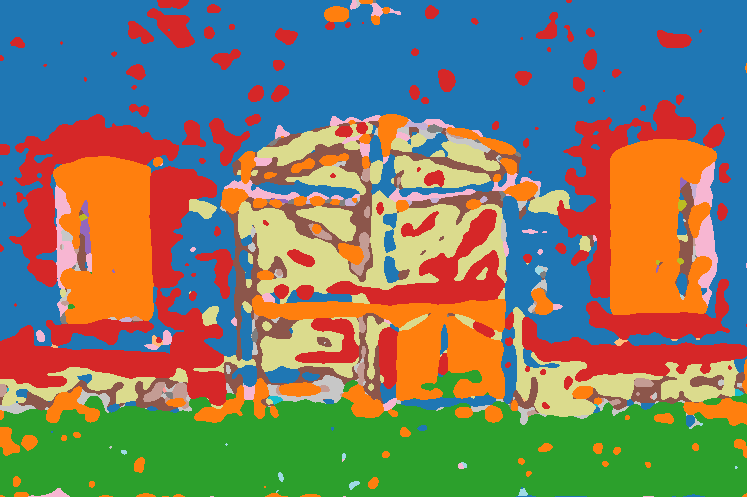}
			&
			\includegraphics[width=#1\textwidth]{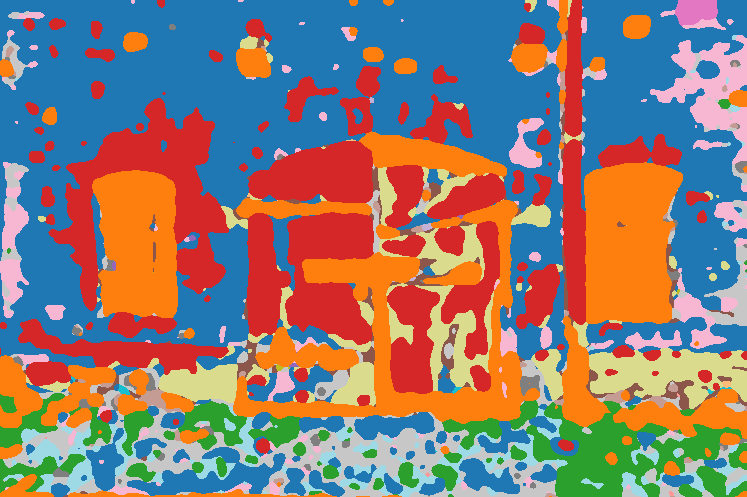}
			&
			\includegraphics[width=#1\textwidth]{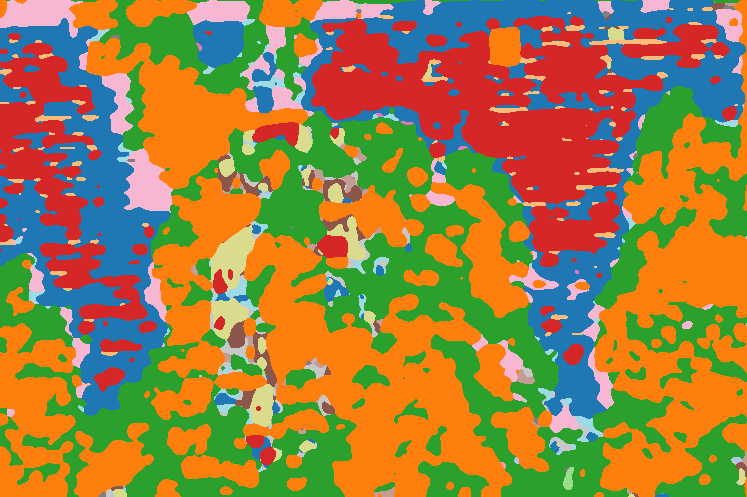}
			&
			\includegraphics[width=#1\textwidth]{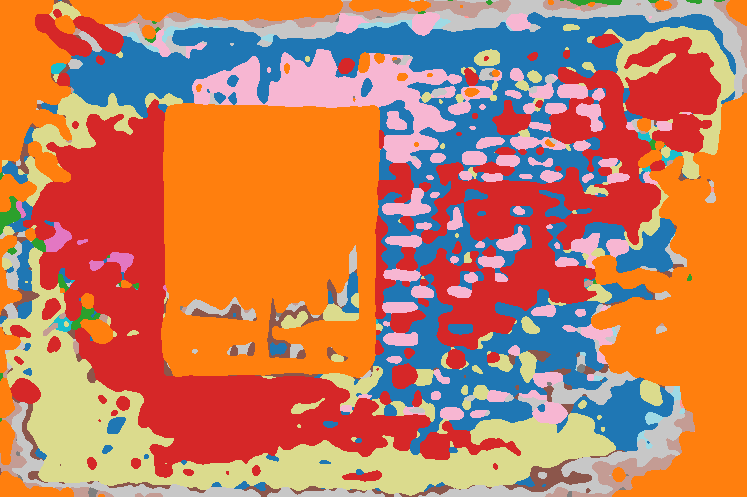}
			\\

			\rotatebox{90}{\quad   \footnotesize \textbf{assignment}}
			&
			\includegraphics[width=#1\textwidth]{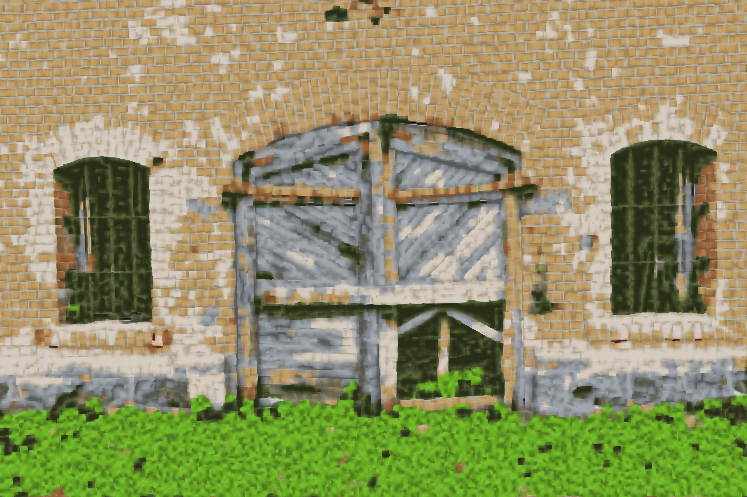}
			&
			\includegraphics[width=#1\textwidth]{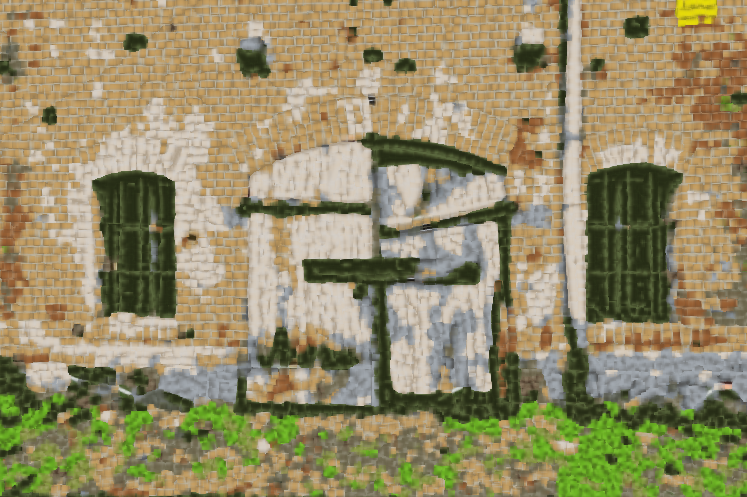}
			&
			\includegraphics[width=#1\textwidth]{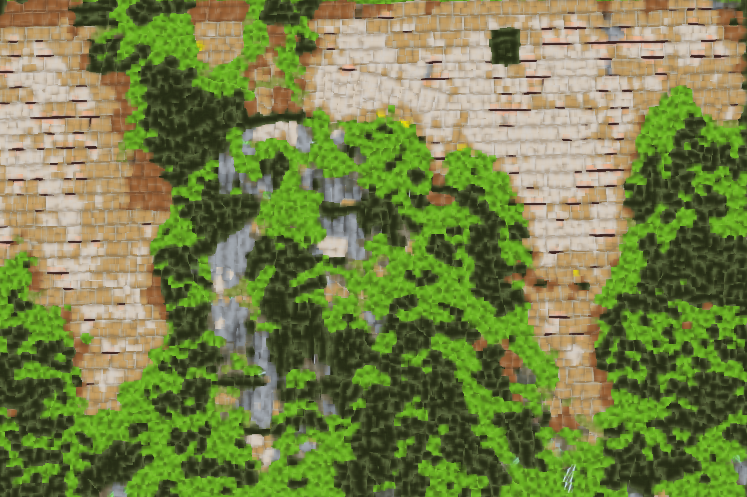}
			&
			\includegraphics[width=#1\textwidth]{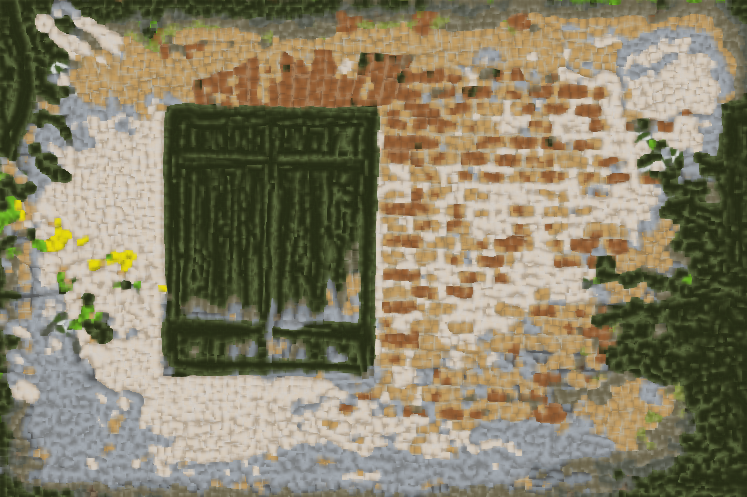}
			\\[0.6em]

			\rotatebox{90}{\quad \footnotesize \textbf{patches} \boldmath $\, \mc{F}_{\ast}$}
			&
			\multicolumn{2}{c@{\hskip 0.6em}}
			{
			  \includegraphics[width=#1\textwidth]{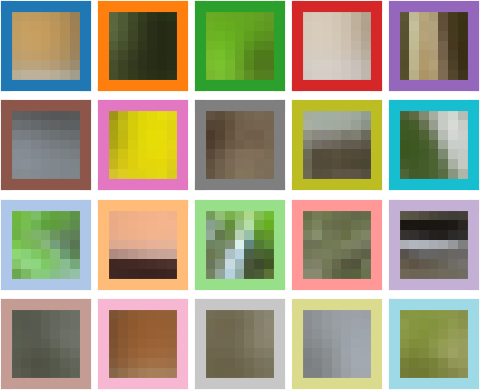}
			}
			&
			\multicolumn{2}{c@{\hskip 0.4em}}
			{
			  \includegraphics[width=#2\textwidth]{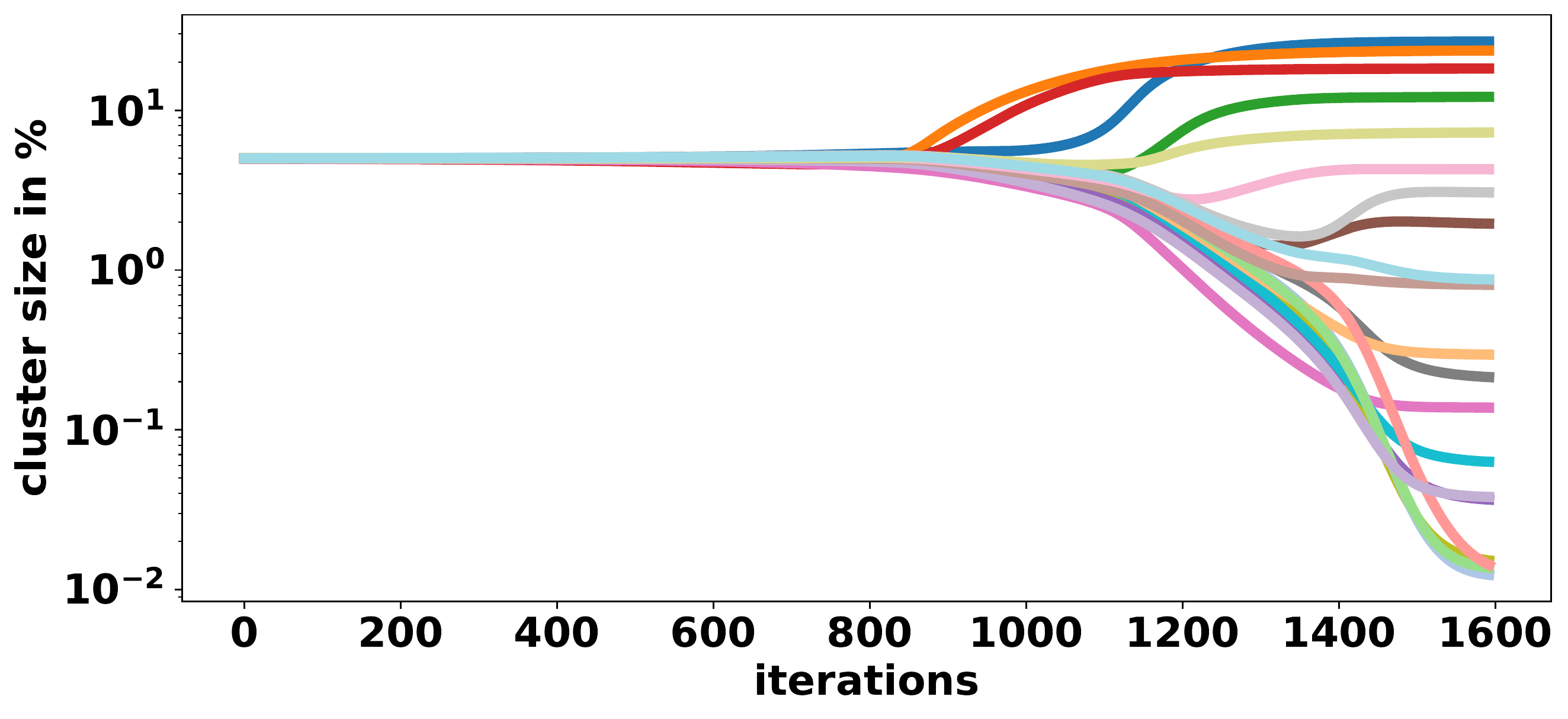}
			}

		\end{tabular}
	\end{centering}
}

\newcommand{\showExpPatchTransferEval}[2]{
	\begin{centering}
		\begin{tabular}{c@{\hskip 0.6em}c@{\hskip 0.4em}c@{\hskip 0.4em}c@{\hskip 0.4em}c}

			& \multicolumn{4}{c@{\hskip 0.6em}}{\footnotesize \textbf{Patch Dictionary Evaluation using the supervised AF} \boldmath } \\
			\cmidrule[0.5pt](l{1.2em}r{1.65em}){2-5}
			& & & & \\[-0.8em]

			& \footnotesize \textbf{(a}) & \footnotesize \textbf{(b}) & \footnotesize \textbf{(c}) & \footnotesize \textbf{(d}) \\[0.4em]

			\rotatebox{90}{\, \quad \footnotesize \textbf{images}}
			&
			\includegraphics[width=#1\textwidth]{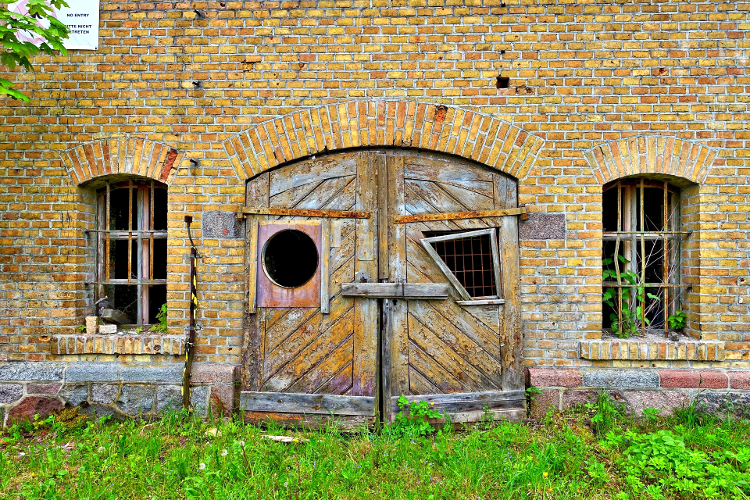}
			&
			\includegraphics[width=#1\textwidth]{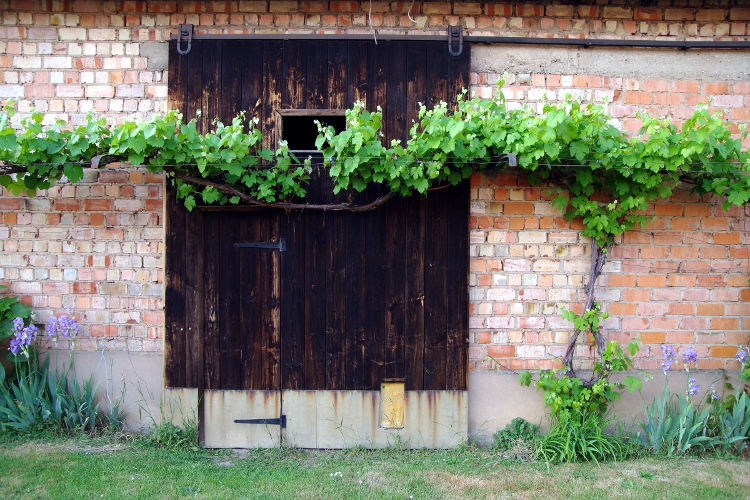}
			&
			\includegraphics[width=#1\textwidth]{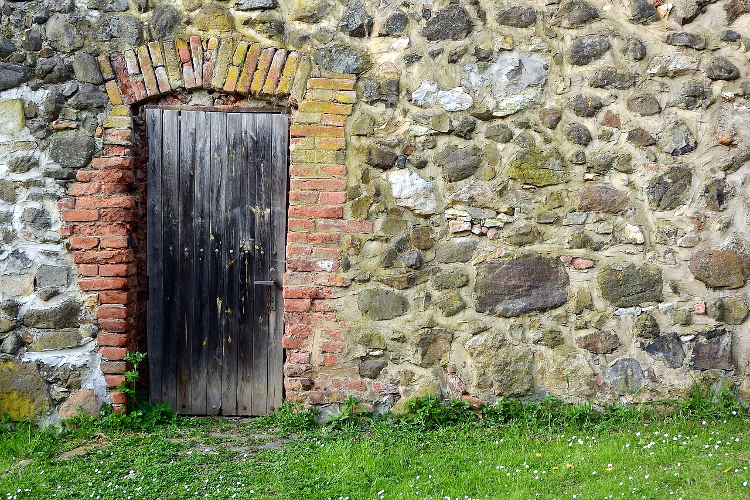}
			&
			\includegraphics[width=#1\textwidth]{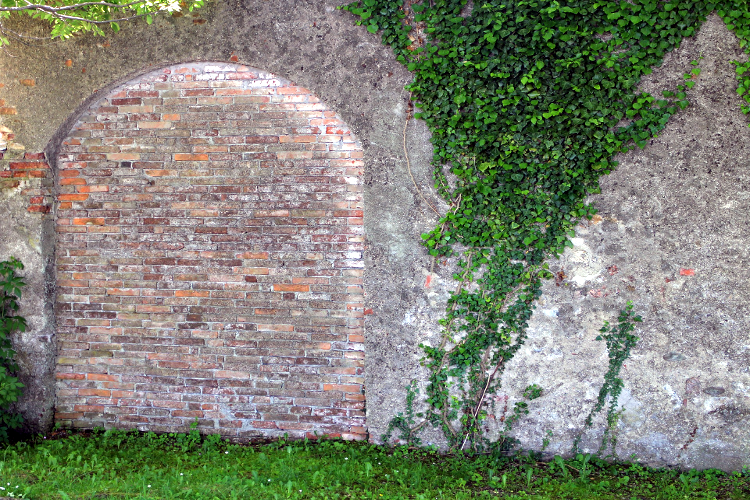}
			\\

			\rotatebox{90}{\ \quad \footnotesize \textbf{partition}}
			&
			\includegraphics[width=#1\textwidth]{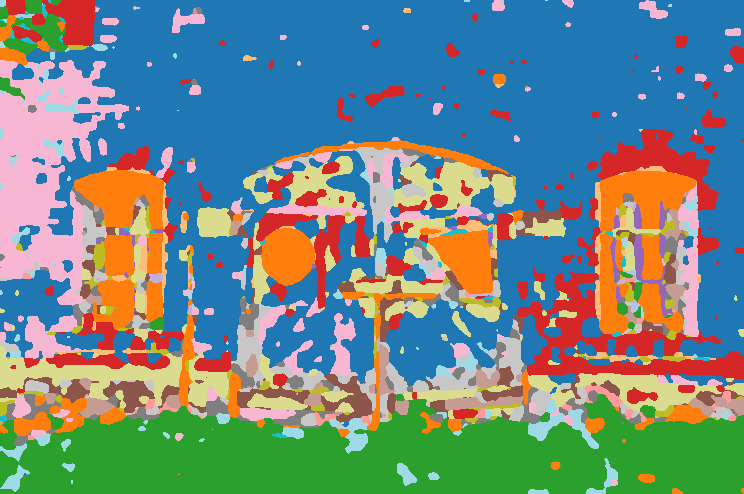}
			&
			\includegraphics[width=#1\textwidth]{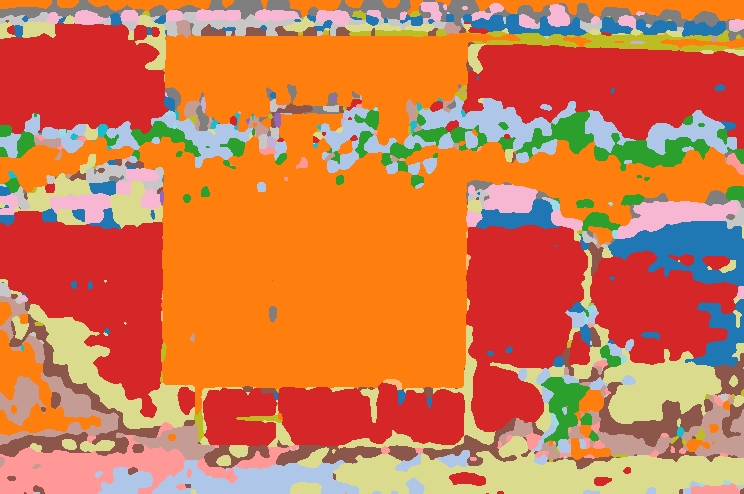}
			&
			\includegraphics[width=#1\textwidth]{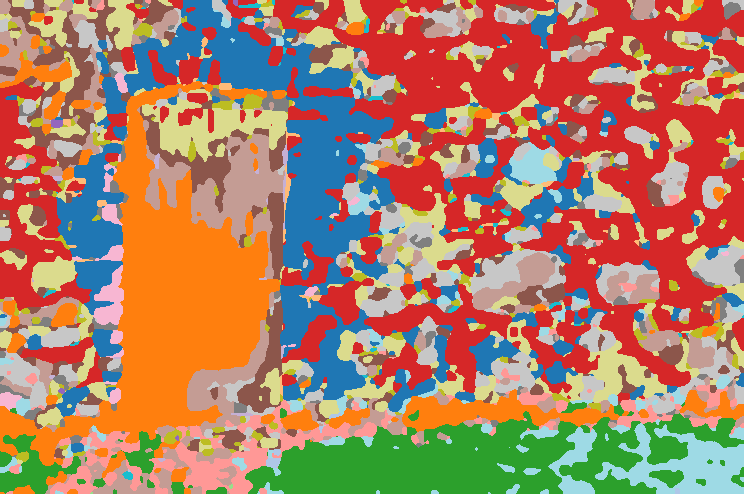}
			&
			\includegraphics[width=#1\textwidth]{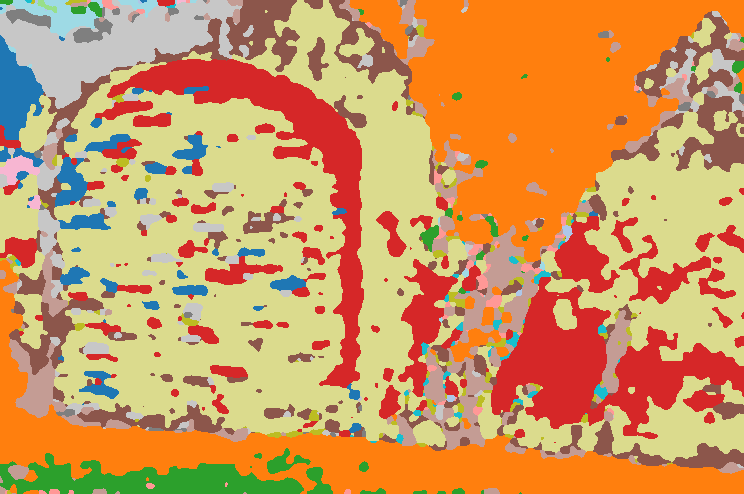}
			\\

			\rotatebox{90}{\quad  \footnotesize \textbf{assignment}}
			&
			\includegraphics[width=#1\textwidth]{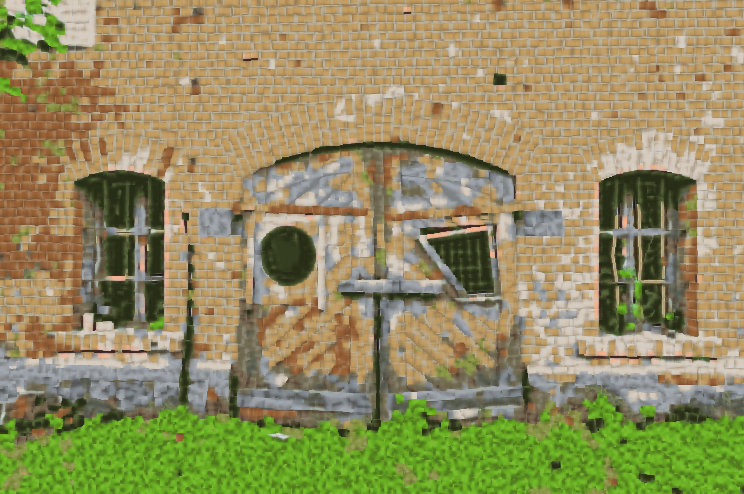}
			&
			\includegraphics[width=#1\textwidth]{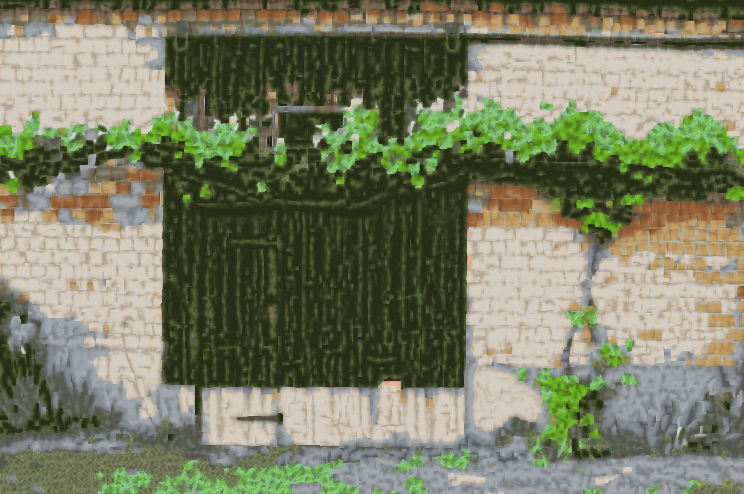}
			&
			\includegraphics[width=#1\textwidth]{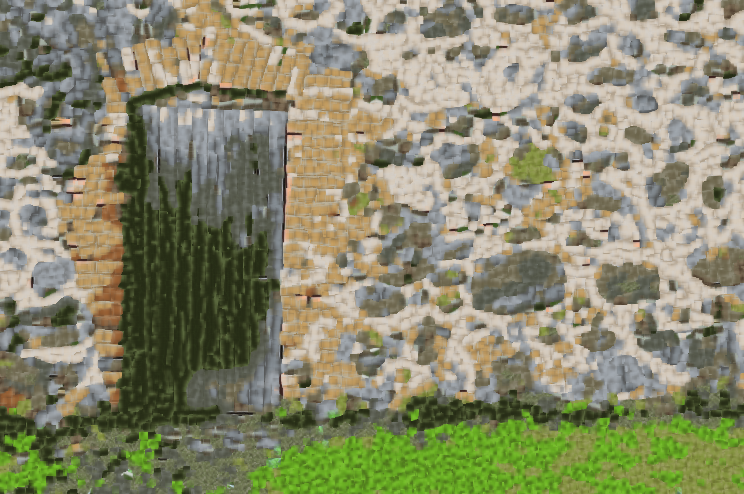}
			&
			\includegraphics[width=#1\textwidth]{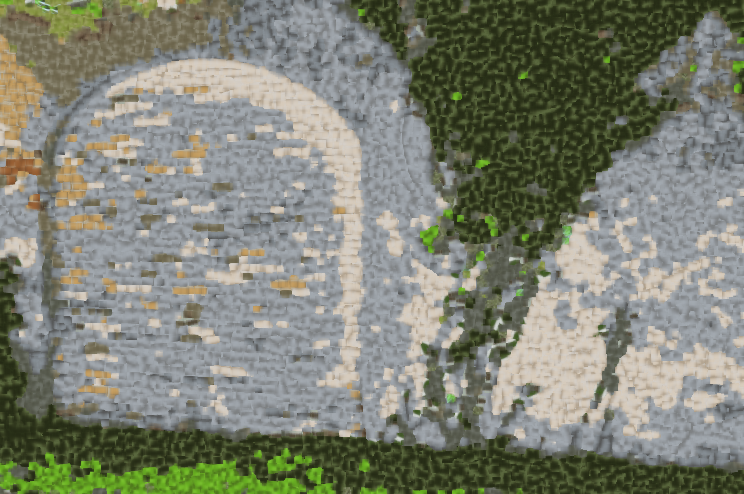}


		\end{tabular}
	\end{centering}
}

\newcommand{\showExpRGBiiia}[1]{
	\begin{centering}
		\begin{tabular}{c@{\hskip 0.6em}c@{\hskip 0.4em}c@{\hskip 0.8em}c@{\hskip 0.4em}c}
			& \multicolumn{2}{c@{\hskip 0.6em}}{\footnotesize \textbf{SAF, } \boldmath $s=0$} & \multicolumn{2}{c@{\hskip 0.8em}}{\footnotesize \textbf{SAF, } \boldmath $s=1$} \\
			\cmidrule(l{1.2em}r{2.0em}){2-3} \cmidrule(l{1.2em}r{1.65em}){4-5}
			& \footnotesize uniform & \footnotesize non-uniform & \footnotesize uniform & \footnotesize non-uniform \\[0.6em]

			\rotatebox{90}{\quad \, \footnotesize \textbf{partition}}
			&
			\includegraphics[width=#1\textwidth]{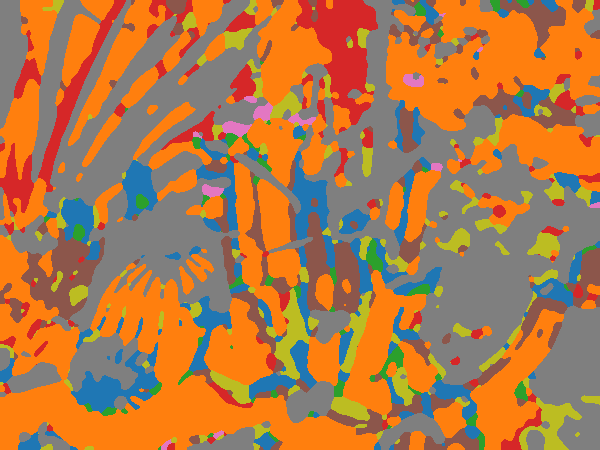}
			&
			\includegraphics[width=#1\textwidth]{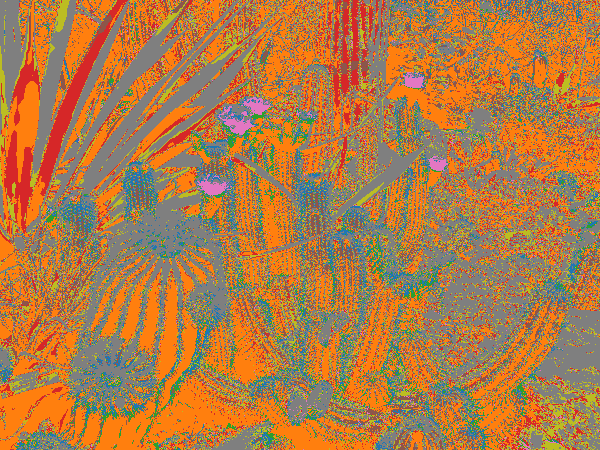}
			&
			\includegraphics[width=#1\textwidth]{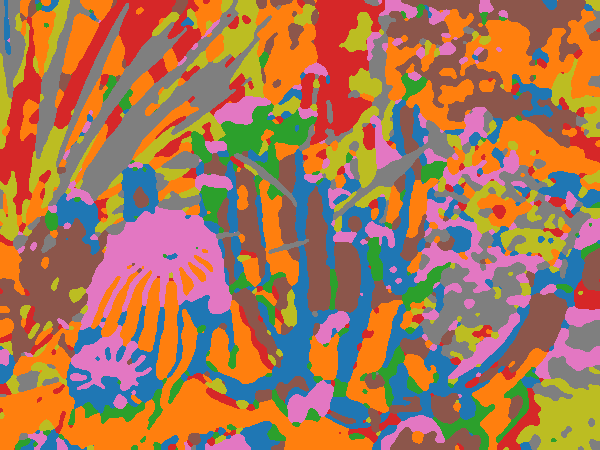}
			&
			\includegraphics[width=#1\textwidth]{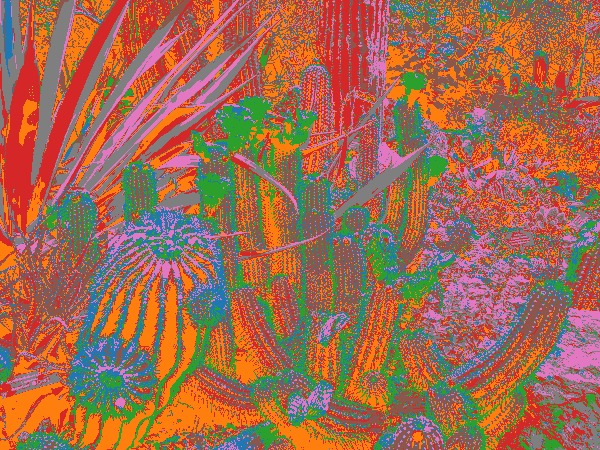}
			\\

			\rotatebox{90}{\quad \footnotesize \textbf{assignment}}
			&
			\includegraphics[width=#1\textwidth]{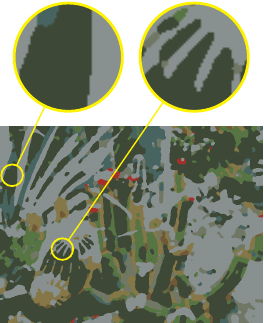}
			&
			\includegraphics[width=#1\textwidth]{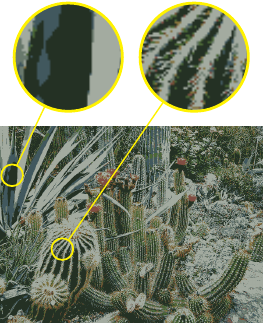}
			&
			\includegraphics[width=#1\textwidth]{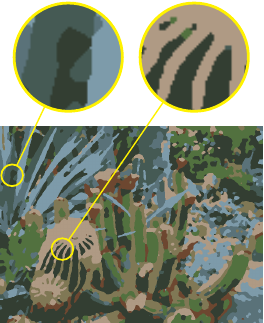}
			&
			\includegraphics[width=#1\textwidth]{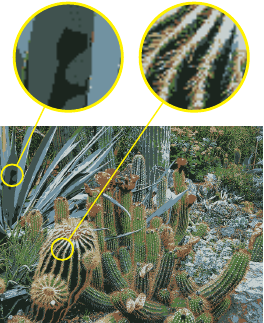}
			\\[-0.18em]

			\rotatebox{90}{\footnotesize \boldmath $\, \mc{F}_{\ast}$}
			&
			\includegraphics[width=#1\textwidth]{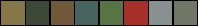}
			&
			\includegraphics[width=#1\textwidth]{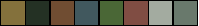}
			&
			\includegraphics[width=#1\textwidth]{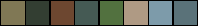}
			&
			\includegraphics[width=#1\textwidth]{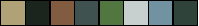}
			\\[1.2em]

			& \multicolumn{2}{c@{\hskip 0.6em}}{\footnotesize \textbf{AF} supervised \cite{Astrom:2017ac}} & \multicolumn{2}{c@{\hskip 0.8em}}{\footnotesize \textbf{Nearest Neighbor}} \\
			\cmidrule(l{1.2em}r{2.0em}){2-3} \cmidrule(l{1.2em}r{1.65em}){4-5}
			& \footnotesize k-center & \footnotesize k-means & \footnotesize k-center & \footnotesize k-means \\[0.6em]

			\rotatebox{90}{\quad \, \footnotesize \textbf{partition}}
			&
			\includegraphics[width=#1\textwidth]{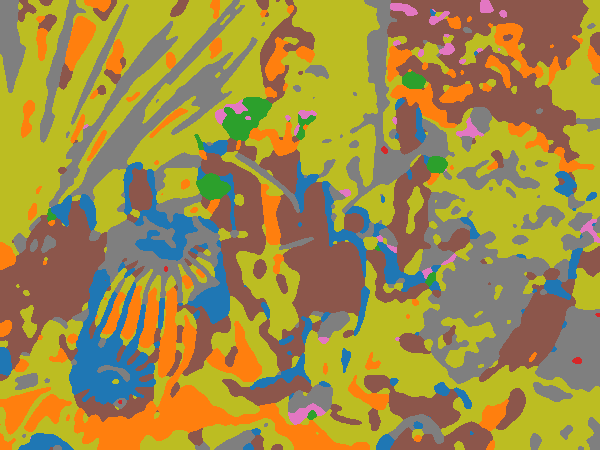}
			&
			\includegraphics[width=#1\textwidth]{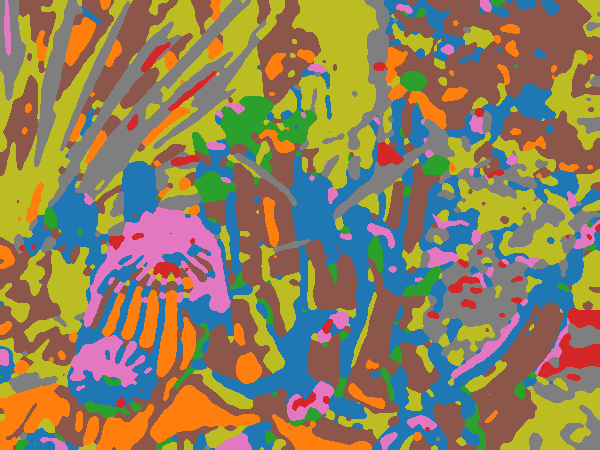}
			&
			\includegraphics[width=#1\textwidth]{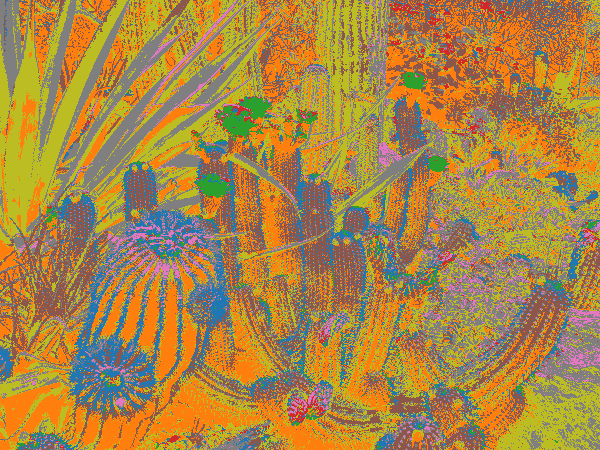}
			&
			\includegraphics[width=#1\textwidth]{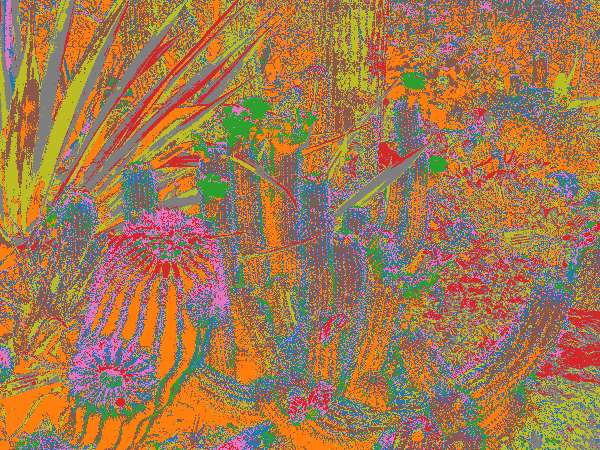}
			\\

			\rotatebox{90}{\quad \footnotesize \textbf{assignment}}
			&
			\includegraphics[width=#1\textwidth]{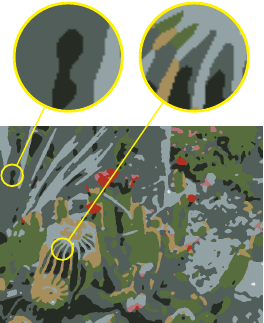}
			&
			\includegraphics[width=#1\textwidth]{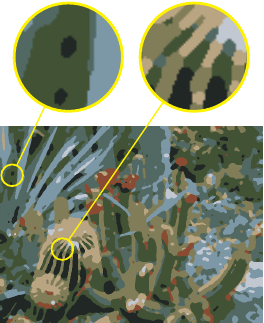}
			&
			\includegraphics[width=#1\textwidth]{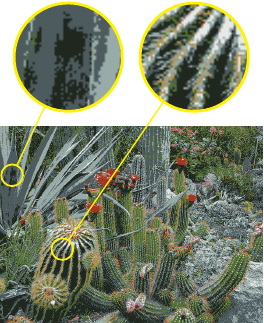}
			&
			\includegraphics[width=#1\textwidth]{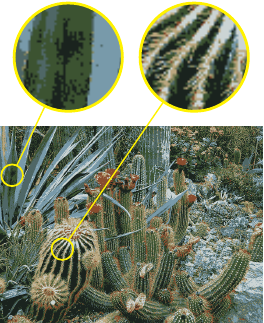}
			\\[-0.18em]

			\rotatebox{90}{\footnotesize \boldmath $\, \mc{F}_{\ast}$}
			&
			\includegraphics[width=#1\textwidth]{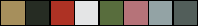}
			&
			\includegraphics[width=#1\textwidth]{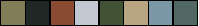}
			&
			\includegraphics[width=#1\textwidth]{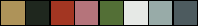}
			&
			\includegraphics[width=#1\textwidth]{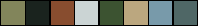}

		\end{tabular}
	\end{centering}
}
\newcommand{\showExpRGBiiib}[1]{
	\begin{centering}
		\begin{tabular}{c@{\hskip 0.6em}c@{\hskip 0.4em}c@{\hskip 0.8em}c@{\hskip 0.4em}c}

			& \multicolumn{2}{c@{\hskip 0.6em}}{\footnotesize \textbf{Spectral Clustering} \cite{Shi2000} } & \multicolumn{2}{c@{\hskip 0.8em}}{\footnotesize \textbf{Fast Partitioning} \cite{Storath2014} } \\
			\cmidrule(l{1.2em}r{2.0em}){2-3} \cmidrule(l{1.2em}r{1.65em}){4-5}
			& \footnotesize $\alpha = 0.12$ & \footnotesize $\alpha = 0.6$ & \footnotesize $\gamma = 0.1$ & \footnotesize $\gamma = 0.3$ \\[0.6em]

			\rotatebox{90}{\quad \, \footnotesize \textbf{partition}}
			&
			\includegraphics[width=#1\textwidth]{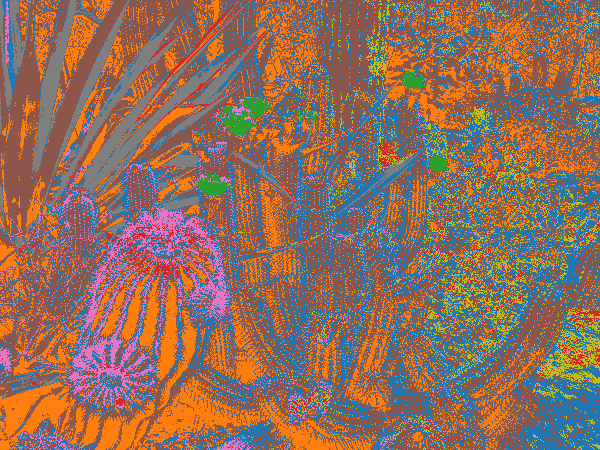}
			&
			\includegraphics[width=#1\textwidth]{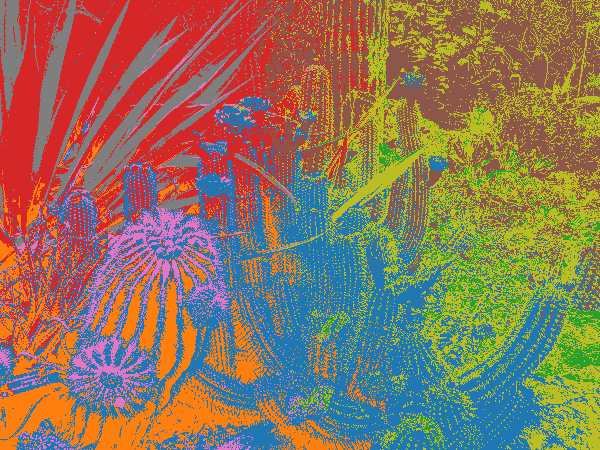}
			&
			\includegraphics[width=#1\textwidth]{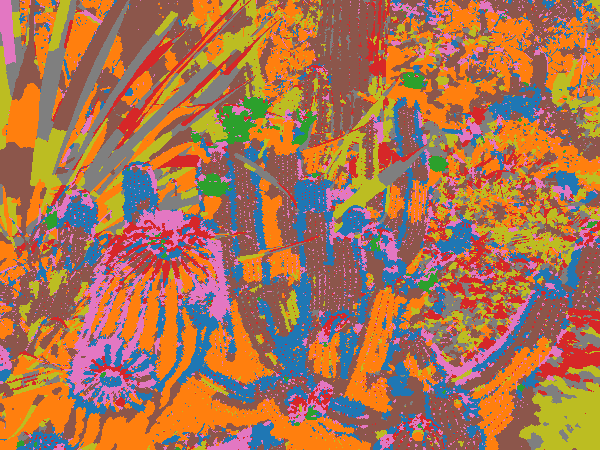}
			&
			\includegraphics[width=#1\textwidth]{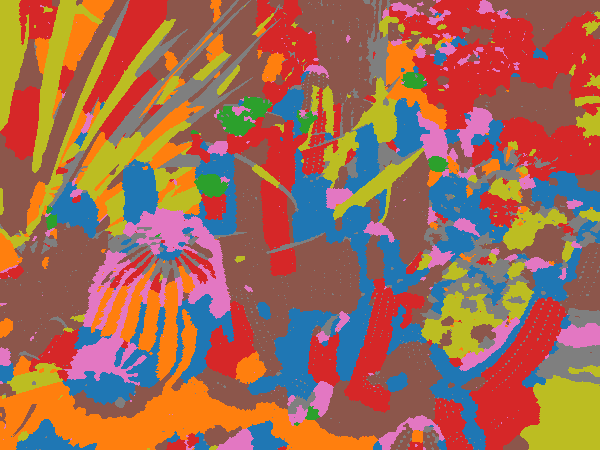}
			\\

			\rotatebox{90}{\quad \footnotesize \textbf{assignment}}
			&
			\includegraphics[width=#1\textwidth]{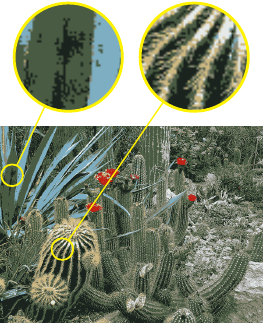}
			&
			\includegraphics[width=#1\textwidth]{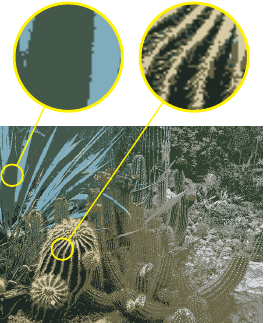}
			&
			\includegraphics[width=#1\textwidth]{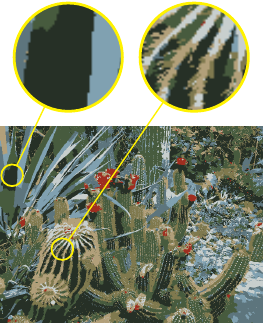}
			&
			\includegraphics[width=#1\textwidth]{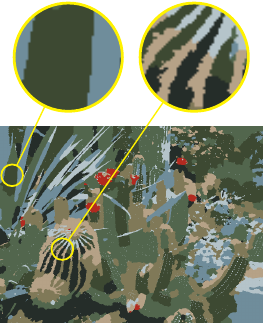}
			\\[-0.18em]

			\rotatebox{90}{\footnotesize \boldmath $\, \mc{F}_{\ast}$}
			&
			\includegraphics[width=#1\textwidth]{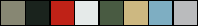}
			&
			\includegraphics[width=#1\textwidth]{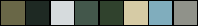}
			&
			\includegraphics[width=#1\textwidth]{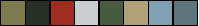}
			&
			\includegraphics[width=#1\textwidth]{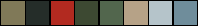}
		\end{tabular}
	\end{centering}
}

\newcommand{\showExpPatchesII}[3]{
	\begin{centering}
		\begin{tabular}{c@{\hskip 0.6em}c@{\hskip 0.4em}c@{\hskip 0.4em}c@{\hskip 0.8em}c@{\hskip 0.4em}c@{\hskip 0.4em}c}
			& \multicolumn{3}{c@{\hskip 0.6em}}{\footnotesize \textbf{SAF, } \boldmath $s=0$} & \multicolumn{3}{c@{\hskip 0.8em}}{\footnotesize \textbf{SAF, } \boldmath $s=1$} \\
			\cmidrule(l{1.4em}r{2.2em}){2-4} \cmidrule(l{1.4em}r{1.85em}){5-7}
			&
			\footnotesize \boldmath $7 \times 7$ & \footnotesize \boldmath $11 \times 11$ & \footnotesize \boldmath $15 \times 15$ & \footnotesize \boldmath $7 \times 7$ & \footnotesize \boldmath $11 \times 11$  & \footnotesize \boldmath $15 \times 15$ \\[0.6em]
			\rotatebox{90}{\quad \, \footnotesize \textbf{partition}}
			&
			\includegraphics[width=#1\textwidth]{{./Figures/Patches/#2/self_assignment/Assignment_False_Colors_self_assignment_K=4_size_neigh=#3_s=0.0_p=7_rotations=True}.png}
			&
			\includegraphics[width=#1\textwidth]{{./Figures/Patches/#2/self_assignment/Assignment_False_Colors_self_assignment_K=4_size_neigh=#3_s=0.0_p=11_rotations=True}.png}
			&
			\includegraphics[width=#1\textwidth]{{./Figures/Patches/#2/self_assignment/Assignment_False_Colors_self_assignment_K=4_size_neigh=#3_s=0.0_p=15_rotations=True}.png}
			&
			\includegraphics[width=#1\textwidth]{{./Figures/Patches/#2/self_assignment/Assignment_False_Colors_self_assignment_K=4_size_neigh=#3_s=1.0_p=7_rotations=True}.png}
			&
			\includegraphics[width=#1\textwidth]{{./Figures/Patches/#2/self_assignment/Assignment_False_Colors_self_assignment_K=4_size_neigh=#3_s=1.0_p=11_rotations=True}.png}
			&
			\includegraphics[width=#1\textwidth]{{./Figures/Patches/#2/self_assignment/Assignment_False_Colors_self_assignment_K=4_size_neigh=#3_s=1.0_p=15_rotations=True}.png}
			\\

			\rotatebox{90}{\quad \footnotesize \textbf{assignment}}
			&
			\includegraphics[width=#1\textwidth]{{./Figures/Patches/#2/self_assignment/Assignment_self_assignment_K=4_size_neigh=#3_s=0.0_p=7_rotations=True}.png}
			&
			\includegraphics[width=#1\textwidth]{{./Figures/Patches/#2/self_assignment/Assignment_self_assignment_K=4_size_neigh=#3_s=0.0_p=11_rotations=True}.png}
			&
			\includegraphics[width=#1\textwidth]{{./Figures/Patches/#2/self_assignment/Assignment_self_assignment_K=4_size_neigh=#3_s=0.0_p=15_rotations=True}.png}
			&
			\includegraphics[width=#1\textwidth]{{./Figures/Patches/#2/self_assignment/Assignment_self_assignment_K=4_size_neigh=#3_s=1.0_p=7_rotations=True}.png}
			&
			\includegraphics[width=#1\textwidth]{{./Figures/Patches/#2/self_assignment/Assignment_self_assignment_K=4_size_neigh=#3_s=1.0_p=11_rotations=True}.png}
			&
			\includegraphics[width=#1\textwidth]{{./Figures/Patches/#2/self_assignment/Assignment_self_assignment_K=4_size_neigh=#3_s=1.0_p=15_rotations=True}.png}
			\\[0.05em]

			\rotatebox{90}{\footnotesize \boldmath $\, \mc{F}_{\ast}$}
			&
			\includegraphics[width=#1\textwidth]{{./Figures/Patches/#2/self_assignment/Prototypes_Evolved_self_assignment_K=4_size_neigh=#3_s=0.0_p=7_rotations=True}.png}
			&
			\includegraphics[width=#1\textwidth]{{./Figures/Patches/#2/self_assignment/Prototypes_Evolved_self_assignment_K=4_size_neigh=#3_s=0.0_p=11_rotations=True}.png}
			&
			\includegraphics[width=#1\textwidth]{{./Figures/Patches/#2/self_assignment/Prototypes_Evolved_self_assignment_K=4_size_neigh=#3_s=0.0_p=15_rotations=True}.png}
			&
			\includegraphics[width=#1\textwidth]{{./Figures/Patches/#2/self_assignment/Prototypes_Evolved_self_assignment_K=4_size_neigh=#3_s=1.0_p=7_rotations=True}.png}
			&
			\includegraphics[width=#1\textwidth]{{./Figures/Patches/#2/self_assignment/Prototypes_Evolved_self_assignment_K=4_size_neigh=#3_s=1.0_p=11_rotations=True}.png}
			&
			\includegraphics[width=#1\textwidth]{{./Figures/Patches/#2/self_assignment/Prototypes_Evolved_self_assignment_K=4_size_neigh=#3_s=1.0_p=15_rotations=True}.png}
			\\[0.6em]

			\rotatebox{90}{\quad \, \footnotesize \textbf{partition}}
			&
			\includegraphics[width=#1\textwidth]{{./Figures/Patches/#2/self_assignment/Assignment_False_Colors_self_assignment_K=10_size_neigh=#3_s=0.0_p=7_rotations=True}.png}
			&
			\includegraphics[width=#1\textwidth]{{./Figures/Patches/#2/self_assignment/Assignment_False_Colors_self_assignment_K=10_size_neigh=#3_s=0.0_p=11_rotations=True}.png}
			&
			\includegraphics[width=#1\textwidth]{{./Figures/Patches/#2/self_assignment/Assignment_False_Colors_self_assignment_K=10_size_neigh=#3_s=0.0_p=15_rotations=True}.png}
			&
			\includegraphics[width=#1\textwidth]{{./Figures/Patches/#2/self_assignment/Assignment_False_Colors_self_assignment_K=10_size_neigh=#3_s=1.0_p=7_rotations=True}.png}
			&
			\includegraphics[width=#1\textwidth]{{./Figures/Patches/#2/self_assignment/Assignment_False_Colors_self_assignment_K=10_size_neigh=#3_s=1.0_p=11_rotations=True}.png}
			&
			\includegraphics[width=#1\textwidth]{{./Figures/Patches/#2/self_assignment/Assignment_False_Colors_self_assignment_K=10_size_neigh=#3_s=1.0_p=15_rotations=True}.png}
			\\

			\rotatebox{90}{\quad \footnotesize \textbf{assignment}}
			&
			\includegraphics[width=#1\textwidth]{{./Figures/Patches/#2/self_assignment/Assignment_self_assignment_K=10_size_neigh=#3_s=0.0_p=7_rotations=True}.png}
			&
			\includegraphics[width=#1\textwidth]{{./Figures/Patches/#2/self_assignment/Assignment_self_assignment_K=10_size_neigh=#3_s=0.0_p=11_rotations=True}.png}
			&
			\includegraphics[width=#1\textwidth]{{./Figures/Patches/#2/self_assignment/Assignment_self_assignment_K=10_size_neigh=#3_s=0.0_p=15_rotations=True}.png}
			&
			\includegraphics[width=#1\textwidth]{{./Figures/Patches/#2/self_assignment/Assignment_self_assignment_K=10_size_neigh=#3_s=1.0_p=7_rotations=True}.png}
			&
			\includegraphics[width=#1\textwidth]{{./Figures/Patches/#2/self_assignment/Assignment_self_assignment_K=10_size_neigh=#3_s=1.0_p=11_rotations=True}.png}
			&
			\includegraphics[width=#1\textwidth]{{./Figures/Patches/#2/self_assignment/Assignment_self_assignment_K=10_size_neigh=#3_s=1.0_p=15_rotations=True}.png}
			\\[0.05em]

			\rotatebox{90}{\; \footnotesize \boldmath $\, \mc{F}_{\ast}$}
			&
			\includegraphics[width=#1\textwidth]{{./Figures/Patches/#2/self_assignment/Prototypes_Evolved_self_assignment_K=10_size_neigh=#3_s=0.0_p=7_rotations=True}.png}
			&
			\includegraphics[width=#1\textwidth]{{./Figures/Patches/#2/self_assignment/Prototypes_Evolved_self_assignment_K=10_size_neigh=#3_s=0.0_p=11_rotations=True}.png}
			&
			\includegraphics[width=#1\textwidth]{{./Figures/Patches/#2/self_assignment/Prototypes_Evolved_self_assignment_K=10_size_neigh=#3_s=0.0_p=15_rotations=True}.png}
			&
			\includegraphics[width=#1\textwidth]{{./Figures/Patches/#2/self_assignment/Prototypes_Evolved_self_assignment_K=10_size_neigh=#3_s=1.0_p=7_rotations=True}.png}
			&
			\includegraphics[width=#1\textwidth]{{./Figures/Patches/#2/self_assignment/Prototypes_Evolved_self_assignment_K=10_size_neigh=#3_s=1.0_p=11_rotations=True}.png}
			&
			\includegraphics[width=#1\textwidth]{{./Figures/Patches/#2/self_assignment/Prototypes_Evolved_self_assignment_K=10_size_neigh=#3_s=1.0_p=15_rotations=True}.png}

		\end{tabular}
\end{centering}
}

\newcommand{\showExpRGBi}[2]{
	\begin{centering}
		\begin{tabular}{c@{\hskip0.6em}c@{\hskip 0.4em}c@{\hskip0.4em}c@{\hskip0.4em}c@{\hskip0.4em}c}
			& \footnotesize \boldmath $s=0$ & \footnotesize \boldmath $s=0.25$ & \footnotesize \boldmath $s=0.50$ & \footnotesize \boldmath $s=0.75$ & \footnotesize \boldmath $s=1$ \\[0.6em]
			\rotatebox{90}{\quad\, \footnotesize \boldmath$3\times 3$}
			&
			\includegraphics[width=#1\textwidth]{{./Figures/RGB/#2/self_assignment/Assignment_self_assignment_K=16_size_neigh=3_s=0.0}.png}
			&
			\includegraphics[width=#1\textwidth]{{./Figures/RGB/#2/self_assignment/Assignment_self_assignment_K=16_size_neigh=3_s=0.25}.png}
			&
			\includegraphics[width=#1\textwidth]{{./Figures/RGB/#2/self_assignment/Assignment_self_assignment_K=16_size_neigh=3_s=0.5}.png}
			&
			\includegraphics[width=#1\textwidth]{{./Figures/RGB/#2/self_assignment/Assignment_self_assignment_K=16_size_neigh=3_s=0.75}.png}
			&
			\includegraphics[width=#1\textwidth]{{./Figures/RGB/#2/self_assignment/Assignment_self_assignment_K=16_size_neigh=3_s=1.0}.png}
			\\[-0.31em]

			&
			\includegraphics[width=#1\textwidth]{{./Figures/RGB/#2/self_assignment/Prototypes_EvolvedF_self_assignment_K=16_size_neigh=3_s=0.0}.pdf}
			&
			\includegraphics[width=#1\textwidth]{{./Figures/RGB/#2/self_assignment/Prototypes_EvolvedF_self_assignment_K=16_size_neigh=3_s=0.25}.pdf}
			&
			\includegraphics[width=#1\textwidth]{{./Figures/RGB/#2/self_assignment/Prototypes_EvolvedF_self_assignment_K=16_size_neigh=3_s=0.5}.pdf}
			&
			\includegraphics[width=#1\textwidth]{{./Figures/RGB/#2/self_assignment/Prototypes_EvolvedF_self_assignment_K=16_size_neigh=3_s=0.75}.pdf}
			&
			\includegraphics[width=#1\textwidth]{{./Figures/RGB/#2/self_assignment/Prototypes_EvolvedF_self_assignment_K=16_size_neigh=3_s=1.0}.pdf}
			\\[0.6em]

			\rotatebox{90}{\quad\, \footnotesize \boldmath$7\times 7$ }
			&
			\includegraphics[width=#1\textwidth]{{./Figures/RGB/#2/self_assignment/Assignment_self_assignment_K=16_size_neigh=7_s=0.0}.png}
			&
			\includegraphics[width=#1\textwidth]{{./Figures/RGB/#2/self_assignment/Assignment_self_assignment_K=16_size_neigh=7_s=0.25}.png}
			&
			\includegraphics[width=#1\textwidth]{{./Figures/RGB/#2/self_assignment/Assignment_self_assignment_K=16_size_neigh=7_s=0.5}.png}
			&
			\includegraphics[width=#1\textwidth]{{./Figures/RGB/#2/self_assignment/Assignment_self_assignment_K=16_size_neigh=7_s=0.75}.png}
			&
			\includegraphics[width=#1\textwidth]{{./Figures/RGB/#2/self_assignment/Assignment_self_assignment_K=16_size_neigh=7_s=1.0}.png}
			\\[-0.31em]

			&
			\includegraphics[width=#1\textwidth]{{./Figures/RGB/#2/self_assignment/Prototypes_EvolvedF_self_assignment_K=16_size_neigh=7_s=0.0}.pdf}
			&
			\includegraphics[width=#1\textwidth]{{./Figures/RGB/#2/self_assignment/Prototypes_EvolvedF_self_assignment_K=16_size_neigh=7_s=0.25}.pdf}
			&
			\includegraphics[width=#1\textwidth]{{./Figures/RGB/#2/self_assignment/Prototypes_EvolvedF_self_assignment_K=16_size_neigh=7_s=0.5}.pdf}
			&
			\includegraphics[width=#1\textwidth]{{./Figures/RGB/#2/self_assignment/Prototypes_EvolvedF_self_assignment_K=16_size_neigh=7_s=0.75}.pdf}
			&
			\includegraphics[width=#1\textwidth]{{./Figures/RGB/#2/self_assignment/Prototypes_EvolvedF_self_assignment_K=16_size_neigh=7_s=1.0}.pdf}
			\\[0.6em]

			\rotatebox{90}{\quad \footnotesize \boldmath$11\times 11$ }
			&
			\includegraphics[width=#1\textwidth]{{./Figures/RGB/#2/self_assignment/Assignment_self_assignment_K=16_size_neigh=11_s=0.0}.png}
			&
			\includegraphics[width=#1\textwidth]{{./Figures/RGB/#2/self_assignment/Assignment_self_assignment_K=16_size_neigh=11_s=0.25}.png}
			&
			\includegraphics[width=#1\textwidth]{{./Figures/RGB/#2/self_assignment/Assignment_self_assignment_K=16_size_neigh=11_s=0.5}.png}
			&
			\includegraphics[width=#1\textwidth]{{./Figures/RGB/#2/self_assignment/Assignment_self_assignment_K=16_size_neigh=11_s=0.75}.png}
			&
			\includegraphics[width=#1\textwidth]{{./Figures/RGB/#2/self_assignment/Assignment_self_assignment_K=16_size_neigh=11_s=1.0}.png}
			\\[-0.31em]

			&
			\includegraphics[width=#1\textwidth]{{./Figures/RGB/#2/self_assignment/Prototypes_EvolvedF_self_assignment_K=16_size_neigh=11_s=0.0}.pdf}
			&
			\includegraphics[width=#1\textwidth]{{./Figures/RGB/#2/self_assignment/Prototypes_EvolvedF_self_assignment_K=16_size_neigh=11_s=0.25}.pdf}
			&
			\includegraphics[width=#1\textwidth]{{./Figures/RGB/#2/self_assignment/Prototypes_EvolvedF_self_assignment_K=16_size_neigh=11_s=0.5}.pdf}
			&
			\includegraphics[width=#1\textwidth]{{./Figures/RGB/#2/self_assignment/Prototypes_EvolvedF_self_assignment_K=16_size_neigh=11_s=0.75}.pdf}
			&
			\includegraphics[width=#1\textwidth]{{./Figures/RGB/#2/self_assignment/Prototypes_EvolvedF_self_assignment_K=16_size_neigh=11_s=1.0}.pdf}
			\\[0.6em]

			\rotatebox{90}{\quad \footnotesize \boldmath$21\times 21$ }
			&
			\includegraphics[width=#1\textwidth]{{./Figures/RGB/#2/self_assignment/Assignment_self_assignment_K=16_size_neigh=21_s=0.0}.png}
			&
			\includegraphics[width=#1\textwidth]{{./Figures/RGB/#2/self_assignment/Assignment_self_assignment_K=16_size_neigh=21_s=0.25}.png}
			&
			\includegraphics[width=#1\textwidth]{{./Figures/RGB/#2/self_assignment/Assignment_self_assignment_K=16_size_neigh=21_s=0.5}.png}
			&
			\includegraphics[width=#1\textwidth]{{./Figures/RGB/#2/self_assignment/Assignment_self_assignment_K=16_size_neigh=21_s=0.75}.png}
			&
			\includegraphics[width=#1\textwidth]{{./Figures/RGB/#2/self_assignment/Assignment_self_assignment_K=16_size_neigh=21_s=1.0}.png}
			\\[-0.31em]

			&
			\includegraphics[width=#1\textwidth]{{./Figures/RGB/#2/self_assignment/Prototypes_EvolvedF_self_assignment_K=16_size_neigh=21_s=0.0}.pdf}
			&
			\includegraphics[width=#1\textwidth]{{./Figures/RGB/#2/self_assignment/Prototypes_EvolvedF_self_assignment_K=16_size_neigh=21_s=0.25}.pdf}
			&
			\includegraphics[width=#1\textwidth]{{./Figures/RGB/#2/self_assignment/Prototypes_EvolvedF_self_assignment_K=16_size_neigh=21_s=0.5}.pdf}
			&
			\includegraphics[width=#1\textwidth]{{./Figures/RGB/#2/self_assignment/Prototypes_EvolvedF_self_assignment_K=16_size_neigh=21_s=0.75}.pdf}
			&
			\includegraphics[width=#1\textwidth]{{./Figures/RGB/#2/self_assignment/Prototypes_EvolvedF_self_assignment_K=16_size_neigh=21_s=1.0}.pdf}
		\end{tabular}
\end{centering}
}

\newcommand{\showExpRGBLabelHist}[3]{
	\begin{centering}
		\begin{tabular}{c@{\hskip 0.6em}c@{\hskip 0.4em}c@{\hskip 0.4em}c@{\hskip 0.4em}c@{\hskip 0.4em}c}
			& \footnotesize \boldmath $s=0$ & \footnotesize \boldmath $s=0.25$ & \footnotesize \boldmath $s=0.50$ & \footnotesize \boldmath $s=0.75$ & \footnotesize \boldmath $s=1$ \\[0.6em]
			\rotatebox{90}{\, \footnotesize \textbf{cluster sizes}}
			&
			\includegraphics[width=#1\textwidth]{{./Figures/RGB/#2/self_assignment/Whist_Plot_self_assignment_K=16_size_neigh=#3_s=0.0}.pdf}
			&
			\includegraphics[width=#1\textwidth]{{./Figures/RGB/#2/self_assignment/Whist_Plot_self_assignment_K=16_size_neigh=#3_s=0.25}.pdf}
			&
			\includegraphics[width=#1\textwidth]{{./Figures/RGB/#2/self_assignment/Whist_Plot_self_assignment_K=16_size_neigh=#3_s=0.5}.pdf}
			&
			\includegraphics[width=#1\textwidth]{{./Figures/RGB/#2/self_assignment/Whist_Plot_self_assignment_K=16_size_neigh=#3_s=0.75}.pdf}
			&
			\includegraphics[width=#1\textwidth]{{./Figures/RGB/#2/self_assignment/Whist_Plot_self_assignment_K=16_size_neigh=#3_s=1.0}.pdf}
			\\[0.6em]

			\rotatebox{90}{\quad \  \footnotesize \textbf{entropy}}
			&
			\includegraphics[width=#1\textwidth]{{./Figures/RGB/#2/self_assignment/Entropy_self_assignment_K=16_size_neigh=#3_s=0.0}.pdf}
			&
			\includegraphics[width=#1\textwidth]{{./Figures/RGB/#2/self_assignment/Entropy_self_assignment_K=16_size_neigh=#3_s=0.25}.pdf}
			&
			\includegraphics[width=#1\textwidth]{{./Figures/RGB/#2/self_assignment/Entropy_self_assignment_K=16_size_neigh=#3_s=0.5}.pdf}
			&
			\includegraphics[width=#1\textwidth]{{./Figures/RGB/#2/self_assignment/Entropy_self_assignment_K=16_size_neigh=#3_s=0.75}.pdf}
			&
			\includegraphics[width=#1\textwidth]{{./Figures/RGB/#2/self_assignment/Entropy_self_assignment_K=16_size_neigh=#3_s=1.0}.pdf}
			\\[0.6em]

			\rotatebox{90}{\, \footnotesize \boldmath $\tr \big( B(W) \big)$}
			&
			\includegraphics[width=#1\textwidth]{{./Figures/RGB/#2/self_assignment/Tr_WCtW_Plot_self_assignment_K=16_size_neigh=#3_s=0.0}.pdf}
			&
			\includegraphics[width=#1\textwidth]{{./Figures/RGB/#2/self_assignment/Tr_WCtW_Plot_self_assignment_K=16_size_neigh=#3_s=0.25}.pdf}
			&
			\includegraphics[width=#1\textwidth]{{./Figures/RGB/#2/self_assignment/Tr_WCtW_Plot_self_assignment_K=16_size_neigh=#3_s=0.5}.pdf}
			&
			\includegraphics[width=#1\textwidth]{{./Figures/RGB/#2/self_assignment/Tr_WCtW_Plot_self_assignment_K=16_size_neigh=#3_s=0.75}.pdf}
			&
			\includegraphics[width=#1\textwidth]{{./Figures/RGB/#2/self_assignment/Tr_WCtW_Plot_self_assignment_K=16_size_neigh=#3_s=1.0}.pdf}

		\end{tabular}
\end{centering}
}

\newcommand{\showExpGraph}[1]{
	\begin{centering}
		\begin{tabular}{c@{\hskip 0.4em}c}

			\footnotesize \textbf{Weighted Graph} & \footnotesize \textbf{Ground Truth} \\[0.2em]
			\includegraphics[width=#1\textwidth]{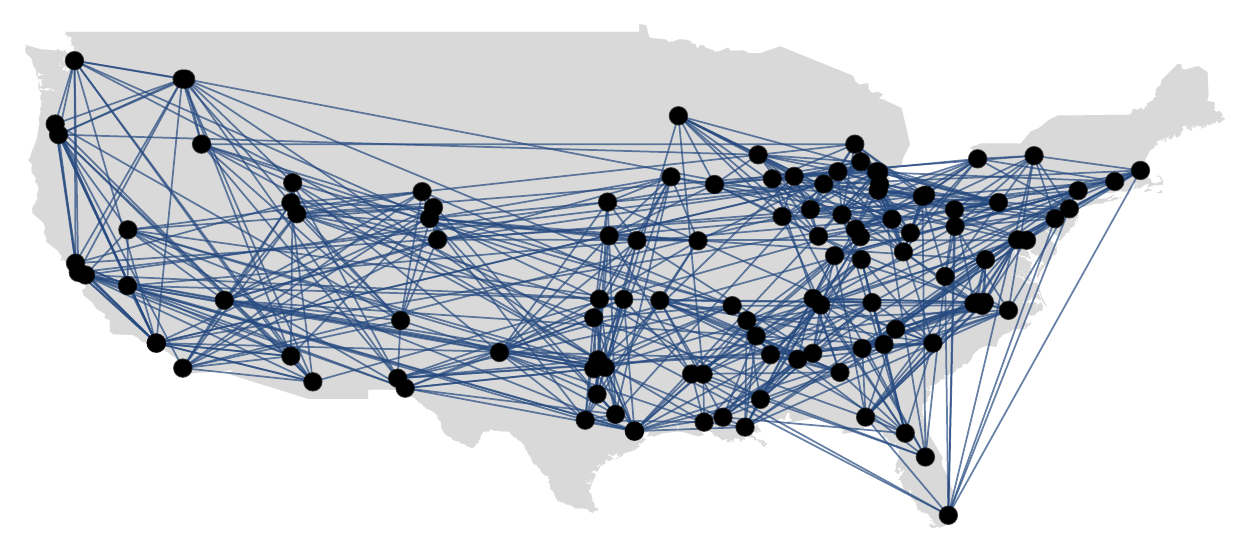}
			&
			\includegraphics[width=#1\textwidth]{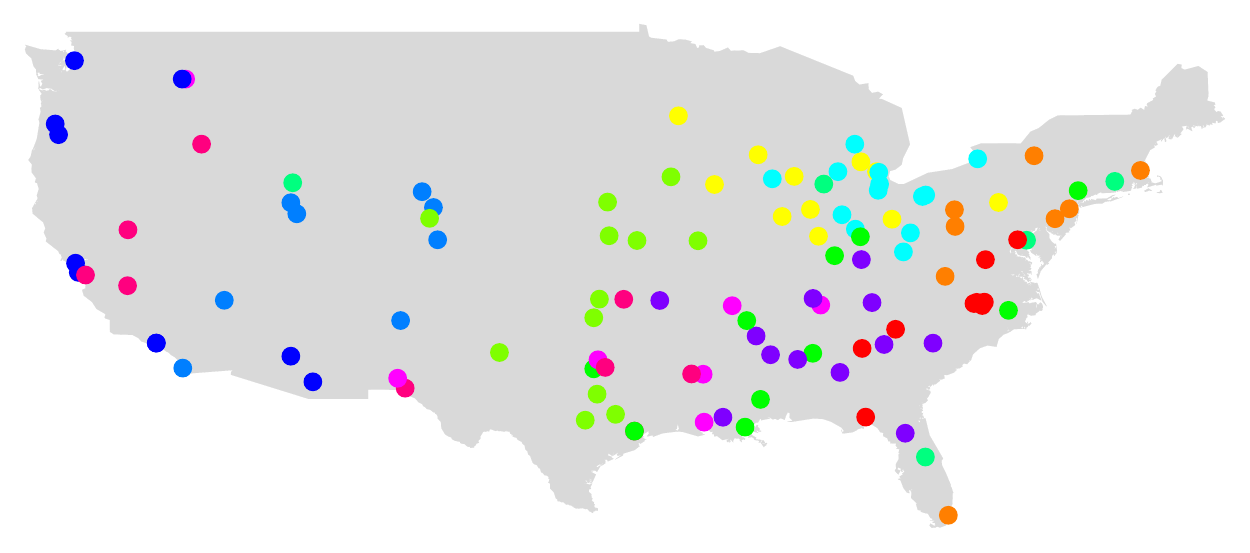}
			\\[0.8em]

			\footnotesize \textbf{Initialization} & \footnotesize \textbf{Spectral Clustering \cite{Shi2000}}\\[0.2em]
			\includegraphics[width=#1\textwidth]{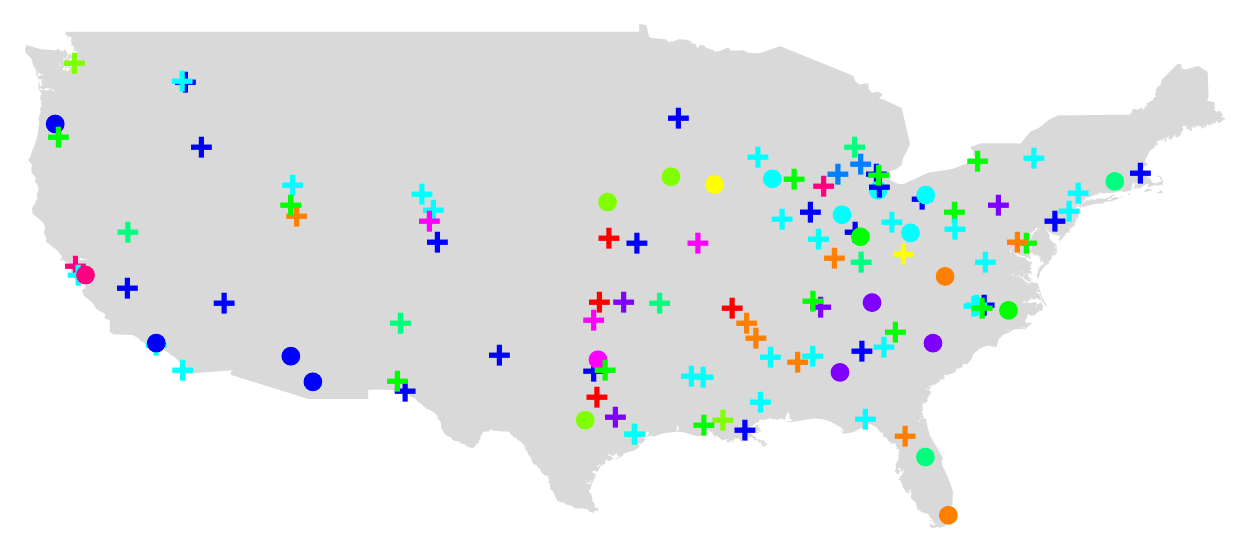}
			&
			\includegraphics[width=#1\textwidth]{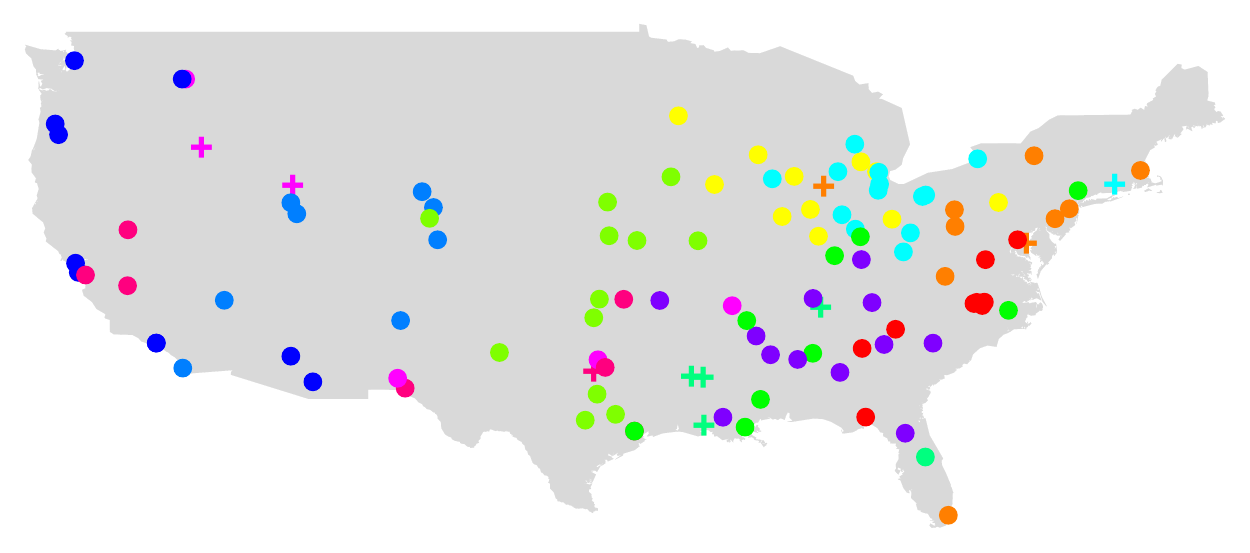}
			\\[0.8em]

			\footnotesize \textbf{SAF, } \boldmath $s=0$ & \footnotesize \textbf{SAF, } \boldmath $s=1$ \\[0.2em]
			\includegraphics[width=#1\textwidth]{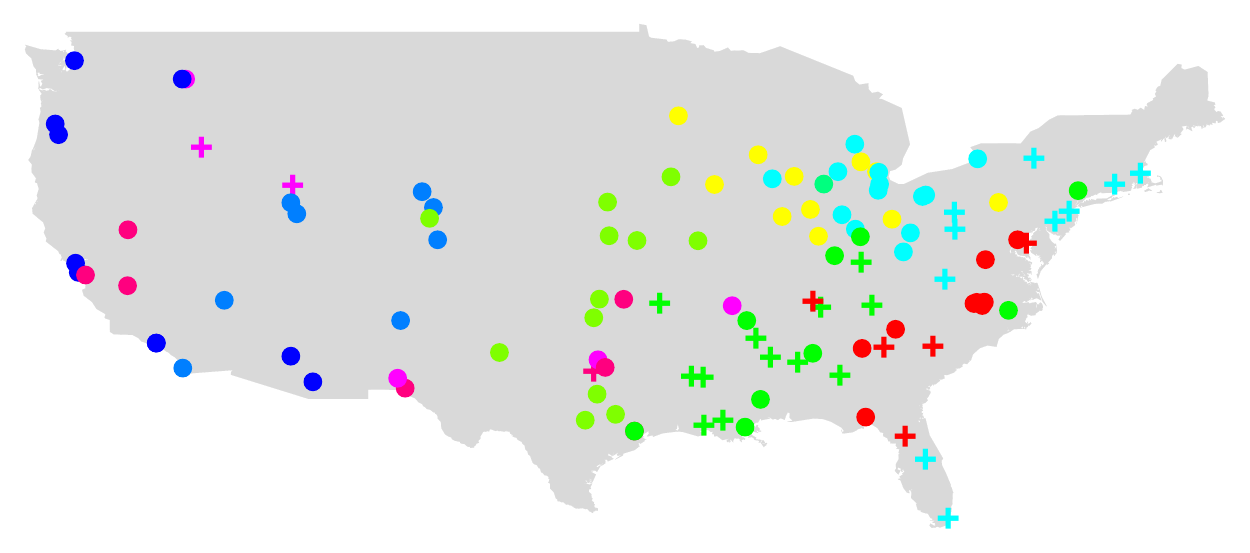}
			&
			\includegraphics[width=#1\textwidth]{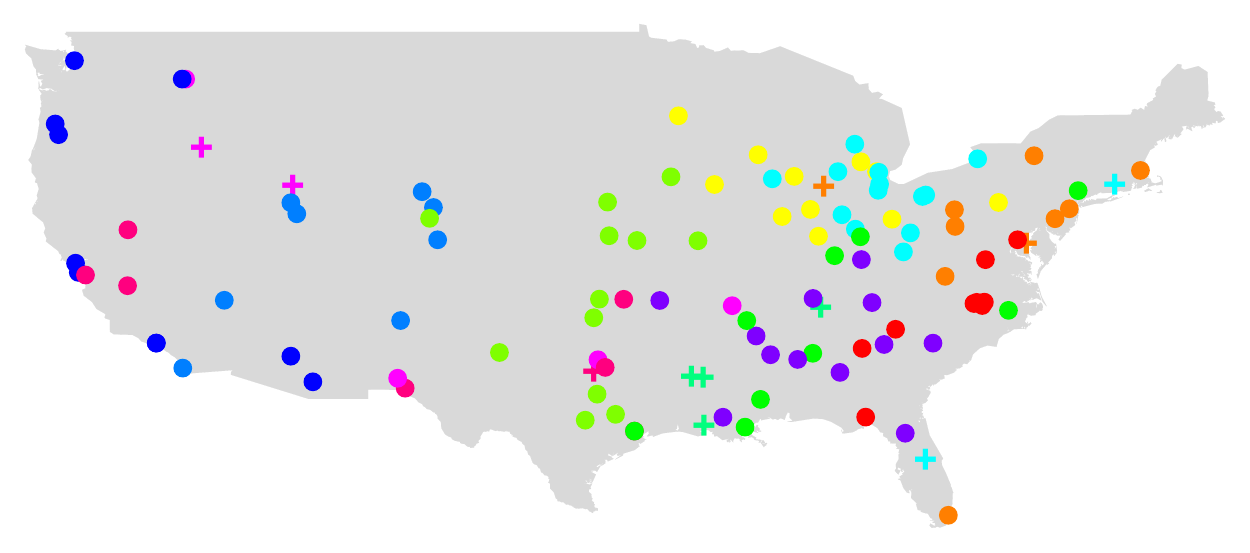}

		\end{tabular}
\end{centering}
}

\newcommand{\showExpInput}[2]{
	\begin{centering}
		\begin{tabular}{c@{\hskip 0.4em}c@{\hskip 0.4em}c}
			\footnotesize \textbf{Seastar} & \footnotesize \textbf{Fingerprint} & \footnotesize \textbf{Cactus}  \\[0.6em]
			\includegraphics[height=#1\textheight]{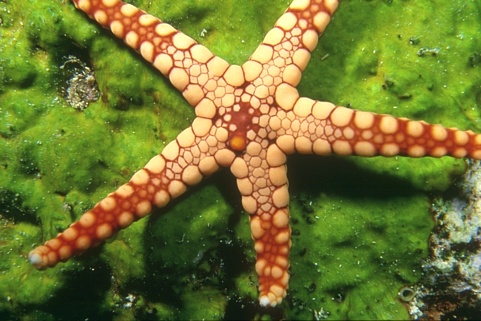}
			&
			\includegraphics[height=#1\textheight]{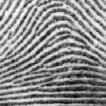}
			&
			\includegraphics[height=#2\textheight]{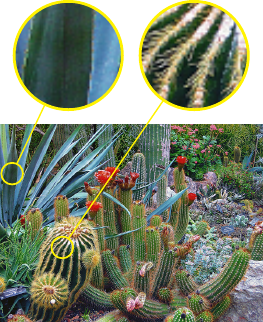}
		\end{tabular}
	\end{centering}
}

\newcommand{\showPatchDist}[1]{
	\begin{centering}
		\begin{tabular}{c@{\hskip 0.4em}c@{\hskip 0.4em}c@{\hskip 0.4em}c@{\hskip 0.4em}c@{\hskip 0.6em}c}

			& \footnotesize \textbf{Patch \qquad \qquad \, Image \, \quad} & \footnotesize \textbf{Distance} (a) & \footnotesize \textbf{Distance} (b) & \footnotesize \textbf{Distance} (Sym) &  \\
			& \footnotesize $ P_{k} $ \hspace{2.8cm} & \footnotesize $d_{\mc{F}} \big(\mc{P}(\mc{F}_{n}), P_{k} \big)$ & \footnotesize $d_{\mc{F}} \big(P_{k}, \mc{P}(\mc{F}_{n}) \big)$ & \footnotesize $d^{\textnormal{sym}}_{\mc{F}}\big(\mc{P}(\mc{F}_{n}), P_{k} \big)$ & \\[0.4em]

			&
			\includegraphics[height=#1\textheight]{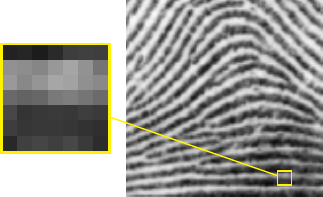}
			&
			\includegraphics[height=#1\textheight]{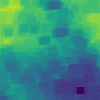}
			&
			\includegraphics[height=#1\textheight]{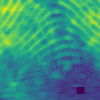}
			&
			\includegraphics[height=#1\textheight]{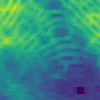}
			&
			\includegraphics[height=#1\textheight]{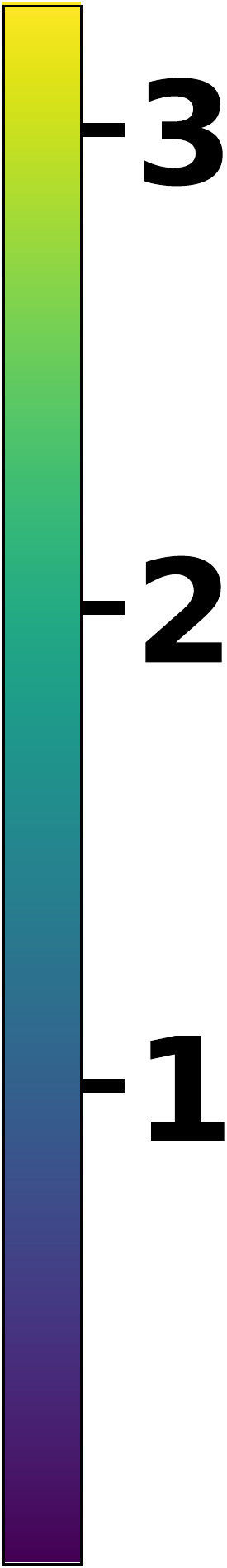}
		\end{tabular}
	\end{centering}
}

\newcommand{\showPatchFingerprintVisu}[2]{
	\begin{centering}
		\begin{tabular}{c@{\hskip 0.6em}c@{\hskip 0.4em}c@{\hskip 0.4em}c@{\hskip 0.4em}c}

			& \footnotesize \textbf{Partition} & \footnotesize \textbf{Overlay} & \footnotesize \textbf{Assignment} & \footnotesize \textbf{Difference} \\[0.4em]

			&
			\includegraphics[height=#1\textheight]{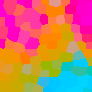}
			&
			\includegraphics[height=#1\textheight]{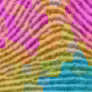}
			&
			\includegraphics[height=#1\textheight]{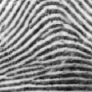}
			&
			\includegraphics[height=#1\textheight]{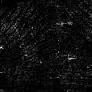}
			\\[0.6em]

			\rotatebox{90}{\qquad \quad \; \footnotesize \textbf{patches} \boldmath $\, \mc{F}_{\ast}$}
			&
			\multicolumn{4}{l}
			{
				\includegraphics[width=#2\textwidth]{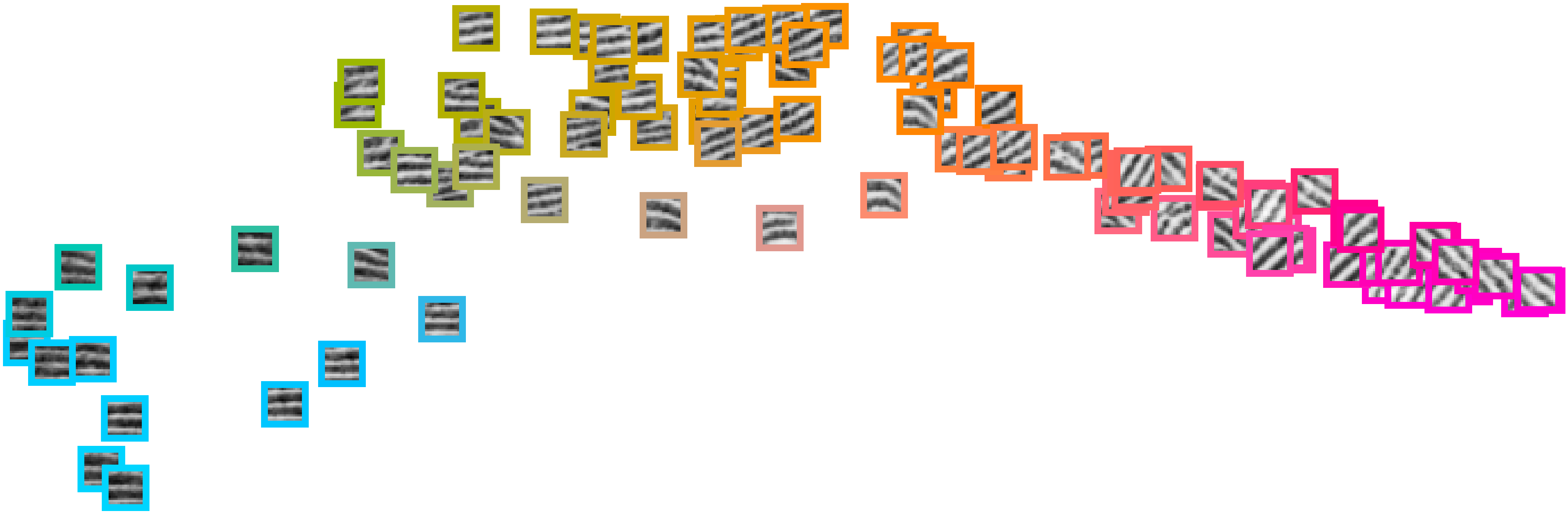}
			}

		\end{tabular}
	\end{centering}
}

\newcommand{\showNoisyCells}[2]{
	\begin{centering}
		\begin{tabular}{c@{\hskip 0.6em}c@{\hskip 0.6em}c@{\hskip 0.6em}c}

			 \footnotesize \textbf{Noisy Input} & \footnotesize \textbf{Assignment Flow} & \footnotesize \textbf{Self-Assignment Flow} & \footnotesize \textbf{Ground Truth} \\[-0.2em]
			 & \footnotesize {(Supervised)} & \footnotesize {(Unsupervised)} & \\[0.6em]

			\includegraphics[height=#1\textheight]{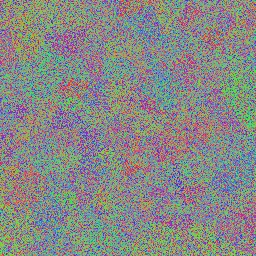}
			&
			\includegraphics[height=#1\textheight]{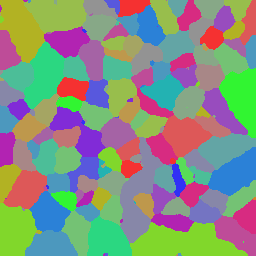}
			&
			\includegraphics[height=#1\textheight]{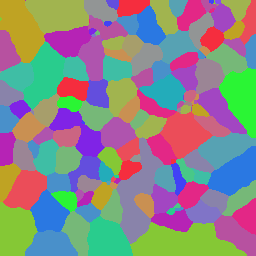}
			&
			\includegraphics[height=#1\textheight]{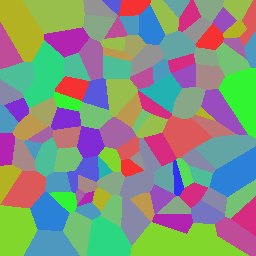}
			\\[0.6em]

			\multicolumn{1}{r}{\footnotesize \textbf{Learned Prototypes}} 
			&
			\multicolumn{3}{@{}l}
			{
				\includegraphics[width=#2\textwidth]{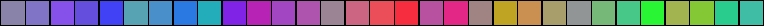}
			}\\[0.4em]

			\multicolumn{1}{r}{\footnotesize \textbf{Ground Truth}} 
			&
			\multicolumn{3}{@{}l}
			{
				\includegraphics[width=#2\textwidth]{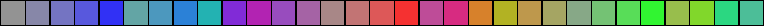}
			}
		\end{tabular}
	\end{centering}
}

\newcommand{\showAFDiagram}[1]{
\begin{centering}
  \begin{tikzpicture}[scale=#1]
  \clip (-1.6,-1) rectangle (7,3.5);
  \draw[ultra thick,loosely dotted] (2.5,3) -- (2.5,2.35);
  \draw[ultra thick,loosely dotted] (2.5,1.9) -- (2.5,-0.95);
  \node[rectangle, rounded corners] (L) at (4,2) {$L(W)$};
  \node[rectangle, rounded corners] (S) at (5,1) {$S(W)$};
  \node[rectangle, rounded corners] (W) at (3,1) {$W(t)$};
  \node[rectangle, rounded corners] (dotW) at (4,0) {$\dot{W} = R_W S(W)$};
  \node[rectangle, rounded corners] (D) at (1,2) {$D_{\mc{F}}$};
  \node[rectangle, rounded corners] (F) at (1,0) {$\mc{F}_{n}$};
  \node[rectangle, rounded corners] (G) at (0,1) {$\mc{F}_{\ast}$};
  \node (textD) at (1,2.75) [text width=5em,align=center, font=\linespread{0.8}\selectfont] {distance matrix};
  \node (textL) at (4,2.75) [text width=5em,align=center, font=\linespread{0.8}\selectfont] {likelihood matrix};
  \node (textS) at (6,1) [text width=5em,align=center, font=\linespread{0.8}\selectfont] {similarity matrix};
  \node (textG) at (-1,1) [text width=5em,align=right] {prototypes};
  \node (textF) at (0,0) [text width=5em,align=right] {data};
  \node (textM) at (5.75,2.25) [text width=5em,align=center, font=\linespread{0.8}\selectfont] {geometric averaging};
  \node (textFlow) at (6,0) {assignment flow};
  \node[rectangle, rounded corners] (feature) at (0.75,-0.75) {\large feature space $\mathcal{F}$};
  \node[rectangle, rounded corners] (assignment) at (4.5,-0.75) {\large assignment manifold $\mathcal{W}$};
  \path (G) edge[->,thick] (D);
  \path (F) edge[->,thick] (D);
  \path (D) edge[->,thick] node[anchor=south] (exp) {$\exp_W$} (L);
  \path (L) edge[->,thick] node[anchor=south west] (mean) {$\operatorname{mean}_{\mathcal{S}}$} (S);
  \path (S) edge[->,thick] (dotW);
  \path (dotW) edge[->,thick] (W);
  \path (W) edge[->,thick] (L);
  \end{tikzpicture}
\end{centering}
}

\newcommand{\showSAFDiagram}[1]{
\begin{centering}
  \begin{tikzpicture}[scale=#1]
  \clip (-1.6,-1) rectangle (7,3.5);
  \draw[ultra thick,loosely dotted] (2.5,3) -- (2.5,2.35);
  \draw[ultra thick,loosely dotted] (2.5,1.9) -- (2.5,1.4);
  \draw[ultra thick,loosely dotted] (2.5,1.1) -- (2.5,-0.95);
  \node[rectangle, rounded corners] (L) at (4,2) {$L(W)$};
  \node[rectangle, rounded corners] (S) at (5,1) {$S(W)$};
  \node[rectangle, rounded corners] (W) at (3,1) {$W(t)$};
  \node[rectangle, rounded corners] (dotW) at (4,0) {$\dot{W} = R_W S(W)$};
  \node[rectangle, rounded corners] (D) at (1,2) {$\big\la K_{\mc{F}}, A_{s}(W)\big\ra$};
  \node[rectangle, rounded corners] (F) at (1,0) {$\mc{F}_{n}$};
  \node[rectangle, rounded corners] (G) at (0,1) {$\mc{F}_{n}$};
  \node (textD) at (1,2.75) [text width=5em,align=center, font=\linespread{0.8}\selectfont] {self-assignment};
  \node (textL) at (4,2.75) [text width=5em,align=center, font=\linespread{0.8}\selectfont] {likelihood matrix};
  \node (textS) at (6,1) [text width=5em,align=center, font=\linespread{0.8}\selectfont] {similarity matrix};
  \node (textG) at (-0.75,1) [text width=5em,align=center, font=\linespread{0.8}\selectfont] {copy of data};
  \node (textF) at (0,0) [text width=5em,align=right] {data};
  \node (textM) at (5.75,2.25) [text width=5em,align=center, font=\linespread{0.8}\selectfont] {geometric averaging};
  \node (textFlow) at (6,0) {assignment flow};
  \node[rectangle, rounded corners] (feature) at (0.75,-0.75) {\large feature space $\mathcal{F}$};
  \node[rectangle, rounded corners] (assignment) at (4.5,-0.75) {\large assignment manifold $\mathcal{W}$};
  \path (G) edge[->,thick] (D);
  \path (F) edge[->,thick] (D);
  \path (W) edge[->,thick] (D);
  \path (D) edge[->,thick] node[anchor=south] (exp) {$\exp_{W} \circ \ \partial$} (L);
  \path (L) edge[->,thick] node[anchor=south west] (mean) {$\operatorname{mean}_{\mathcal{S}}$} (S);
  \path (S) edge[->,thick] (dotW);
  \path (dotW) edge[->,thick] (W);
  \path (W) edge[->,thick] (L);
  \end{tikzpicture}
\end{centering}
}

\newcommand{\showNewPatchExpLine}[2]{
        \includegraphics[width=#1\textwidth]{{./Figures/#2}.png}
        &
        \includegraphics[width=#1\textwidth]{{./Figures/Patches/#2/self_assignment/Assignment_self_assignment_K=20_s=1.0_p=7}.png}
        &
        \includegraphics[width=#1\textwidth]{{./Figures/Patches/#2/self_assignment/Assignment_self_assignment_K=60_s=1.0_p=7}.png}
        &
        \includegraphics[width=#1\textwidth]{{./Figures/Patches/#2/self_assignment/Assignment_self_assignment_K=100_s=1.0_p=7}.png}
}

\newcommand{\showNewPatchExp}[1]{
	\begin{centering}
		\begin{tabular}{c@{\hskip 0.4em}c@{\hskip 0.4em}c@{\hskip 0.4em}c}

			\footnotesize \textbf{Input} & \footnotesize \boldmath $c=20$ & \footnotesize \boldmath $c=60$ & \footnotesize \boldmath $c=100$ \\[0.6em]

            \showNewPatchExpLine{#1}{225022} \\[0.6em]

            \showNewPatchExpLine{#1}{24004} \\[0.6em]



            \showNewPatchExpLine{#1}{asian} \\[0.6em]

            \showNewPatchExpLine{#1}{aerial} \\[0.6em]

            \showNewPatchExpLine{#1}{object0024.view01}
		\end{tabular}
	\end{centering}
}


\title[Self-Assignment Flows]{Self-Assignment Flows \\for Unsupervised Data Labeling on Graphs}
\author[M.~Zisler, A.~Zern, S.~Petra, C.~Schn\"orr]{Matthias Zisler, Artjom Zern, Stefania Petra, Christoph Schn\"orr}
\address[M.~Zisler]{Image and Pattern Analysis Group, Heidelberg University, Germany}
\email{zisler@math.uni-heidelberg.de}
\address[A.~Zern]{Image and Pattern Analysis Group, Heidelberg University, Germany}
\email{artjom.zern@iwr.uni-heidelberg.de}
\address[S.~Petra]{Mathematical Imaging Group, Heidelberg University, Germany}
\email{petra@math.uni-heidelberg.de}
\urladdr{\url{https://www.stpetra.com}}
\address[C.~Schn\"{o}rr]{Image and Pattern Analysis Group, Heidelberg University, Germany}
\email{schnoerr@math.uni-heidelberg.de}
\urladdr{\url{https://ipa.math.uni-heidelberg.de}}
\date{}


\keywords{unsupervised learning, dynamical systems, graph partitioning, image labeling, assignment manifold, spatially regularized clustering, information geometry, replicator equation, evolutionary game dynamics.}


\begin{document}

\begin{abstract}
This paper extends the recently introduced assignment flow approach for supervised image labeling to unsupervised scenarios where no labels are given. The resulting self-assignment flow takes a pairwise data affinity matrix as input data and maximizes the correlation with a low-rank matrix that is parametrized by the variables of the assignment flow,  which entails an assignment of the data to themselves  through the formation of latent labels (feature prototypes). A single user parameter, the neighborhood size for the geometric regularization of assignments, drives the entire process. By smooth geodesic interpolation between different  normalizations of self-assignment matrices on the positive definite matrix manifold, a one-parameter family of self-assignment flows is defined. Accordingly, our approach can be characterized from different viewpoints, e.g.~as performing spatially regularized, rank-constrained  discrete optimal transport, or as computing spatially regularized normalized spectral cuts. Regarding combinatorial optimization, our approach successfully determines  completely positive factorizations of self-assignments in large-scale scenarios, subject to spatial regularization. Various experiments including the unsupervised learning of patch dictionaries using a locally invariant distance function, illustrate the properties of the approach.
\end{abstract}

\maketitle
\tableofcontents


\section{Introduction}
\subsection{Overview, contribution.}
Assignment flows \cite{Astrom:2017ac} correspond to a smooth dynamical system for contextual data labeling (classification) on an arbitrary given graph. The basic \textit{supervised} setting assumes a set of prototypes to be given, that are assigned to the data by numerically computing the flow. `Contextual' means that decisions within local neighborhoods affect each other and are taken into account.

Assignment flows are defined using information geometry \cite{Lauritzen:1987aa,Amari:2000aa}. An elementary statistical manifold provides both a target space for data embedding and a state space on which the assignment flow evolves. Corresponding vector fields are parametrized and thus enable to learn the adaptivity of contextual label assignments, rather than parameters of a fixed regularizer as with traditional graphical models of variational approaches to inverse problems. Modular compositional design facilitates extensions beyond the basic supervised scenario, including those investigated in the present paper. Smoothness enables the design of efficient algorithms using geometric integration \cite{Zeilmann:2018aa}. The assignment flow for supervised labeling is specified in \cref{sec:assignments} and sketched by \cref{fig:AF-supervised}. Well-posedness and stability of the assignment flow was established by \cite{Zern:2020aa} under suitable conditions, that we assume to hold throughout in this paper. For further discussion and a review of our recent work in a broader context, we refer to \cite{Schnorr:2019aa}.

\begin{figure}
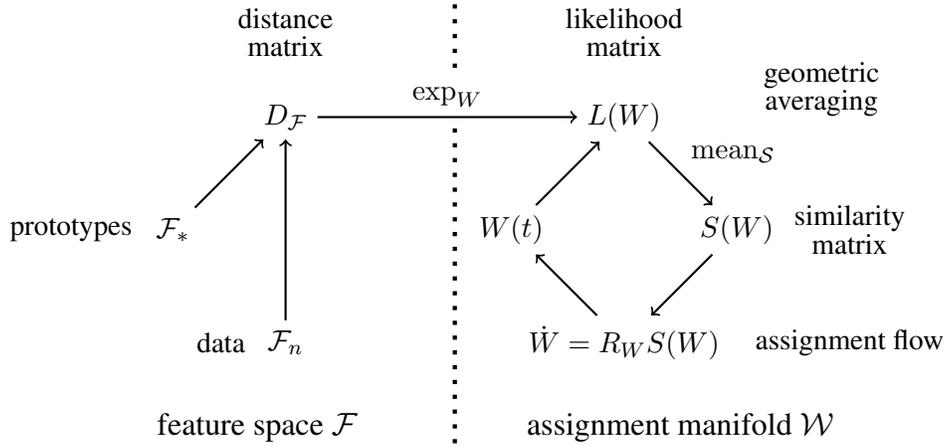

  \begin{center}
  \showAFDiagram{1.5}
  \end{center}
\caption{\textbf{Assignment flow for supervised data labeling.} Features (observed data) and representative features (labels) in a metric space represent the application domain on the left-hand side. The process of inference of label assignments to data is sketched on the right-hand side.
A vector field of distances $D_{\mc{F}} = \{D_{\mc{F};i}\}_{i \in \mc{I}}$ between a feature observed at vertex $i \in \mc{I}$ of the underlying graph and all labels constitutes the input data. Assignments of labels to data are represented by a field of assignment vectors $W = \{W_{i}\}_{i \in \mc{I}}$, regarded as a point on the assignment manifold. Local assignments $L(W)$, in terms of data $D$ that are lifted to the assignment manifold, are regularized by geometric averaging within local neighborhoods. The regularized assignments $S(W)$ parametrize a vector field $R_{W} S(W)$ that generates the flow $W(t)$ on the assignment manifold. The depicted dependencies on $W(t)$ are resolved by geometric numerical integration of the assignment flow until $W(t)$ reaches an integral (unambiguous) label assignment.}
\label{fig:AF-supervised}
\end{figure}
\begin{figure}
  \begin{center}
  \showSAFDiagram{1.5}
  \end{center}
\caption{\textbf{Self-assignment flows for unsupervised data labeling.} Representative features (labels) of supervised scenarios -- cf. \cref{fig:AF-supervised} -- are not available and replaced by a copy of given data. Applying any pairwise similarity measure yields the matrix $K_{\mc{F}}$. The gradient of the inner product of $K_{\mc{F}}$ with the self-assignment matrix $A_{s}(W)$ replaces the distance matrix $D_{\mc{F}}$ of the supervised case. The remaining ingredients of the assignment flow approach remain unchanged. Geometric integration of the self-assignment flow partitions the underlying graph and generates a distribution that defines for each component a label (class representative) as weighted Riemannian mean of the corresponding data.}
\label{fig:SAF}
\end{figure}

The availability of prototypes as class representatives is a strong requirement in practice. In many applications either prototypes are not available or it is not clear what prototypes represent the classes properly. A basic remedy is to cluster the data in a preprocessing step. However, the clustering step then does not take into account the framework in which the resulting prototypes are subsequently used for classification. In our recent work \cite{Zern:2019aa}, we took a step towards a more natural approach: the assignment flow for supervised classification was extended so as to enable the \textit{adaption} of prespecified prototypes. While this adaption is based on the \textit{same} framework that is used for subsequent contextual classification, some initial prototypes still have to be given.

In this paper, we adopt a \textit{completely unsupervised} scenario where no prototypes are given at all, cf. \cref{fig:SAF}. Data are merely given in terms of pairwise distances or affinity values forming a distance or affinity matrix. This includes the basic scenarios of pattern recognition and machine learning: distances between Euclidean feature vectors, Riemannian distances between manifold-valued features, and kernel matrices after embedding given feature vectors into reproducing kernel Hilbert space (RKHS) \cite{Hofmann:2008aa}. Our approach utilizes various relaxations of a graph partitioning problem that naturally arises when the missing prototypes of the supervised setting are removed and replaced by a copy of the given data, from which prototypes have to be learned from scratch. The relaxations involve variants of corresponding self-assignment matrices that are parametrized by the assignment flow. A key parameter is the \textit{scale} of the supervised assignment flow in terms of the size of local neighborhoods where evolving assignments driven by the flow affect each other. This parameter determines how fine or coarse the resulting partition is, and how many corresponding prototypes can be recovered under the additional assumption: The metric feature space $\mc{F}$ actually is a Riemannian manifold and determining weighted Riemannian means is computationally feasible.

A key property of our approach is that \textit{no bias} affects the emergence of these prototypes, and that the \textit{very same} framework is used for \textit{both learning} these prototypes \textit{and} subsequent contextual data \textit{labeling} (classification). In addition, as a comparison of \cref{fig:AF-supervised} and \cref{fig:SAF} shows, a \textit{single} component of the supervised assignment flow has only to be modified in order to extend this approach to the completely unsupervised setting. In particular, geometric schemes for numerically integrating the assignment flow \cite{Zeilmann:2018aa} still apply. \Cref{fig:31-colors} illustrates the application of our approach to a scenario adopted from \cite[Figure 6]{Astrom:2017ac}.

\begin{figure}
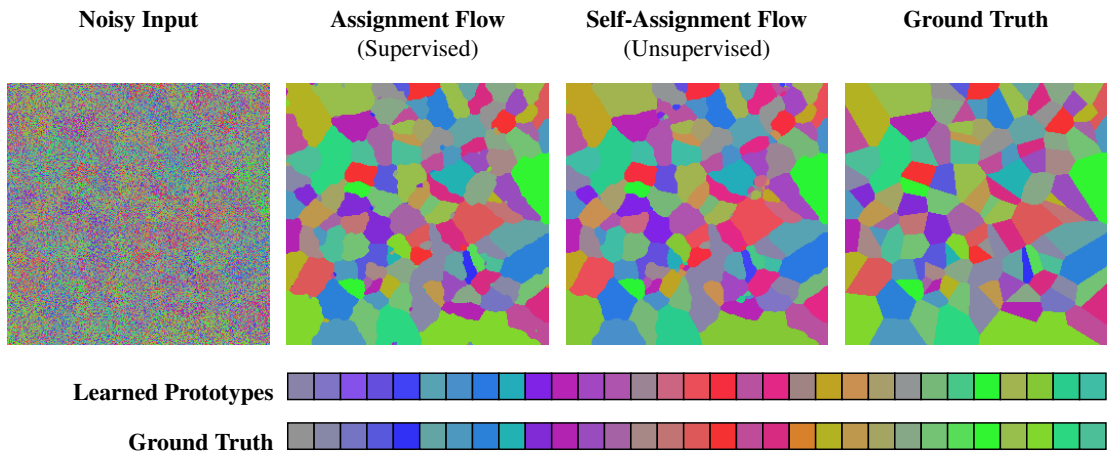

	\begin{center}
		\showNoisyCells{0.16}{0.699}
	\end{center}
\caption{\textbf{Supervised label assignment vs.~labeling through unsupervised self-assignment.} \textsc{left:} Noisy input data. \textsc{center left:} Supervised labeling using the assignment flow approach of \cite{Astrom:2017ac} and assuming the ground truth labels to be known. \textsc{center right:} Application of the approach presented in this paper. Assuming no labels to be given, solving for the self-assignment flow both partitions the input data and determines prototypes (labels) for each component. Since averaging out the strong noise of the input data during the evolution of self-assignments reduces the range of the emerging label vector components, they are depicted after rescaling them to the original color range, to enable better visual comparison to ground truth.}
\label{fig:31-colors}
\end{figure}

\subsection{Related work}
The literature on clustering and unsupervised learning is vast. No attempt is made to review it here. We confine ourselves in \cref{sec:Related-Work} to elucidating common and different aspects of our approach from three different viewpoints that have become prominent in the literature: (i) spectral relaxation and clustering using normalized graph Laplacians \cite{Shi2000,Luxburg:2007aa}; (ii) regularized transport of discrete probability measures \cite{Burkhard:1999aa,Peyre:2018aa}; (iii) matrix factorization and aspects of combinatorial optimization \cite{Rendl:1995aa,Zass:2005aa,Kuang2015,Yang2016}. From each viewpoint, our approach can be characterized as combining tight relaxation of graph partitioning, geometric spatial regularization of assignments, and geometric numerical integration in a mathematically novel way. The present paper considerably elaborates the conference version \cite{Zisler:2019aa}.

\subsection{Organization} We introduce basic notation and collect background in \cref{sec:Preliminaries}, including the supervised assignment flow as basic framework. \Cref{sec:selfAssignments} shows how the graph partitioning problem and various relaxations emerge within this framework, after replacing the prototypes by the data and assigning them to themselves. We highlight differences between two major relaxations in terms of two specific instances of the one-parameter family of self-assignment matrices (\cref{def:SelfAffinityMatrix,def:SelfInfluenceMatrix,def:SelfAssignmentMatrix}) and show how latent prototypes emerge as the assignment flow evolves. Informally speaking, these relaxations differ with regard to the sensitivity of the approach to the spatial structure and to the values of given data, respectively. See \cref{fig:Seastar}, first column ($s=0$) and last column ($s=1$), for illustrative numerical results and \cref{sec:Comparison-A-0-1} for a comparison from the mathematical point of view. After terminating the self-assignment flow at some labeling, these prototypes can be recovered explicitly under an additional assumption: weighted averaging in feature space has to be well-defined and computationally feasible (\cref{sec:Recovery-Prototypes}). A family of self-assignment flows, based on the relaxations of \cref{sec:selfAssignments}, is defined in \cref{sec:selfAssignmentSection}. It is shown that the latent prototypes minimize within-class variation and maximize cluster separability simultaneously. In this sense, the self-assignment flow consistently performs \textit{self-supervision}. Related work is discussed in \cref{sec:Related-Work}.

The approach is illustrated in \cref{sec:Experiments} using various basic examples of image analysis and more advanced examples, including  unsupervised and locally invariant patch learning, assignment, and transfer to novel data. In order to highlight the broad applicability of our approach, an experiment using weighted graph data is included, too.

\section{Preliminaries}\label{sec:Preliminaries}

We collect in this section material required in the remainder of this paper. We briefly mention why and where these concepts will be used at the beginning of each subsection.

\subsection{Basic Notation}
We set $[n]=\{1,2,\dotsc,n\}$ for $n \in \N$ and $\eins_{n} = (1,1,\dotsc,1)^{\T} \in \R^{n}$. The cardinality of a finite set $S$ is denoted by $|S|$. The following spaces of matrices will be used.
\begin{itemize}
 \item $\mb{S}^{n}$: symmetric $n\times n$ matrices
 \item $\mb{S}^{n}_{+}$: symmetric nonnegative $n\times n$ matrices
 \item $\mb{R}_{+}^{n \times c}$: nonnegative $n\times c$ matrices
 \item $\mc{P}^{n}$: symmetric positive definite $n\times n$ matrices
\end{itemize}
$\|\cdot\|$ denotes the Euclidean norm and the Frobenius norm for vectors and matrices, respectively. All other norms will be indicated by a corresponding subscript. For a matrix $A \in \R^{n \times c}$, $A_{i},\,i \in [n]$ denote the row vectors and $A^{j},\,j \in [c]$ denote the column vectors, $A^{\T} \in \R^{c \times n}$ the transpose and $A^{\dagger}$ the Moore-Penrose generalized inverse of $A$. $\tr(A)=\sum_{i \in [n]}A_{i,i}$ denotes the trace of a square matrix $A \in \R^{n \times n}$.
\begin{equation}\label{eq:def-simplex}
\Delta_{n} = \{p \in \R_{+}^{n} \colon \la\eins_{n},p\ra=1\}
\end{equation}
denotes the probability simplex. The orthogonal projection onto a closed convex set $C$ is denoted by $\Pi_{C}$. For a differentiable function $E \colon \R^{n} \to \R$, its ordinary gradient is denoted by $\partial E = (\partial_{1}E,\dotsc,\partial_{n}E)^{\T}$.

For strictly positive vectors $p > 0$, we efficiently denote componentwise subdivision by $\frac{v}{p}$. Likewise, we set $p v = (p_{1} v_{1},\dotsc, p_{n} v_{n})^{\T}$. The exponential \textit{function} applies componentwise to vectors (and similarly for $\log$) and will always be denoted by $e^{v}=(e^{v_{1}},\dotsc,e^{v_{n}})^{\T}$, in order not to confuse it with the exponential \textit{maps} \eqref{eq:schnoerr-eq:exp-maps}.

\subsection{Scatter Matrices}
\label{sec:scatter-matrices}
A major objective of this paper is to determine prototypical features (labels) $f_{j}^{\ast} \in \mc{F},\,j \in \mc{J}$ as weighted Riemannian means in an \textit{unsupervised} manner, as a function of the self-assignment flow $W(t)$ (\cref{sec:Recovery-Prototypes}). Thus, these prototypes may be considered as latent variables that emerge and evolve as a function of the self-assignment flow.

In order to substantiate this approach, we adopt in \cref{sec:Self-Supervision} the simplest case of a Euclidean feature space $\mc{F}$ where the prototypes can be determined in closed form. Then we show that the prototypes are \textit{consistently} determined -- we call this property: \textit{self-supervision} -- as if the partition and class assignments of the data were known beforehand, like in \textit{supervised} scenarios.

The following basic concepts of statistical pattern recognition \cite{Devyver:1982aa} will be used in \cref{sec:Self-Supervision}.
Let $\mc{F}_{n} = \{f_{i} \in \R^{d},\, i \in \mc{I}\}$
denote given data in terms of feature vectors in a Euclidean space. Suppose these data are classified corresponding to a partition $\mc{I} = \dotcup_{j \in [c]} \mc{I}_{j},\, j \in \mc{J}$, that is datum $f_{i}$ belongs to class $j$ iff $i \in \mc{I}_{j}$. Then the variation of these data in terms of their first- and second-order empirical moments can be decomposed and represented by the \textit{scatter matrices} defined in \cref{app:scatter-matrices}, that satisfy the equation

\begin{equation}\label{eq:St-decomposition}
S_{t} = S_{w} + S_{b}.
\end{equation}
In \textit{supervised} scenarios the class-label assignments $i \in \mc{I}_{j}$ are \textit{known} and hence the decomposition \eqref{eq:St-decomposition} can be computed. Assuming $S_{w}$ has full rank, a basic objective for dimension reduction by extracting lower-dimensional features from the data $\mc{F}_{n}$ is given by the \textit{class-separability measure}
\begin{equation}\label{eq:Sw-1-Sb}
\tr(S_{w}^{-1} S_{b}).
\end{equation}
Defining the features by $Y^{\T} f_{i},\, i \in \mc{I}$, for some matrix $Y \in \R^{d \times c}$ to be determined, transforms \eqref{eq:Sw-1-Sb} to $\tr((Y^{\T} S_{w}Y)^{-1} Y^{\T} S_{b} Y)$. Maximizing this objective with respect to $Y$ \textit{simultaneously} maximizes the between-class variation and minimizes the within-class variation \cite{Devyver:1982aa}.

Our viewpoint in this paper differs, however. Since we assume \textit{unlabelled} (unclassified) data, the decomposition \eqref{eq:St-decomposition} is \textit{unknown}: The class separability measure depends on the prototypes $f_{j}^{\ast},\, j \in \mc{J}$ which in turn depend on the assignments of class labels to data, that are determined by the self-assignment flow $W(t)$ (cf. \cref{fig:AF-supervised}).

Accordingly, we are interested in the quality of the \textit{latent prototypes} in terms of \eqref{eq:Sw-1-Sb}, as a function of the self-assignment flow -- see \cref{sec:Self-Supervision}.

\subsection{Sketching Large Affinity Matrices}\label{sec:sketching}
In order to cope with large-scale scenarios, we will have to compress large symmetric and positive semi-definite matrices $K \in \mb{S}^{n}$. The problem is to obtain a computationally feasible approximation of the best rank-$\ell$ approximation
\begin{equation}\label{eq:def-K-ell}
K_{\ell} = U_{1} D_{\ell}(K) U_{1}^{\T},\qquad
\ell\ll n,
\end{equation}
where $D_{\ell}$ and $U_{1} \in \R^{n\times \ell}$ contain the dominant eigenvalues and eigenvectors of the spectral decomposition $K = U D(K) U^{\T}$. Computing \eqref{eq:def-K-ell} directly for large $n$ using the Singular Value Decomposition (SVD) is too expensive.
Computationally feasible approximations \cite{Gittens:2016aa} result in the \textit{compressed matrix}
\begin{subequations}\label{eq:K-compressed}
\begin{gather}\label{eq:K-compressed-a}
\widehat K_{\ell} = C A^{\dagger} C^{\T}
\intertext{
that is parametrized by a \textit{sketching matrix} $S \in \R^{n \times \ell}$ with
}\label{eq:K-compressed-b}
C = K^{q} S,\qquad
A = S^{\T} K^{2 q-1} S,\qquad q \in \N
\end{gather}
\end{subequations}
and hence has rank at most $\ell$. $A^{\dagger}$ is the Moore-Penrose generalized inverse of $A$ and $q \in \{1,2,3\}$ is a small integer in practice. Choosing $q > 1$ is more expensive due to the multiplication of the large matrix $K$ of \eqref{eq:K-compressed-b} but yields in theory a better approximation of \eqref{eq:def-K-ell} by \eqref{eq:K-compressed-a} with respect to the spectral norm.

In this paper, we confine ourselves to the following computationally cheap version of this method for computing \eqref{eq:K-compressed-a}, based on \textit{uniform sampling} of $\ell$ columns directly from $K$.
Assuming w.l.o.g.~that they form the \text{first} $\ell$ columns of $K$, the corresponding partition $[n] = [\ell] \cup \big([n] \setminus [\ell] \big) $  and $S = \bsm I_{\ell} \\ 0 \esm$ yields with $q=1$
\begin{equation}
K = \bpm A & B_{1} \\ B_{1} & B_{2} \epm,\qquad
C = \bpm A \\ B_{1} \epm,
\end{equation}
and using $A A^{\dagger} A = A$,
\begin{equation}
\widehat K_{\ell}
= \bpm A \\ B_{1} \epm A^{\dagger} \bpm A & B_{1} \epm
= \bpm A  & A A^{\dagger} B_{1} \\ B_{1} A^{\dagger} A & B_{1} A^{\dagger} B_{1} \epm.
\end{equation}
Assuming the $A$ has full rank, we obtain the classical \textit{Nystr\"{o}m extension}
\begin{equation}\label{eq:Nystroem-K-compressed}
\widehat K_{\ell}
= \bpm A  & B_{1} \\ B_{1} & B_{1} A^{-1} B_{1} \epm
\end{equation}
introduced in machine learning by \cite{Williams:2001aa}, studied much earlier in linear algebra -- see, e.g., the Schur compression matrix and references in \cite{Ando:1979aa} -- and analyzed by \cite{Drineas:2005aa}.

\subsection{The Manifold $\mc{P}^{n}$ of Positive Definite Symmetric Matrices}
\label{sec:Pn}
The following is taken from \cite{Bhatia:2006aa}.
The set
\begin{equation}
\mc{P}^{n} = \{S \in \mb{S}^{n} \colon \lambda_{i}(S) > 0,\,\forall i \in [n]\}
\end{equation}
of symmetric and positive definite matrices form a smooth Riemannian manifold with tangent spaces $T_{S}\mc{P}^{n} \cong \mb{S}^{n}$ identified with $\mb{S}^{n}$ and Riemannian metric
\begin{subequations}\label{eq:def-Pc-tangents}
\begin{align}\label{eq:def-Pc-tangents-a}
\la S_{1},S_{2}\ra_{S}
&= \tr(S^{-1}S_{1}S^{-1}S_{2}),\qquad
S_{1},S_{2} \in \mb{S}^{n},\quad S \in \mc{P}^{n}
\intertext{and corresponding norm}\label{eq:def-Pc-tangents-b}
\|T\|_{S} &= \|S^{-1/2} T S^{-1/2}\|,\qquad T \in \mb{S}^{n},\quad S \in \mc{P}^{n}.
\end{align}
\end{subequations}
For any $A, B \in \mc{P}^{n}$, there exists a unique geodesic joining $A$ and $B$ given by
\begin{equation}\label{eq:def-gamma-P}
\gamma(s) = A^{1/2}\big(A^{-1/2} B A^{-1/2}\big)^{s} A^{1/2},\qquad s \in [0,1].
\end{equation}

\subsection{Representation of Assignments}
\label{sec:assignments}
This section describes the assignment flow, which is a basic dynamical system for labeling data given on a graph \cite{Astrom:2017ac} in supervised scenarios (\cref{fig:AF-supervised}). \Cref{sec:assignmentManifold} summarizes the mathematical background. We refer to \cite{Schnorr:2019aa} for a more elaborate exposition and discussion. \Cref{sec:AF-supervised} explains the details shown by \cref{fig:AF-supervised}.

\subsubsection{Assignment Manifold}\label{sec:assignmentManifold}
Let $(\mc{F},d_{\mc{F}})$ be a metric space and
\begin{equation}\label{eq:def-mcF-n}
\mc{F}_{n} = \{f_{i} \in \mc{F} \colon i \in \mc{I}\},\qquad |\mc{I}|=n.
\end{equation}
given data. Assume that a predefined set of \textbf{prototypes (labels)}
\begin{equation}\label{eq:def-mcF-ast}
\mc{F}_{\ast} = \{f^{\ast}_{j} \in \mc{F} \colon j \in \mc{J}\},\qquad |\mc{J}|=c.
\end{equation}
is given. \textit{Data labeling} denotes the assignments
\begin{equation}\label{eq:fj-fi}
\mc{I} \to \mc{F}_{\ast},\qquad i \mapsto f_{j_{i}}^{\ast}
\end{equation}
of a single prototype $f_{j}^{\ast} \in \mc{F}_{\ast}$ to each data point $f_{i} \in \mc{F}_{n}$.
The set $\mc{I}$ is assumed to form the vertex set of an  undirected graph $\mc{G}=(\mc{I},\mc{E})$ which defines a relation $\mc{E} \subset \mc{I} \times \mc{I}$ and neighborhoods
\begin{equation}\label{eq:def-Ni}
\mc{N}_{i} = \{k \in \mc{I} \colon ik \in \mc{E}\} \cup \{i\},
\end{equation}
where $ik$ is a shorthand for the unordered pair (edge) $(i,k)=(k,i)$.

The assignments (labeling) \eqref{eq:fj-fi} are represented by matrices in the set
\begin{equation}\label{eq:def-W-ast}
\mc{W}_{\ast}^{c} = \big\{W \in \{0,1\}^{n \times c} \colon W\eins_{c}=\eins_{n}, \, \rank(W)=c \big\}
\end{equation}
with unit vectors $W_{i},\,i \in \mc{I}$, called \textbf{assignment vectors}, as row vectors. Moreover the rank constraint ensures that exactly $c$ labels are assigned. These assignment vectors are computed by numerically integrating the assignment flow below \eqref{eq:assignment-flow}, in the following elementary geometric setting. The integrality constraint and the rank constraint of \eqref{eq:def-W-ast} are relaxed and vectors
\begin{equation}\label{eq:def-Wi}
W_{i} = (W_{i,1},\dotsc,W_{i,c})^{\T} \in \mc{S},\quad i \in \mc{I},
\end{equation}
that are discrete probability measures on the set of labels indexed by $\mc{J}$, but still called \textbf{assignment vectors}. $W_{i},\,i \in \mc{I}$
are points
on the Riemannian manifold (recall \eqref{eq:def-simplex})
\begin{equation}\label{eq:def-S}
(\mc{S},g),\qquad
\mc{S} = \{p \in \Delta_{c} \colon p > 0\}
\end{equation}
with
\begin{equation}\label{eq:barycenter-S}
\eins_{\mc{S}} = \frac{1}{c}\eins \in \mc{S},
\qquad(\textbf{barycenter})
\end{equation}
tangent space
\begin{equation}\label{eq:def-T0}
T_{0}
= \{v \in \R^{c} \colon \la\eins,v \ra=0\}
\end{equation}
and tangent bundle $T\mc{S} = \mc{S} \times T_{0}$,
orthogonal projection
\begin{equation}\label{eq:def-Pi0}
\Pi_{0} \colon \R^{c} \to T_{0},\qquad
\Pi_{0} = \Pi_{T_{0}} = I - \eins_{\mc{S}}\eins^{\T}
\end{equation}
and the Fisher-Rao metric
\begin{equation}\label{schnoerr-eq:FR-metric-S}
g_{p}(u,v) = \sum_{j \in \mc{J}} \frac{u^{j} {v}^{j}}{p^{j}},\quad p \in \mc{S},\quad
u,v \in T_{0}.
\end{equation}
Based on the linear map
\begin{equation}\label{eq:def-Rp}
R_{p} \colon \R^{c} \to T_{0},\qquad
R_{p} = \Diag(p)-p p^{\T},\qquad p \in \mc{S}
\end{equation}
satisfying
\begin{equation}\label{eq:Rp-Pi0}
R_{p} = R_{p} \Pi_{0} = \Pi_{0} R_{p},
\end{equation}
\textbf{exponential maps} and their inverses are defined as
\begin{subequations}\label{eq:schnoerr-eq:exp-maps}
\begin{align}
\Exp &\colon \mc{S} \times T_{0} \to \mc{S}, &
(p,v) &\mapsto
\Exp_{p}(v) = \frac{p e^{\frac{v}{p}}}{\la p,e^{\frac{v}{p}}\ra},
\label{eq:Exp0} \\ \label{eq:IExp0}
\Exp_{p}^{-1} &\colon \mc{S} \to T_{0}, &
q &\mapsto  \Exp_{p}^{-1}(q) = R_{p} \log\frac{q}{p},
\\
\exp_{p} &\colon T_{0} \to \mc{S}, &
\exp_{p} &= \Exp_{p} \circ R_{p},
\label{schnoerr-eq:def-exp-p} \\
\exp_{p}^{-1} &\colon \mc{S} \to T_{0}, &
\exp_{p}^{-1}(q) &= \Pi_{0}\log\frac{q}{p}.
\end{align}
\end{subequations}
\begin{remark}\label{rem:Rp}

We call the linear map \eqref{eq:def-Rp} \textit{replicator map} because it yields, for any vector field $F \colon \mc{S} \to \R^{c}$ that represents affinity measures for the set of labels \eqref{eq:def-mcF-ast}, a vector field $R_{p}F$ on $\mc{S}$ and in turn the corresponding \textit{replicator equation} \cite{Hofbauer:2003aa}
\begin{equation}
\dot p_{j} = \big(R_{p}F(p)\big)_{j}
= p_{j}\big(F_{j}(p)-\EE_{p}[F]\big)
= p_{j} F_{j}(p)-\la p,F(p)\ra p_{j},\qquad j \in\mc{J}.
\end{equation}
If $F = \partial E$ derives as Euclidean gradient from a potential $E$, then $R_{p}F(p) = \ggrad_{\mc{S}} E$ is the corresponding Riemannian gradient with respect to the Fisher-Rao metric \eqref{schnoerr-eq:FR-metric-S} \cite[Prop.~1]{Astrom:2017ac}.
\end{remark}
\begin{remark}\label{rem:Exp-map}
The map $\Exp$ corresponds to the e-connection of information geometry, rather than to the exponential map of the Riemannian connection \cite{Amari:2000aa}. Accordingly, the geodesics with respect to the affine e-connection \eqref{eq:Exp0} are not length-minimizing. But they provide a close approximation \cite[Prop.~3]{Astrom:2017ac} and are more convenient for numerical computations. In particular, all simplex constraints (normalization of assignment vectors as discrete distributions) are smoothly `built in'. Yet, unlike the geometry induced by traditional barrier functions (see, e.g., \cite{Nesterov:2002aa}), the information geometry underlying the assignment flow $W(t)$ entails that it may -- and in fact \textit{does} \cite{Zern:2020aa} -- evolve arbitrarily close to the boundary of the assignment manifold so as to determine unambigous label assignments for $t$ large enough.
\end{remark}
\begin{remark}\label{rem:exp-T0}
Applying the map $\exp_{p}$ to a vector in $\R^{c} = T_{0} \oplus \R\eins$ does not depend on the constant component of the argument, due to \eqref{eq:Rp-Pi0}.
\end{remark}

The \textbf{assignment manifold} is defined as
\begin{equation}\label{schnoerr-eq:def-mcW}
(\mc{W},g),\qquad \mc{W} = \mc{S} \times\dotsb\times \mc{S}.\qquad (n = |\mc{I}|\;\text{factors})
\end{equation}
Points $W \in \mc{W}$ are row-stochastic matrices $W \in \R^{n \times c}$ with row vectors $W_{i} \in \mc{S},\; i \in \mc{I}$ that represent the assignments \eqref{eq:fj-fi} for every $i \in \mc{I}$. We set
\begin{equation}\label{schnoerr-eq:TmcW}
\mc{T}_{0} = T_{0} \times\dotsb\times T_{0}
\qquad (n = |\mc{I}|\;\text{factors})
\end{equation}
with tangent vectors $V \in \R^{n \times c},\; V_{i} \in T_{0},\; i \in \mc{I}$. All the mappings defined above factorize in a natural way and apply row-wise, e.g.~$\Exp_{W} = (\Exp_{W_{1}},\dotsc,\Exp_{W_{n}})$ etc.

\subsubsection{Assignment Flow}
\label{sec:AF-supervised}
Based on \eqref{eq:def-mcF-n} and \eqref{eq:def-mcF-ast}, the distance vector field
\begin{equation}\label{eq:def-distance-vector}
D_{\mc{F};i} = \big(d_{\mc{F}}(f_{i},f_{1}^{\ast}),\dotsc,d_{\mc{F}}(f_{i},f_{c}^{\ast})\big)^{\T},\qquad i \in \mc{I}
\end{equation}
is well-defined. These vectors are collected as row vectors of the \textbf{distance matrix}
\begin{equation}
D_{\mc{F}} \in \mb{R}_{+}^{n \times c}.
\end{equation}
The \textbf{likelihood map} and the \textbf{likelihood vectors}, respectively, are defined as
\begin{equation}\label{schnoerr-eq:def-Li}
L_{i} \colon \mc{S} \to \mc{S},\qquad
L_{i}(W_{i})
= \exp_{W_{i}}\Big(-\frac{1}{\rho}D_{\mc{F};i}\Big)
= \frac{W_{i} e^{-\frac{1}{\rho} D_{\mc{F};i}}}{\la W_{i},e^{-\frac{1}{\rho} D_{\mc{F};i}} \ra},\qquad i \in \mc{I},
\end{equation}
where the scaling parameter $\rho > 0$ is used for normalizing the a-priori unknown scale of the components of $D_{\mc{F};i}$ that depends on the specific application at hand.

A key component of the assignment flow is the interaction of the likelihood vectors through \textit{geometric} averaging within the local neighborhoods \eqref{eq:def-Ni}. Specifically, using the weights
\begin{equation}\label{eq:weights-Omega-i}
\Omega_{i} = \Big\{w_{i,k} \colon k \in \mc{N}_{i},\; w_{i,k} > 0,\; \sum_{k \in \mc{N}_{i}} w_{i,k}=1\Big\},\quad i \in \mc{I},
\end{equation}
the \textbf{similarity map} and the \textbf{similarity vectors}, respectively, are defined as
\begin{equation}\label{eq:def-Si}
S_{i} \colon \mc{W} \to \mc{S},\qquad
S_{i}(W) = \Exp_{W_{i}}\Big(\sum_{k \in \mc{N}_{i}} w_{i,k} \Exp_{W_{i}}^{-1}\big(L_{k}(W_{k})\big)\Big),\qquad i \in \mc{I}.
\end{equation}
If $\Exp_{W_{i}}$ were the exponential map of the Riemannian (Levi-Civita) connection, then the argument inside the brackets of the right-hand side would just be the negative Riemannian gradient with respect to $W_{i}$ of the center of mass objective function comprising the points $L_{k},\,k \in \mc{N}_{i}$, i.e.~the weighted sum of the squared Riemannian distances between $W_{i}$ and  $L_{k}$   \cite[Lemma 6.9.4]{Jost:2017aa}. In view of \cref{rem:Exp-map}, this interpretation is only approximately true mathematically, but still correct informally: $S_{i}(W)$ moves $W_{i}$ towards the geometric mean of the likelihood vectors $L_{k},\,k \in \mc{N}_{i}$. Since $\Exp_{W_{i}}(0)=W_{i}$, this mean is equal to $W_{i}$ if the aforementioned gradient vanishes.

The \textbf{assignment flow} is induced by the system of nonlinear ODEs
\begin{subequations}\label{eq:assignment-flow}
\begin{align}
\dot W &= R_{W}S(W),\qquad W(0)=\eins_{\mc{W}},
\label{eq:assignment-flow-a}
\\
\label{eq:assignment-flow-b}
\dot W_{i} &= R_{W_{i}} S_{i}(W),\qquad W_{i}(0)=\eins_{\mc{S}},\quad i \in \mc{I},
\end{align}
\end{subequations}
where $\eins_{\mc{W}} \in \mc{W}$ denotes the barycenter of the assignment manifold \eqref{schnoerr-eq:def-mcW}.
System \eqref{eq:assignment-flow-a} collects all systems \eqref{eq:assignment-flow-b}, for every vertex $i \in \mc{I}$. The latter systems are coupled within local neighborhoods $\mc{N}_{i}$ due to the similarity vectors $S_{i}(W)$ given by \eqref{eq:def-Si}. The solution $W(t)\in\mc{W}$ is numerically computed by geometric integration \cite{Zeilmann:2018aa} and determines a labeling $W(T) \in \mc{W}_{\ast}^{c}$ for sufficiently large $T$ after a trivial rounding operation.

\subsection{Greedy $k$-Center Metric Clustering}
\label{sec:greedy-clustering}
In order to handle large-scale scenarios, the following simple but effective algorithm from \cite{Har-Peled:2011aa} can be employed for data reduction in a preprocessing step. The algorithm approximates the $k$-center clustering along with a \textit{performance guarantee} -- see \eqref{eq:2-approximation} below -- and only requires \textit{linear complexity} $\mc{O}(n c)$ with respect to the (large) number of data points $n$. By using a min-max objective (see \eqref{eq:metric-clustering-objective} below), selected data points are evenly spread among all data points and hence do not introduce a bias beforehand.

The task of $k$-center clustering is as follows. Given data points $\mc{F}_{n}$ from a metric space $(\mc{F},d_{\mc{F}})$, determine a subset
\begin{equation}
\mc{F}_{c} = \{f_{j} \colon j \in \mc{J}\} \subset \mc{F}_{n},\qquad |\mc{J}|=c.
\end{equation}
that solves the combinatorially hard optimization problem
\begin{equation}\label{eq:metric-clustering-objective}
E_{\infty}^{\ast} = \min_{\mc{F}_{c} \subset \mc{F}_{n}, |\mc{F}_{c}|=c} E_{\infty}(\mc{F}_{c}),\qquad
E_{\infty}(\mc{F}_{c} ) = \max_{f \in \mc{F}_{n}} d_{\mc{F}}(f,\mc{F}_{c} ),
\end{equation}
where $d_{\mc{F}}(f,\mc{F}_{c} ) = \min_{f' \in \mc{F}_{c}} d_{\mc{F}}(f,f' )$.

A greedy approximation is computed as follows. Start with a first initial point $f_{1}$, e.g.~chosen randomly in $\mc{F}_{n}$. Then select the remaining $c-1$ points $f_{2},\dotsc,f_{c}$ successively by determining the point that is most distant from the current subset of already selected points, to obtain a set $\mc{F}_{c}$ that is a $2$-approximation
\begin{equation}\label{eq:2-approximation}
E_{\infty}(\mc{F}_{c}) \leq 2 E_{\infty}^{\ast}
\end{equation}
of the optimum \eqref{eq:metric-clustering-objective} \cite[Thm.~4.3]{Har-Peled:2011aa}. As a consequence, the subset of $c$ points of $\mc{F}_{c}$ are almost uniformly distributed within $\mc{F}_{n}$, as measured by the metric $d_{\mc{F}}$.

\section{Self-Assignment}\label{sec:selfAssignments}
This section prepares the generalization of the assignment flow \eqref{eq:assignment-flow} from supervised labeling to completely unsupervised labeling, that is the transition from \cref{fig:AF-supervised} to \cref{fig:SAF}, where prototypes \eqref{eq:def-mcF-ast} no longer are involved but are determined simultaneously.

\Cref{sec:From-Labeling-To-Partitioning} introduces the objective function $\la K_{\mc{F}},A_{s}(W)\ra = \tr(K_{\mc{F}} A_{s}(W))$ for the special case of the parameter $s=0$. It is shown that the matrix $A_{0}(W)$ arises naturally, in the absence of labels, in connection with a graph partitioning problem. \Cref{sec:selfAssignmentMatrix} generalizes the approach to a one-parameter family of self-assignment matrices $A_{s}(W) = W\gamma_{s}(W)^{-1}W^{\T},\, s \in [0,1]$ that is defined by a smooth geodesic $s \mapsto \gamma_{s}(W)\in \mc{P}^{c}$ of positive definite matrices. \Cref{sec:Relaxation-Interpretation} provides additional interpretations of the `extreme cases' $A_{0}(W)$ and $A_{1}(W)$ from various viewpoints. Specifically, the entries $A_{0;i,k}(W)$ specify the probability that two vertices $i$ and $k$ get assigned the same label (no matter which one), i.e.~that they belong to the same cluster (\cref{sec:InterSelfAffinityMatrix}). The relaxation based on $A_{1}(W)$, on the other hand, focuses on the best column-subspace of the assignment matrix $W$ for self-prediction of given data (\cref{sec:InterSelfInfluenceMatrix}).

The content of \cref{sec:Relaxation-Interpretation} will be complemented in \cref{sec:Related-Work} by a discussion of related work, and continued in \cref{sec:selfAssignmentSection} by explaining the right-hand side of \cref{fig:SAF}.



\subsection{From Labeling to Partitioning}\label{sec:From-Labeling-To-Partitioning}
Since the prototypes $\mc{F}_{\ast}$ are unknown, we replace them by the given data $\mc{F}_{n}$.
Along with $\mc{F}_{n}$ and the underlying graph $\mc{G}=(\mc{I},\mc{E})$, we assume a weighted similarity matrix
\begin{equation}\label{eq:K-mcF-entries}
K_{\mc{F}} \in \mb{S}^{n},\qquad\qquad
K_{\mc{F};i,k}=(K_{\mc{F}})_{i,k} = k_{\mc{F}}(f_{i},f_{k}),\qquad i,k \in \mc{I}
\end{equation}
to be given, with entries
measuring the similarity of the data points $f_{i},f_{k}$ in terms of a symmetric function $k_{\mc{F}}$.
Matrix $K_{\mc{F}}$ is positive definite if $k_{\mc{F}}$ evaluates the inner product of a data embedding into a corresponding reproducing kernel Hilbert space (RKHS) space \cite{Hofmann:2008aa}. A basic example is a Euclidean feature space $(\mc{F},d_{\mc{F}})$ with norm $d_{\mc{F}}(f_{i},f_{k})=\|f_{i}-f_{k}\|$ and
\begin{equation}\label{eq:K-mcF-Gaussian}
k_{\mc{F}}(f_{i},f_{k}) = e^{-d_{\mc{F}}(f_{i},f_{k})^{2} /\sigma^{2}}.
\end{equation}
Let $W \in \mc{W}_{\ast}^{c}$ be a labeling. The column vectors $W^{j},\, j \in \mc{J}$, of $W$ indicate which data points $f_{i}$ are assigned to $j$-th cluster $\mc{I}_{j}$ corresponding to the partition
\begin{equation}
\mc{I} = \dot{\bigcup_{j \in \mc{J}}}\mc{I}_{j},
\qquad\qquad
n_{j} = |\mc{I}_{j}|,\quad j \in \mc{J},
\qquad\qquad
\sum_{j \in \mc{J}} n_{j}=n=|\mc{I}|
\end{equation}
of the data set $\mc{F}_{n}$. Define the diagonal matrix
\begin{equation}\label{eq:def_CW}
C(W) = \Diag(W^{\T} \eins_{n})
= \Diag(n_{1},\dotsc,n_{c}) \in \mb{S}_{+}^{c}
\end{equation}
with the cardinalities $n_{j}$ of each cluster $\mc{I}_{j}$ as entries. The quadratic form
\begin{equation}
\frac{1}{2}\la W^{j},K_{\mc{F}} W^{j}\ra
= \frac{1}{2}\sum_{i,k \in \mc{I}} k_{\mc{F}}(f_{i},f_{k}) W_{i,j} W_{k,j}
= \frac{1}{2}\sum_{i \in \mc{I}_{j}} k_{\mc{F}}(f_{i},f_{i}) + \sum_{i,k \in \mc{I}_{j} \colon i \neq k} k_{\mc{F}}(f_{i},f_{k})
\end{equation}
measures the \textit{size} of cluster $\mc{I}_{j}$ by the first sum of the right-hand side, which for common kernel functions like \eqref{eq:K-mcF-Gaussian} is proportional to the number $n_{j}$ of data points assigned to cluster $j$, and the \textit{connectivity} in terms of the weights $k_{\mc{F}}(f_{i},f_{k})$ of all edges $ik \in \mc{E}$ connecting points $i$ and $k$ in this cluster. Assuming that all clusters are non-empty, which amounts to the assumption
\begin{equation}\label{eq:ass-W-rank-c}
\rank(W) = c,
\end{equation}
we normalize the preceding expression by the cardinality and sum over all clusters, to obtain
\begin{subequations}\label{eq:derivation-partitioning-problem}
\begin{align}\label{eq:cluster-quadratic-form-a}
\sum_{j \in \mc{J}} \frac{1}{2 n_{j}}\la W^{j},K_{\mc{F}} W^{j}\ra
&= \frac{1}{2}\sum_{j \in \mc{J}}\frac{1}{n_{j}}\sum_{i \in \mc{I}_{j}} k_{\mc{F}}(f_{i},f_{i})  + \sum_{j \in \mc{J}} \frac{1}{n_{j}} \sum_{i,k \in \mc{I}_{j} \colon i \neq k} k_{\mc{F}}(f_{i},f_{k})
\\
&= \frac{1}{2}\sum_{j \in \mc{J}} \frac{1}{n_{j}} (W^{\T} K_{\mc{F}} W)_{j,j}
\overset{\eqref{eq:def_CW}}{=}
\frac{1}{2}\tr\big(C(W)^{-1} W^{\T} K_{\mc{F}} W\big)
\\ \label{eq:cluster-quadratic-form-c}
&= \frac{1}{2}\tr\big(K_{\mc{F}} A_{0}(W)\big),
\intertext{with}\label{eq:A0W}
A_{0}(W) &= W C(W)^{-1} W^{\T},\qquad W \in \mc{W}_{\ast}^{c}.
\end{align}
\end{subequations}
For common kernel functions like \eqref{eq:K-mcF-Gaussian}, the first sum of the right-hand side of \eqref{eq:cluster-quadratic-form-a} is just a constant. Objective \eqref{eq:cluster-quadratic-form-c} therefore essentially
 measures the normalized similarity weights \textit{not} cut by the partition of the underlying graph. As a result,
the problem to partition the data and the underlying graph into $c$ clusters takes the form
\begin{equation}\label{eq:partitioning-problem}
\max_{W} \tr\big(K_{\mc{F}} A_{0}(W)\big) \qquad\text{subject to}\qquad W \in \mc{W}_{\ast}^{c}.
\end{equation}
We record basic properties of the matrix $A_{0}(W)$.
\begin{lemma}\label{lem:A0W-properties}
Let $W \in \mc{W}_{\ast}^{c}$. Then the matrix $A_{0}(W)$ given by \eqref{eq:A0W} is \\[0.2cm]
\hspace*{1cm}
\begin{tabular}{lll}
(a) & nonnegative and symmetric,
\\
(b) & doubly stochastic, &
$A_{0}(W)\eins_{n} = A_{0}(W)^{\T}\eins_{n} = \eins_{n}$,
\\
(c) & and completely positive, &
$A_{0}(W) = Y Y^{\T},\qquad Y \geq 0$.
\end{tabular}
\end{lemma}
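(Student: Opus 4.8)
The plan is to exploit the explicit product structure $A_{0}(W) = W C(W)^{-1} W^{\T}$ together with the two defining properties of a labeling $W \in \mc{W}_{\ast}^{c}$: each row of $W$ is a standard basis vector (so $W \geq 0$ and $W\eins_{c} = \eins_{n}$), and $\rank(W) = c$. First I would record the consequence of the rank condition that is used throughout. Since the columns of $W$ are linearly independent $0/1$ vectors, every cluster is nonempty, so each diagonal entry $n_{j} = (W^{\T}\eins_{n})_{j}$ of $C(W)$ is a positive integer. Hence $C(W)$ is an invertible diagonal matrix with positive entries, and both $C(W)^{-1}$ and its inverse square root $C(W)^{-1/2} = \Diag(n_{1}^{-1/2},\dotsc,n_{c}^{-1/2})$ are well defined. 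This single fact makes the whole expression meaningful and is worth isolating up front.

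For (a), symmetry is immediate: $C(W)^{-1}$ is diagonal, hence symmetric, so $A_{0}(W)^{\T} = W (C(W)^{-1})^{\T} W^{\T} = A_{0}(W)$. Nonnegativity follows because $A_{0}(W)$ is a product of entrywise nonnegative matrices; concretely I would compute the entries $(A_{0}(W))_{i,k} = \sum_{j\in\mc{J}} W_{i,j} W_{k,j}/n_{j}$, which equals $1/n_{j}$ when $i,k$ lie in the common cluster $\mc{I}_{j}$ and vanishes otherwise, making nonnegativity transparent.

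For (b), since $A_{0}(W)$ is symmetric by (a) it suffices to check $A_{0}(W)\eins_{n} = \eins_{n}$. I would evaluate the product from the right: $W^{\T}\eins_{n} = (n_{1},\dotsc,n_{c})^{\T}$ is exactly the diagonal of $C(W)$, so $C(W)^{-1}W^{\T}\eins_{n} = \eins_{c}$, and finally $W\eins_{c} = \eins_{n}$ by the row-stochasticity constraint defining $\mc{W}_{\ast}^{c}$. The symmetric statement $A_{0}(W)^{\T}\eins_{n} = \eins_{n}$ then follows for free.

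Finally (c) is where the only genuine choice lies: I would produce the completely positive factorization explicitly by setting $Y = W C(W)^{-1/2} \geq 0$, which is nonnegative because $W\geq 0$ and $C(W)^{-1/2}$ is diagonal with positive entries; then $Y Y^{\T} = W C(W)^{-1/2} C(W)^{-1/2} W^{\T} = W C(W)^{-1} W^{\T} = A_{0}(W)$, and $Y$ has $c$ columns, matching the rank. There is no real obstacle here — the argument is elementary — but the point to watch is that all three parts silently rely on the invertibility of $C(W)$ guaranteed by $\rank(W) = c$; without nonempty clusters the normalization, and hence $A_{0}(W)$ itself, would be undefined.
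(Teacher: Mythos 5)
Your proof is correct and follows essentially the same route as the paper: symmetry and nonnegativity read off the product structure, double stochasticity from $W^{\T}\eins_{n}=\diag(C(W))$ and $W\eins_{c}=\eins_{n}$, and complete positivity via the same factor $Y=WC(W)^{-1/2}$ the paper uses. You merely spell out the details (entrywise formula, invertibility of $C(W)$ from the rank condition) that the paper leaves implicit.
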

\begin{proof}
(a) is immediate. (b) follows from \eqref{eq:def_CW} and the constraint $W \in \mc{W}_{\ast}^{c}$ (recall \eqref{eq:def-W-ast}). (c) holds with
$Y = Y(W) = W C(W)^{-1/2}$.
\end{proof}
Property (c), i.e.~a completely positive factorization of the matrix $A_{0}(W)$ depending on $W$, reflects the combinatorial difficulty of the optimization problem \eqref{eq:partitioning-problem} -- see, e.g., \cite{Berman:2018aa,Bomze:2018aa} and references therein for more information about completely positive matrix factorization. Therefore, various relaxations of the constraint $W \in \mc{W}_{\ast}^{c}$ are discussed next.

\subsection{Self-Assignment Matrices, Relaxation}
\label{sec:selfAssignmentMatrix}
We start with the definitions of two basic self-assignment matrices. The first relaxation, based on \eqref{eq:A0W}, drops both the integrality constraint and the rank constraint.
\begin{definition}[\textbf{Self-Affinity Matrix}]\label{def:SelfAffinityMatrix}
The \emph{self-affinity matrix} is defined as the factorization
\begin{equation}\label{eq:Def_AW}
  A_{0}(W) := W C(W)^{-1} W^{\T},\qquad W \in \mc W.
\end{equation}
\end{definition}
The second definition is based on the observation that equivalent expressions for the normalizing matrix
\begin{equation}\label{eq:CW=WTW}
C(W)=W^{\T} W \qquad\text{if}\qquad
W \in \mc{W}_{\ast}^{c}
\end{equation}
differ after relaxing the feasible set $\mc{W}_{\ast}^{c}$. Dropping the integrality constraint, but keeping the rank constraint, yields the set of full-rank assignment matrices
\begin{equation}\label{eq:def-Wc}
 \mc{W}^{c} = \big\{W \in \mc{W} \colon \rank(W)=c \big\}
 \qquad\qquad(\textbf{full-rank assignments})
\end{equation}
and the following definition.
\begin{definition}[\textbf{Self-Influence Matrix}]\label{def:SelfInfluenceMatrix}
The \emph{self-influence matrix} is defined as the factorization
\begin{equation}\label{eq:def_SelfInfluenceMatrix}
	A_{1}(W):=W(W^{\T}W)^{-1}W^{\T},\qquad W \in \mc{W}^{c}.
\end{equation}
\end{definition}
\Cref{def:SelfAffinityMatrix,def:SelfInfluenceMatrix} differ by the normalizing matrices $C(W)$ and $W^{\T} W$, both of which are positive definite. It is then natural to define a one-parameter family of factorized matrices in terms of a geodesic \eqref{eq:def-gamma-P} on the positive definite manifold $\mc{P}^{c}$ that connects $C(W)$ and $W^{\T} W$, which gives rise to the following definition.
\begin{definition}[\textbf{Self-Assignment Matrix}]\label{def:SelfAssignmentMatrix}
The \emph{self-assignment matrix} with parameter $s$ is defined as the factorization
\begin{subequations}\label{eq:def_SelfAssignmentMatrix}
\begin{align}\label{eq:def-As}
A_{s}(W) &:= W\gamma_{s}(W)^{-1}W^{\T}, \quad \quad s \in [0,1], \qquad
W \in \begin{cases}
\mc{W}, &\text{if}\; s=0, \\
\mc{W}^{c}, &\text{if}\; s>0,
\end{cases}
\intertext{
with normalizing matrix
}\label{eq:def-gamma-s}
\gamma_{s}(W)
&= C(W)^{\frac{1}{2}} \big( C(W)^{-\frac{1}{2}} W^{\T}W C(W)^{-\frac{1}{2}}\big)^{s} C(W)^{\frac{1}{2}} \;\in\; \mc{P}^{c}.
\end{align}
\end{subequations}
\end{definition}
\noindent
Note that \cref{def:SelfAssignmentMatrix} corresponds to \cref{def:SelfAffinityMatrix,def:SelfInfluenceMatrix} if $s=0$ and $s=1$, respectively.

The following proposition collects properties of the self-assignment matrices defined above. Property (h) refers to a relation between matrices $A_{1}\big(W(t)\big)$ and $A_{1}\big(W(t')\big)$, for any $t,t' \in [0,T]$: they share the same eigenvalues.
\begin{proposition}[Properties of Self-Assignment Matrices]\label{prop:propSelfAssignments}
Let $A_{0}(W)$ and $A_{1}(W)$ be given by \cref{def:SelfAffinityMatrix,def:SelfInfluenceMatrix}, respectively. Then these matrices have (\cmark) or do not have (\xmark) the following properties.
$ $ \\

\begin{center}
\begin{tabularx}{\linewidth}{llc@{\hskip 3em}c@{\hskip 3em}c}
& & & \textbf{self-affinity} $A_{0}(W)$ & \textbf{self-influence} $A_{1}(W)$ \\
\addlinespace
& admissible assignments & & $W \in \mc{W}$  &  $W \in \mc{W}^{c}$ \\
\hline
\addlinespace
(a) & symmetric & & \cmark & \cmark \\
(b) & positive semi-definite & & \cmark & \cmark \\
(c) & nonnegative & & \cmark & \xmark \\
(d) & doubly stochastic & & \cmark & \xmark \\
(e) & completely positive & & \cmark & \xmark \\
(f) & rank & & $\leq c$ & $ = c$  \\
(g) & orthogonal projection & & \xmark & $\Pi_{\mc{R}(W)}$  \\
\addlinespace
(h) & iso-spectral & & \xmark & \cmark  \\
(i) & eigenvalues $\in$ & & $[0,1]$ & $\{0,1\}$ \\
(j) & multiplicity ($\lambda = 1$)  & & $=1$ & $=c$   \\
(k) & multiplicity ($\lambda = 0$)  & & $ \geq n-c$ & $=n-c$ \\
(l) & eigenvector(s) ($\lambda = 1$)  & & $\eins_n$ & $\big( W(W^{\T}W)^{-\tfrac{1}{2}} \big)^{j}, \quad j \in \mc{J}$ \\
\end{tabularx}
\end{center}
\end{proposition}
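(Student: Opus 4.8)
The plan is to reduce everything to two structural facts. First, both matrices have the form $A_s(W)=WMW^\T$ with $M=C(W)^{-1}$ (for $s=0$) or $M=(W^\T W)^{-1}$ (for $s=1$) symmetric and positive definite; this yields (a) symmetry and (b) positive semi-definiteness at once. Second, since $\mc{S}$ is the \emph{open} simplex \eqref{eq:def-S}, every $W\in\mc{W}$ has strictly positive entries, which will be the engine behind the delicate multiplicity claim (j). I would also record at the outset the elementary but crucial inequality $W^\T W \preceq C(W)$ for $W\in\mc{W}$: writing the quadratic form $v^\T\big(C(W)-W^\T W\big)v=\sum_{i}\big(\sum_j v_j^2 W_{ij}-(\sum_j v_j W_{ij})^2\big)$ and applying Cauchy--Schwarz row-by-row (each row of $W$ is a probability vector) shows every summand is nonnegative, with equality exactly when each row is a vertex, i.e.\ precisely when \eqref{eq:CW=WTW} holds.

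With these in hand, the $s=1$ column is essentially one computation: $A_1(W)^2=W(W^\T W)^{-1}W^\T W(W^\T W)^{-1}W^\T=A_1(W)$, so $A_1(W)$ is symmetric and idempotent, hence the orthogonal projection $\Pi_{\mc{R}(W)}$ onto the column space of $W$ (g). Everything else for $A_1$ follows: its eigenvalues lie in $\{0,1\}$ (i); the eigenvalue $1$ has multiplicity $\rank(W)=c$ (j) with orthonormal eigenvectors the columns of $U:=W(W^\T W)^{-1/2}$, since $U^\T U=I_c$ and $A_1(W)U=UU^\T U=U$ (l); and the eigenvalue $0$ has multiplicity $n-c$ (k), giving $\rank A_1(W)=c$ (f). Because the spectrum $\{1^{(c)},0^{(n-c)}\}$ is independent of $W$ as long as $W\in\mc{W}^c$, the matrices $A_1(W(t))$ are iso-spectral along the flow (h). Finally $A_1$ is \emph{not} nonnegative---a single two-column example whose range is a generic plane produces negative off-diagonal projection entries---and non-negativity is necessary for both double stochasticity and complete positivity, so (c), (d), (e) are all (\xmark).

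For the $s=0$ column I would dispatch the nonnegative-matrix properties first: $A_0(W)=WC(W)^{-1}W^\T\ge 0$ because $W\ge 0$ and $C(W)^{-1}$ is a positive diagonal matrix (c); double stochasticity (d) follows from $C(W)^{-1}W^\T\eins_n=\eins_c$ by \eqref{eq:def_CW} together with $W\eins_c=\eins_n$ and symmetry; and complete positivity (e) is the factorization $A_0(W)=YY^\T$ with $Y=WC(W)^{-1/2}\ge 0$, exactly as in \cref{lem:A0W-properties}(c). Since $C(W)^{-1}$ is invertible, $\rank A_0(W)=\rank W\le c$ (f), whence the eigenvalue $0$ has multiplicity $\ge n-c$ (k). That $A_0(W)$ is \emph{not} a projection in general (g) is immediate from the fact that idempotency would force $W^\T W=C(W)$, which by the inequality above fails unless $W$ is integral; the same $W$-dependence of the spectrum makes $A_0$ fail iso-spectrality (h).

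The spectral claims (i), (j), (l) are where the real work sits, and I would handle them through the standard device that $YY^\T$ and $Y^\T Y$ share their nonzero eigenvalues with multiplicities. Here $Y^\T Y=C(W)^{-1/2}W^\T W C(W)^{-1/2}$, and the inequality $W^\T W\preceq C(W)$ rescales to $Y^\T Y\preceq I_c$; combined with $Y^\T Y\succeq 0$ this confines all eigenvalues of $A_0(W)$ to $[0,1]$ (i), while $A_0(W)\eins_n=\eins_n$ shows that $1$ is attained with eigenvector $\eins_n$ (l). The \textbf{main obstacle} is the exact multiplicity one in (j): here I would use that $W>0$ makes $A_0(W)$ entrywise strictly positive, hence primitive, so Perron--Frobenius forces its spectral radius---which equals $1$ by (i) and (d)---to be a \emph{simple} eigenvalue. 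This strict-positivity argument is essential: were the support of $W$ allowed to disconnect, the multiplicity of $1$ would count connected components and could exceed one, so the open-simplex constraint $\mc{W}=\mc{S}^n$ is doing genuine work and must be flagged rather than glossed over.
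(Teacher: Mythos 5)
Your proof is correct, and its skeleton coincides with the paper's at the two places where something actually has to be argued: idempotency plus symmetry gives the whole $s=1$ column, and Perron--Frobenius applied to the entrywise \emph{strictly} positive matrix $A_{0}(W)$ (which is exactly where the open-simplex constraint $W\in\mc{W}$ enters) gives the simplicity of the eigenvalue $1$ in (j). Where you genuinely diverge is in the $s=0$ spectral bound (i) and the eigenvector claim (l): the paper obtains the upper bound $1$ and the eigenvector $\eins_{n}$ by citing standard results on doubly stochastic nonnegative matrices \cite{Plemmons:1994aa}, whereas you derive them self-containedly from the Loewner inequality $W^{\T}W\preceq C(W)$ (row-wise Jensen/Cauchy--Schwarz on the probability vectors $W_{i}$) combined with the fact that $YY^{\T}$ and $Y^{\T}Y$ share their nonzero spectrum, applied to $Y=WC(W)^{-1/2}$. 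This buys you something the citation route does not: the same inequality, with its equality case $W^{\T}W=C(W)\Leftrightarrow W\in\mc{W}_{\ast}^{c}$, simultaneously explains why $A_{0}(W)$ fails to be idempotent for non-integral $W$ (entry (g)) and makes the relation \eqref{eq:CW=WTW} between the two normalizing matrices transparent. Your counterexample sketch for the non-nonnegativity of $A_{1}(W)$ (a projection onto a generic plane containing $\eins_{n}$) is also slightly more than the paper offers, which simply asserts (a)--(f); it does check out, e.g.\ for $c=2$, $n=3$ with rows $(1-a_{i},a_{i})$ and non-constant $a$. No gaps.
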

\begin{proof}
(a)-(f) are clear. We focus on (g)-(l).

(g) On easily checkes that $A_{1}(W) = A_{1}(W)^{2}$ is idempotent whereas $A_{0}(W)$ is not. Taking into account (a) implies the assertion for $s=1$.

(h) Follows from (i) and (j) for $s=1$.

(i) Case $s=0$. The lower eigenvalue bound $0$ follows from (a),(b), the upper bound $1$ from (d) and \cite[Thm.~5.3]{Plemmons:1994aa}. Case $s=1$. This is immediate due to $(g)$.

(j) Case $s=0$. $W \in \mc{W}$ implies that $A_{0}(W)$ is strictly positive. (i) and \cite[Thm.~1.4]{Plemmons:1994aa} then imply the assertion. Case $s=1$. This is immediate due to (f),(g).

(k) Both assertions follow from (f).

(l) Case $s=0$ follows from (d) and \cite[Thm.~5.3]{Plemmons:1994aa}. Case $s=1$. Setting $Y=W (W^{\T} W)^{-1/2}$, one directly computes $A_{1}(W) Y=Y$ and $Y^{\T} Y=I_{c}$.
\end{proof}
The last definition of this section concerns the `difference' between the normalizing matrices $C(W)$ and $W^{\T} W$ of \cref{def:SelfAffinityMatrix,def:SelfInfluenceMatrix,def:SelfAssignmentMatrix}.
\begin{definition}[\textbf{Cluster-Confusion Matrix}]\label{def:ClusterConfusionMatrix}
The \emph{cluster-confusion matrix} is defined as the matrix factorization
\begin{equation}\label{eq:Def_BW}
  B(W) := C(W)^{-1} W^{\T} W \in \R_{+}^{c \times c},\qquad W \in \mc W.
\end{equation}
\end{definition}
\begin{proposition}[Properties of the Cluster-Confusion Matrix]\label{prop:BW}
The cluster-confusion matrix $B(W)$ has the following properties.
	\begin{center}
		\begin{tabularx}{\linewidth}{llc@{\hskip 2em}l}
		(a) & entry-wise positive: & & $B(W) > 0$, \\
		(b) & row stochastic: & & $B(W)\eins_c = \eins_c$, \\
		(c) & pure clusters: & & $B(W) = I_c$ \; if and only if \; $W \in \mc{W}_{\ast}^{c}$, \\
		(d) & rank lower bound: & & $0 \leq \tr\big( B(W) \big) \leq \rank(W)$ \; with equality if \; $W \in \mc{W}_{\ast}^{c}$.
	\end{tabularx}
	\end{center}
\end{proposition}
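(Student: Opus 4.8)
The plan is to verify the four assertions directly from the definition $B(W) = C(W)^{-1}W^{\T}W$, using only that the rows of $W$ are nonnegative and sum to one and that $C(W) = \Diag(W^{\T}\eins_{n})$ collects the column sums of $W$, so that $C(W)$ is a positive diagonal matrix precisely when no cluster is empty. For (a) I would write the entries as $B(W)_{j,l} = n_{j}^{-1}\sum_{i\in\mc{I}} W_{i,j}W_{i,l}$ and note that each summand is nonnegative, and strictly positive for $W\in\mc{W}$; hence $B(W) > 0$ entrywise. For (b) the computation is a one-liner: since $W\eins_{c} = \eins_{n}$ and $W^{\T}\eins_{n} = (n_{1},\dotsc,n_{c})^{\T}$ is the diagonal of $C(W)$, we obtain $B(W)\eins_{c} = C(W)^{-1}W^{\T}W\eins_{c} = C(W)^{-1}W^{\T}\eins_{n} = C(W)^{-1}(n_{1},\dotsc,n_{c})^{\T} = \eins_{c}$.

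For (c) the forward implication is immediate from \eqref{eq:CW=WTW}: on $\mc{W}_{\ast}^{c}$ one has $C(W) = W^{\T}W$, whence $B(W) = I_{c}$. The only-if direction carries the real content. Assuming $B(W) = I_{c}$ forces $W^{\T}W = C(W)$, i.e.\ the Gram matrix of the columns of $W$ is diagonal; reading off an off-diagonal entry gives $\sum_{i\in\mc{I}} W_{i,j}W_{i,l} = 0$ for $j\ne l$, and nonnegativity of the entries then forces $W_{i,j}W_{i,l}=0$ for every $i$ and every $j\ne l$. Thus each row of $W$ has at most one nonzero entry, and since the rows sum to one that entry equals one; hence $W$ is $\{0,1\}$-valued with $W\eins_{c} = \eins_{n}$, and invertibility of $C(W)$ (all column sums positive, no empty cluster) yields $\rank(W) = c$, i.e.\ $W \in \mc{W}_{\ast}^{c}$.

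For (d) the key observation is the cyclic-trace identity $\tr\big(B(W)\big) = \tr\big(C(W)^{-1}W^{\T}W\big) = \tr\big(WC(W)^{-1}W^{\T}\big) = \tr\big(A_{0}(W)\big)$, which reduces the claim to the spectrum of the self-affinity matrix $A_{0}(W)$ of \eqref{eq:A0W}. The lower bound $\tr\big(B(W)\big)\ge 0$ is clear from positive semidefiniteness. For the upper bound I would write $A_{0}(W) = YY^{\T}$ with $Y = WC(W)^{-1/2}$, so that $\rank\big(A_{0}(W)\big) = \rank(Y) = \rank(W)$ because $C(W)^{-1/2}$ is invertible, while property (i) of \cref{prop:propSelfAssignments} localizes every eigenvalue of $A_{0}(W)$ in $[0,1]$. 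Hence $\tr\big(A_{0}(W)\big)$ is a sum of exactly $\rank(W)$ positive eigenvalues, each at most $1$, giving $\tr\big(B(W)\big) \le \rank(W)$; equality on $\mc{W}_{\ast}^{c}$ follows from (c), where $B(W) = I_{c}$ and thus $\tr\big(B(W)\big) = c = \rank(W)$.

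The main obstacle is the only-if part of (c): one must convert the vanishing of the off-diagonal Gram entries into integrality of $W$, which crucially exploits nonnegativity and, strictly speaking, passes to the closure of $\mc{W}$, since genuine hard assignments lie on the boundary where some entries vanish. A secondary subtlety in (d) is to justify $\rank\big(A_{0}(W)\big) = \rank(W)$ and to invoke the eigenvalue localization of \cref{prop:propSelfAssignments} rather than re-deriving it; everything else reduces to the elementary row-sum and Gram-matrix identities above.
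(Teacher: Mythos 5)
Your proposal is correct and follows essentially the same route as the paper, which simply states that (a)--(c) follow directly from the definitions and that (d) follows from $\tr\big(B(W)\big)=\tr\big(A_{0}(W)\big)$ combined with the nonnegativity and eigenvalue-localization properties of $A_{0}(W)$ from \cref{prop:propSelfAssignments}; you have merely filled in the details the authors leave implicit, including the Gram-matrix argument for the converse in (c). Your remark that hard assignments $W\in\mc{W}_{\ast}^{c}$ lie in the closure of $\mc{W}$ is a fair observation about a point the paper glosses over, but it does not change the argument.
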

\begin{proof}
(a)-(c) directly follow from the definitions of $B(W)$ and $W_{\ast}^{c}$. (d) follows from $\tr(B(W)) = \tr(A_{0}(W))$ together with \cref{prop:propSelfAssignments} (c) and (i).
\end{proof}
%

\subsection{Relaxations: Interpretation}\label{sec:Relaxation-Interpretation}

We take a closer look at the relaxations of problem \eqref{eq:partitioning-problem}.
\subsubsection{Self-Affinity Matrix}
\label{sec:InterSelfAffinityMatrix}

Following \cite{Astrom:2017ac}, we interpret each entry of the assignment matrix $W \in \mc{W}$ as posterior probability
\begin{equation}\label{eq:posterior}
 P(j|i)=W_{i,j}, \qquad j \in \mc{J},\quad i \in \mc{I}
\end{equation}
of label $j$, conditioned on the observation of the data point $f_{i}$. According to the completely unsupervised scenario here, we adopt the uniform prior distribution
\begin{equation}\label{eq:P-uniform}
P(i) = \frac{1}{n},\quad i \in \mc{I}
\end{equation}
of the data. Marginalization yields the label distribution
\begin{equation}
 P(j) = \sum_{i \in \mc{I}} P(j|i)P(i) = \frac{1}{n} \big( W^{\top} \eins_{n} \big)_j,
\end{equation}
which measures the size of cluster $\mc{I}_{j}$ in terms of the relative mass of assignments. Invoking Bayes' rule, we compute the distribution analogous to \eqref{eq:posterior}, but with the roles of data and labels reversed, to obtain
\begin{equation}\label{eq:likelihood}
Q(k|j) =\frac{P(j|k)P(k)}{P(j)}=\frac{W_{k,j}}{\sum_{i\in \mc{I}} W_{i,j}} = \big(C(W)^{-1} W^{\T} \big)_{j,k}.
\end{equation}
The probability of the self-assignments $f_{i} \leftrightarrow f_{k},\; i,k \in\mc{I}$ then result from marginalization over the labels
\begin{equation}\label{eq:A0-factorization}
{A}_{0;i,k}(W):=\sum_{j \in \mc{J}} Q(k|j)P(j|i) = \sum_{j \in \mc{J}} W_{i,j} \big(C(W)^{-1} W^{\T} \big)_{j,k} = \big(W C(W)^{-1} W^{\T} \big)_{i,k}.
\end{equation}
This expression explains the relaxation that is at the basis of \cref{def:SelfAffinityMatrix}. It specifies the probability that two vertices $i$ and $k$ get assigned the same label (no matter which one), i.e.~that they belong to the same cluster.

Finally, the derivation of problem \eqref{eq:partitioning-problem} -- cf.~\eqref{eq:derivation-partitioning-problem} -- showed that optimizing the assignments in order to maximize the correlation (inner product) of $A_{0}(W)$ and $K_{\mc{F}}$ amounts to cover the most similar data points by the components of the partition (clusters).

\begin{figure}[htpb]
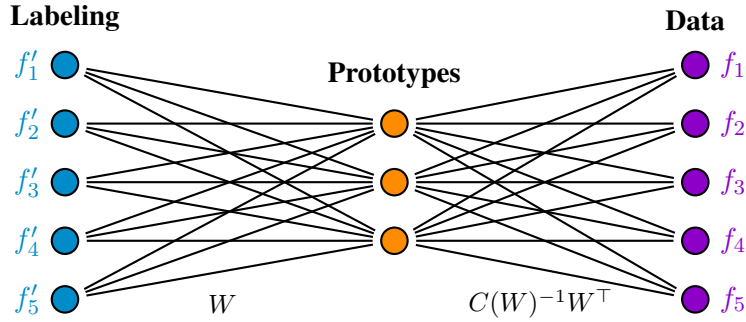

  \begin{center}
    \SelfAssignmentGraph{}
    \caption{The self-affinity matrix $A_{0}(W)$ due to \cref{def:SelfAffinityMatrix} comprises the probabilities for each pair of data points $f_{i}, f_{k} \in \mc{I}$ to belong to the same cluster. The factorization \eqref{eq:A0-factorization} of $A_{0}(W)$ admits the interpretation that optimizing the assignments $W$ implicitly forms prototypes $f_{j}^{\ast},\, j \in \mc{J}$ that are assigned to the data themselves so as to maximize the correlation with pairwise similarities given as entries of the matrix $K_{\mc{F}}$.}
    \label{fig:GraphAW}
  \end{center}
\end{figure}

\subsubsection{Recovery of Latent Prototypes}\label{sec:Recovery-Prototypes}
Although problem \eqref{eq:partitioning-problem} does not involve prototypes \eqref{eq:def-mcF-ast}, such prototypes can be recovered from the solution $W$ to the problem relaxation discussed in \cref{sec:InterSelfAffinityMatrix}. Specifically, the probabilities $Q(i|j)$ given by \eqref{eq:likelihood} indicate the contribution of each data point $f_{i}$ to cluster $j$. Consequently, adopting the manifold assumption that the data $\mc{F}_{n}$ are sampled on a Riemannian manifold, prototypes can be recovered as weighted Riemannian means by solving
\begin{equation}\label{eq:meanPrototypes}
f_j^{\ast} = \operatornamewithlimits{\arg \min}_{f \in \mc{F}} \sum_{i \in \mc{I}} \big(C(W)^{-1} W^{\T} \big)_{j,i} d^{2}_{\mc{F}}(f, f_i),\qquad j \in \mc{J}.
\end{equation}
In the basic case of Euclidean data $\mc{F}_{n} \subset \R^{d}$, this problem yields the closed form averages
\begin{equation}\label{eq:meanPrototypes-Euclidean}
f_j^{\ast} = \sum_{i \in \mc{I}} \big(C(W)^{-1} W^{\T} \big)_{j,i} f_i,\qquad j \in \mc{J}.
\end{equation}
\Cref{fig:GraphAW} illustrates the data self-assignment via the self-affinity matrix and latent prototypes.
\begin{remark}[relation and differences to basic clustering]
Choosing the squared Euclidean norm $d^{2}_{\mc{F}}(f, f_i) = \|f-f_{i}\|^{2}$ in \cref{eq:meanPrototypes} determines the prototype $f_{j}^{\ast}$, like $k$-means clustering, as arithmetic mean \eqref{eq:meanPrototypes-Euclidean} of the data $f_{i}$ assigned to cluster $j\in\mc{J}$ by the variables $\big(C(W)^{-1} W^{\T} \big)_{j,i}$. However, unlike $k$-means clustering and its variants that alternatingly update prototypes and assignment variables, the prototypes $f_{j}^{\ast},\,j \in \mc{J}$ are \textit{not explicitly} involved in our self-assignment flow approach. Rather, \cref{eq:meanPrototypes-Euclidean} is evaluated \textit{after} convergence of the self-assignment flow \eqref{eq:self-assignment-flow}. \Cref{sec:Self-Supervision} reveals that the dependency $f_{j}^{\ast} = f_{j}^{\ast}(W)$ of \eqref{eq:meanPrototypes} is consistent with graph partitioning through the self-assignment flow $W(t)$, in that prototypes $f_{j}^{\ast}$ that are \textit{implicitly} determined by \eqref{eq:meanPrototypes-Euclidean} maximize class separability. The usual initialization problem of basic clustering is handled by the self-assignment approach through the initialization $W(0)$ of \eqref{eq:self-assignment-flow-a} in terms of the given data.  In addition, we point out that basic clustering is lacking the influence of the spatial assignment \textit{regularization} through geometric averaging -- cf.~\eqref{eq:def-Si} -- on the formation of prototypes.
\end{remark}
%

\subsubsection{Self-Influence Matrix}\label{sec:InterSelfInfluenceMatrix}
Let $W \in \mc{W}^{c}$ be given and temporarily assume that $d$-dimensional Euclidean feature vectors are given as data $\mc{F}_{n}$ and collected as row vectors in the matrix
\begin{equation}\label{eq:def-F-matrix}
F = (f_{1},\dotsc,f_{n})^{\T} \in \R^{n\times d}.
\end{equation}
Let the matrix
\begin{equation}
F^{\ast} =(f^{\ast}_1,\dotsc,f^{\ast}_c)^{\T} \in \R^{c \times d}
\end{equation}
collect the prototypes. Given $W$ and $F$, a least-squares fit yields
\begin{equation}\label{eq:basicLSQ}
	F^{\ast} = \operatornamewithlimits{\arg \min}_{G \in \R^{c \times d}} \frac{1}{2} \| W G - F \|_{F}^2 = (W^{\T}W)^{-1}W^{\T}F,
\end{equation}
which is well-defined since $W \in \mc{W}^{c}$ has full rank. Using these prototypes in turn for predicting data $\hat F$ by assignment yields
\begin{equation}\label{eq:basicLSQFit}
\hat F = W F^{\ast} = W(W^{\T}W)^{-1}W^{\T}F
= \Pi_{\mc{R}(W)} F = A_{1}(W) F.
\end{equation}
Finally, optimizing the assignments $W$ in order to obtain the best prediction of the data itself, gives with $A_{1}(W)^{2}=A_{1}(W)$
\begin{equation}\label{eq:trSimLk}
 	 \operatornamewithlimits{\arg \min}_{W \in \mc{W}^{c}} \frac{1}{2} \| A_{1}(W)F - F \|_{F}^2
	= \operatornamewithlimits{\arg \max}_{W \in \mc{W}^{c}} \tr\big(A_{1}(W)FF^{\T}\big),
\end{equation}
and the initial assumption of Euclidean data can be dropped by replacing the Euclidean Gram matrix $F F^{\T}$ by a general inner product matrix $K_{\mc{F}}$ corresponding to the embedding of the data into a reproducing kernel Hilbert space.

As a result, the relaxation of problem \eqref{eq:partitioning-problem} due to \cref{def:SelfInfluenceMatrix} can be interpreted as finding the best $c$-dimensional subspace $\mc{R}(W)$ spanned by the (soft) indicator vectors of the $c$ clusters (column vectors of $W$) for self-prediction of the given data.

Another related `spectral' interpretation results from rewriting the objective in the form
\begin{subequations}\label{eq:A1-Rayleigh}
\begin{align}
 \tr\big(A_{1}(W)K_{\mc{F}}\big)
 &= \tr\big(W(W^{\T}W)^{-1}W^{\T} K_{\mc{F}}\big)
 = \tr\big((W^{\T}W)^{-\frac{1}{2}}W^{\T} K_{\mc{F}} W (W^{\T}W)^{-\frac{1}{2}}\big) \\
 &= \tr\big(Y(W)^{\T} K_{\mc{F}} Y(W)\big),\qquad
 Y(W) = W (W^{\T}W)^{-\frac{1}{2}}.
\end{align}
\end{subequations}
We conclude from \cref{prop:propSelfAssignments} that $Y(W)$ varies over the compact Stiefel manifold,
\begin{equation}\label{eq:Stiefel}
Y(W) \in \mrm{St}(c,n)=\{X \in \R^{n \times c} \colon X^{\T}X=I_c\},
\end{equation}
and that the objective \eqref{eq:A1-Rayleigh} is the Rayleigh quotient whose maximizer $Y$ spans the subspace of the $c$ dominant eigenvectors of $K_{\mc{F}}$
\cite[Ch.~1]{Helmke:1996aa}. Note, however, that $Y(W)$ cannot vary freely but is parameterized by $W \in \mc{W}^{c}$.

\subsubsection{Comparison of $A_{0}(W)$ and $A_{1}(W)$}\label{sec:Comparison-A-0-1}
$A_{1}(W)$ differs from $A_{0}(W)$ in that the normalizing matrix $C(W)$ of the former self-assignment matrix is replaced by $W^{\T} W$ in the latter. A consequence due to \cref{prop:propSelfAssignments} is that $A_{1}(W)$ is no longer doubly stochastic and may have negative entries. Hence the probabilistic interpretation \eqref{eq:A0-factorization} of the factorization of $A_{0}(W)$ no longer holds for $A_{1}(W)$. On the other hand, unlike $A_{0}(W)$, matrix $A_{1}(W)$ has fixed rank $c$ and embeds data in a corresponding subspace.

Formulas \eqref{eq:meanPrototypes-Euclidean} and \eqref{eq:basicLSQ} for the formation of latent prototypes (Euclidean case) are the same when using $A_{0}(W)$ or $A_{1}(W)$, up to the different normalizing matrices.  And how these prototypes are used to represent the data is made explicit by \cref{fig:GraphAW} and \cref{eq:basicLSQFit}, respectively. Both matrices $A_{0}(W)$ and $A_{1}(W)$ are equivalent for \textit{labelings} $W \in \mc{W}_{\ast}^{c}$. What labelings are computed, however,  depends on the self-assignment flow (\cref{sec:selfAssignmentSection}) and hence on the parameter $s \in [0,1]$.

\subsubsection{Cluster-Confusion Matrix}
\label{sec:InterClusterConfusionMatrix}
Using \eqref{eq:posterior} and \eqref{eq:likelihood} the entries of the cluster-confusion matrix \eqref{eq:Def_BW} take the form
\begin{equation}\label{eq:cluster-confusion-entry}
B_{j,l}(W):=\sum_{i \in \mc{I}} P(l|i)Q(i|j)
= \big(C(W)^{-1} W^{\T} W \big)_{j,l},\qquad
j,l \in \mc{J}.
\end{equation}
This expression may be interpreted as probability that clusters $\mc{I}_j$ and $\mc{I}_l$ are connected (soft partition), as opposed to the case of integral assignments (labelings) $W \in \mc{W}_{\ast}^{c}$, in which case $B(W)=I_{c}$ and all clusters are disjoint (hard partition).

\section{Self-Assignment Flows}\label{sec:selfAssignmentSection}

In this section, we generalize the assignment flow \eqref{eq:assignment-flow} to the unsupervised scenario discussed in \cref{sec:selfAssignments}. Generalizing the likelihood map \eqref{schnoerr-eq:def-Li} is the major step (\cref{sec:Generalized-L}). The remaining components of the assignment flow remain unchanged, except for starting the flow at the perturbed barycenter $W(0)$ of the assignment manifold -- see \cref{eq:self-assignment-flow-a} below -- in order to break the symmetry of uniform label assignments through the data, in the absence of labels and any prior information (\cref{sec:Self-Assignment-Flows}). Next, we complement in \cref{sec:Self-Supervision} the interpretations of the relaxations underlying the self-assignment flow (\cref{sec:Relaxation-Interpretation}) and show that the latent prototypes determined by the flow maximize class separability. Finally, numerical aspects are discussed in \cref{sec:Geometrical-Integration}.

\subsection{Generalized Likelihood Map}\label{sec:Generalized-L}
In the supervised case, for a given distance matrix $D_{\mc{F}}$ \eqref{eq:def-distance-vector}, \textit{local} label assignment is simply achieved by determining separately the smallest component of the vectors $D_{\mc{F};i}$, for every vertex $i \in \mc{I}$. This corresponds to solving
\begin{equation}
\min_{W \in \mc{W}}\tr(D_{\mc{F}} W^{\T})
\end{equation}
and the likelihood map \eqref{schnoerr-eq:def-Li} lifts the scaled negative \textit{gradient} of this objective function to $\mc{S}$. In view of problem \eqref{eq:partitioning-problem} and the family of self-assignment matrices due to \eqref{eq:def_SelfAssignmentMatrix}, a natural approach to generalize this supervised set-up to the unsupervised case is to consider the problem
\begin{subequations}\label{eq:def-Es}
\begin{align}
\max_{W}\; E_{s}(W) &\quad\text{subject to}\quad
W \in \begin{cases}
\mc{W}, &\text{if}\; s=0 \\
\mc{W}^{c}, &\text{if}\; s \in (0,1]
\end{cases}
\label{eq:def-Es-a} \\ \label{eq:def-Es-b}
E_{s}(W) &= \tr\big(K_{\mc{F}} A_{s}(W)\big)
\end{align}
\end{subequations}
and to replace $-D_{\mc{F}}$ in the likelihood map by the gradient $\partial E_{s}(W)$.
For $s=0$ and $s=1$, respectively, we have
\begin{subequations}
\begin{align} \label{eq:grad-E0}
\partial E_{0}(W)
&= 2K_{\mc{F}} W C(W)^{-1} - \eins_{n} \diag\big(C(W)^{-1} W^{\T} K_{\mc{F}} W C(W)^{-1}\big)^{\T},
\\ \label{eq:grad-E1}
\partial E_{1}(W)
&= 2 \big( I_n - A_{1}(W) \big) K_{\mc{F}} W(W^{\T}W)^{-1}.
\end{align}
\end{subequations}
In order to substantiate this approach, we interpret these gradients using the concepts from \cref{sec:Relaxation-Interpretation}. For illustration, let $K_{\mc{F}}=F F^{\T}$ be a Euclidean inner product matrix, with $F$ given by \eqref{eq:def-F-matrix}. Equation \eqref{eq:meanPrototypes-Euclidean} determining the latent prototypes as averages weighted by the likelihood $Q(i|j)$, \cref{eq:likelihood}, reads
\begin{equation}\label{eq:def-FStar}
f_{j}^{\ast} = \sum_{i \in \mc{I}} \big(C(W)^{-1} W^{\T} \big)_{j,i} f_{i}
= \big(C(W)^{-1} W^{\T} F \big)_{j},\qquad
(F^{\ast})^{\T} = F^{\T} W C(W)^{-1}.
\end{equation}
We have
\begin{subequations}
\begin{align}
\partial E_{0}(W)
&= 2 F F^{\T}  W C(W)^{-1}-\eins_{n}\diag\big( (F^{\T} W C(W)^{-1})^{\T} F^{\T}  W C(W)^{-1} \big)^{\T}
\\
&= 2 F (F^{\ast})^{\T} - \eins_{n} \diag(F^{\ast} (F^{\ast})^{\T})^{\T},
\\
\big(\partial E_{0}(W)\big)_{i,j}
&= 2\la f_{i},f^{\ast}_{j}\ra - \|f^{\ast}_{j}\|^{2}
= -\|f_{i}-f^{\ast}_{j}\|^{2} + \|f_{i}\|^{2}, \label{eq:grad-euclid-A0}
\end{align}
\end{subequations}
where the prototypes $f^{\ast}_{j} = f^{\ast}_{j}(W)$ depend on $W$. The last term on the r.h.s.~of \eqref{eq:grad-euclid-A0} does not depend on $j$ and hence is factored out -- cf. \cref{rem:exp-T0} -- when lifting the vector \eqref{eq:grad-euclid-A0} to the assignment manifold. Hence, we ignore this term
and generalize the likelihood map \eqref{schnoerr-eq:def-Li} to
\begin{equation}\label{eq:def-L0i}
L_{0;i}(W_{i})
= \exp_{W_{i}}\Big(\frac{1}{\rho}\partial E_{0}(W)_{i}\Big)
= \exp_{W_{i}}\Big(- \frac{1}{\rho} \big(\|f_{i}-f^{\ast}_{j}\|^{2}\big)_{j \in \mc{J}} \Big),
\end{equation}
which amounts to replace the distance vectors $D_{\mc{F};i}$, for \textit{given} prototypes in the supervised case, by a \textit{varying} squared distance depending on \textit{latent} prototypes, that emerge when the assignments $W(t)$ follow the assignment flow.

Now let $s=1$. We return to the `spectral' interpretation in terms of \eqref{eq:A1-Rayleigh} and \eqref{eq:Stiefel}. The Riemannian gradient of the Rayleigh quotient $E_{1}(Y)=\tr(Y^{\T} K_{\mc{F}} Y)$ over the compact Stiefel manifold \eqref{eq:Stiefel} equipped with the standard Euclidean metric reads \cite[Sec.~4.8]{Absil2009})
\begin{equation}\label{eq:Rgrad-E1}
 \textnormal{grad} E_{1}(Y) = 2 (I_{n}- Y Y^{\T})K_{\mc{F}} Y \quad \in \quad T_{Y} \mrm{St}(c,n).
\end{equation}
Next we relate the Euclidean gradient \eqref{eq:grad-E1} to the Riemannian gradient \eqref{eq:Rgrad-E1}, taking into account the parametrization $Y(W) \in  \mrm{St}(c,n)$ in \eqref{eq:A1-Rayleigh}, to obtain
\begin{subequations}
\begin{align}
\partial E_{1}(W)
&= 2 \big( I_n - A_{1}(W) \big) K_{\mc{F}} W(W^{\T}W)^{-1}
\\
&= 2 \big( I_n - Y(W)Y(W)^{\T} \big) K_{\mc{F}} Y(W) (W^{\T}W)^{-\tfrac{1}{2}}
\\
&= \textnormal{grad} E_{1}(Y(W)) (W^{\T}W)^{-\tfrac{1}{2}}. \label{eq:relate-Stiefel-c}
\end{align}
\end{subequations}
Since the second factor in \eqref{eq:relate-Stiefel-c} is non-singular, we conclude
\begin{equation}\label{eq:grad-Euclidean-Stiefel}
\partial E_{1}(W) = 0 \quad \Leftrightarrow \quad \textnormal{grad} E_{1}(Y(W)) = 0.
\end{equation}
In words, $W \in \mc{W}^{c}$ is a stationary point if and only if $Y(W) \in \mrm{St}(c,n)$ is a stationary point of the Rayleigh quotient over the compact Stiefel manifold. Consequently the gradient $\eqref{eq:grad-E1}$ is directly linked to the search direction on the compact Stiefel manifold, in order to determine the invariant subspace corresponding to the $c$ dominant eigenvectors of $K_{\mc{F}}$.

As a consequence of these considerations, we define
for arbitrary $s \in [0,1]$ the \textbf{generalized likelihood map} as
\begin{equation}\label{eq:def-Lsi}
L_{s;i}(W_{i})
= \exp_{W_{i}}\Big(\frac{1}{\rho}\partial E_{s}(W)_{i}\Big),
\end{equation}
with $E_{s}(W)$ given by \eqref{eq:def-Es}.

\subsection{Self-Assignment Flows}\label{sec:Self-Assignment-Flows}
Besides replacing the likelihood map \eqref{schnoerr-eq:def-Li} by the generalized likelihood map \eqref{eq:def-Lsi}, no further changes are required in order to generalize the assignment flow \eqref{eq:assignment-flow} to the unsupervised case (cf. \cref{fig:AF-supervised,fig:SAF}), except for the initialization which cannot both start at the barycenter and break the symmetry, without any prior information. This will be achieved by taking a small perturbation of the barycenter as initial point.

Accordingly, we define the one-parameter family of \textbf{self-assignment flows (SAFs)}
\begin{subequations}\label{eq:self-assignment-flow}
\begin{align}
\dot W = R_{W}S(W),\qquad W(0)&=\exp_{\eins_{\mc{W}}}(-\veps D_{\mc{F},0}),\quad 0 < \veps \ll 1
\label{eq:self-assignment-flow-a} \\ \label{eq:self-assignment-flow-b}
W(t) &\in \begin{cases}
\mc{W},&\text{if}\;s=0, \\
\mc{W}^{c},&\text{if}\; s \in (0,1].
\end{cases}
\end{align}
\end{subequations}
The matrix $D_{\mc{F},0}$ is computed using the given data $\mc{F}_{n}$ as explained in \cref{sec:greedy-clustering}. The flow $W(t)$ is restricted to the submanifold of full-rank assignments if $s>0$.

\Cref{prop:propSelfAssignments} and \cref{eq:CW=WTW} yield the following.
\begin{corollary}\label{cor:Wt}
Let $W(t)$ solve \eqref{eq:self-assignment-flow}. Then, for any $t \geq 0$,
\begin{enumerate}[(i)]
\item the self-affinity matrix $A_{0}\big(W(t)\big)$ is doubly stochastic and completely positive, if $s=0$;
\item the self-influence matrix $A_{1}\big(W(t)\big)$ is iso-spectral, i.e.~its eigenvalues satisfy $\lambda_{1} = \dotsb = \lambda_{c} = 1$ and $\lambda_{n-c}=\dotsb=\lambda_{n}=0$, if $s=1$.
\item
$A_{0}\big(W(T)\big)=A_{1}\big(W(T)\big)$ if $W(T) \in \mc{W}_{\ast}^{c}$.
\end{enumerate}
\end{corollary}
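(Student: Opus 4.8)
The plan is to obtain all three assertions as direct consequences of \cref{prop:propSelfAssignments} together with \eqref{eq:CW=WTW}. The only ingredient not contained in the proposition itself is that the flow $W(t)$ remains in the correct feasible set at every time, and this is precisely what \eqref{eq:self-assignment-flow-b} guarantees: $W(t) \in \mc{W}$ when $s=0$ and $W(t) \in \mc{W}^{c}$ when $s \in (0,1]$.

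For (i), I would first invoke \eqref{eq:self-assignment-flow-b} with $s=0$ to ensure $W(t) \in \mc{W}$ for all $t \geq 0$, so that $A_{0}\big(W(t)\big)$ is well-defined by \cref{def:SelfAffinityMatrix}. Properties (d) and (e) of \cref{prop:propSelfAssignments}, which hold for every $W \in \mc{W}$, then give immediately that $A_{0}\big(W(t)\big)$ is doubly stochastic and completely positive. For (ii), \eqref{eq:self-assignment-flow-b} with $s=1$ keeps $W(t)$ in the full-rank set $\mc{W}^{c}$, so $\big(W(t)^{\T}W(t)\big)^{-1}$ exists and $A_{1}\big(W(t)\big)$ is defined by \cref{def:SelfInfluenceMatrix}. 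By \cref{prop:propSelfAssignments}(g) this matrix equals the orthogonal projection $\Pi_{\mc{R}(W(t))}$ onto the $c$-dimensional column space $\mc{R}\big(W(t)\big)$; such a projection has eigenvalue $1$ with multiplicity $c$ and eigenvalue $0$ with multiplicity $n-c$, which are exactly properties (i)--(k). Since this spectrum is the same for every point of $\mc{W}^{c}$, the matrices $A_{1}\big(W(t)\big)$ are iso-spectral across $t$ (property (h)).

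For (iii), the key observation is \eqref{eq:CW=WTW}: as soon as $W(T)$ is an integral labeling, $W(T) \in \mc{W}_{\ast}^{c}$, the two normalizing matrices coincide, $C\big(W(T)\big) = W(T)^{\T}W(T)$. Substituting this identity into the factorizations \eqref{eq:Def_AW} and \eqref{eq:def_SelfInfluenceMatrix} yields $A_{0}\big(W(T)\big) = W(T) C(W(T))^{-1} W(T)^{\T} = W(T)\big(W(T)^{\T}W(T)\big)^{-1} W(T)^{\T} = A_{1}\big(W(T)\big)$. Equivalently, one may note that the geodesic normalizer $\gamma_{s}$ of \eqref{eq:def-gamma-s} collapses to the constant matrix $C\big(W(T)\big) = W(T)^{\T}W(T)$ for every $s \in [0,1]$, so that $A_{s}\big(W(T)\big)$ is independent of $s$.

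I do not expect a genuine obstacle here, since the corollary is essentially a bookkeeping combination of already established facts. The one point requiring care is to verify feasibility \emph{before} applying the proposition: for $s=1$ the rank constraint $\rank\big(W(t)\big)=c$ must be known to persist along the flow so that the inverse $\big(W(t)^{\T}W(t)\big)^{-1}$, and hence $A_{1}\big(W(t)\big)$, stays well-defined for all $t$ — which is exactly the content of \eqref{eq:self-assignment-flow-b} and thus may be assumed.
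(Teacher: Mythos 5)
Your proposal is correct and follows exactly the route the paper intends: the paper derives \cref{cor:Wt} in one line from \cref{prop:propSelfAssignments} together with \eqref{eq:CW=WTW}, and your argument simply spells out which properties of the proposition are used for (i) and (ii) and how \eqref{eq:CW=WTW} collapses the normalizers for (iii). The added remark on feasibility of $W(t)$ along the flow via \eqref{eq:self-assignment-flow-b} is a sensible, if implicit, point of care and does not constitute a deviation.
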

Property $(iii)$ relates to the fact that $W(t)$ solving \eqref{eq:self-assignment-flow} approaches a labeling $W(T) \in \mc{W}_{\ast}^{c}$ for sufficiently large $T$ after a trivial rounding step. We point out, however, that solving \eqref{eq:self-assignment-flow} generally yields different paths $W(t),\,t \in [0,T]$ depending on $s \in [0,1]$ and corresponding to the different relaxations, as discussed in \cref{sec:Relaxation-Interpretation}. Once a labeling $W(T) \in \mc{W}_{\ast}^{c}$ has been computed, using any $s \in [0,1]$, the solution is a local optimum of the partitioning problem \eqref{eq:partitioning-problem}. This is what \cref{cor:Wt}(iii) says.
\begin{remark}[parameters of the self-assignment flow]
We briefly explain the role of each parameter involved in order to point out, that there is essentially a single user parameter only, that has to be specified.
\begin{itemize}
\item
Any small positive number $\veps>0$ determining $W(0)$ by \eqref{eq:self-assignment-flow-a} will do in practice.
\item
The parameter $s\in [0,1]$ of \eqref{eq:self-assignment-flow-b} is chosen depending on the application: As \cref{fig:Seastar} illustrates, and as a consequence of the interpretations of the self-affinity matrix $A_{0}(W)$ (\cref{sec:InterSelfAffinityMatrix}) and the self-influence matrix $A_{1}(W)$ (\cref{sec:InterSelfInfluenceMatrix}), small values $s$ increase the sensitivity of the self-assignment flow to the spatial structure of the partition of the underlying graph $\mc{G}$, whereas large values $s$ make the approach more sensitive with respect to the quantization of the feature space $\mc{F}$ in terms of the prototypes, that are implicitly determined by the self-assignment flow (\cref{sec:Recovery-Prototypes}).
\item
Parameter $\rho$ of the likelihood map \eqref{schnoerr-eq:def-Li} merely normalizes the scale of the input similarity matrix $K_{\mc{F}}$, that can be small or large depending on the particular data under consideration.
\item
The fixed stepsize $h>0$ used in this paper for geometric numerical integration (\cref{sec:Geometrical-Integration}) can be determined automatically if a more advanced numerical scheme with adaptive stepsize control is employed, as worked out by \cite{Zeilmann:2018aa}.
\item
Parameter $c \in \N$ merely specifies an \textit{upper bound} of the number of clusters, whereas the \textit{resulting effective} number of clusters $\hat c \leq c$ does \textit{not} need to be specified beforehand (see \cref{def:hat-c} below).
\end{itemize}

As a result, the only parameter that critically influences the result returned by the self-assignment flow is the \textit{size} $|\mc{N}_{i}|$ of the neighborhoods \eqref{eq:def-Ni}, that determines the \textit{scale} of geometric spatial regularization \eqref{eq:def-Si} and, in turn, the number $\hat c$ of effective clusters. \Cref{sec:Experiments} provides numerous illustrations.
\end{remark}
%

%

\subsection{Self-Assignment Performs Self-Supervision}
\label{sec:Self-Supervision}

We interpret the assignment flow from another point of view that complements the interpretations discussed in \cref{sec:Relaxation-Interpretation}.

In \cref{sec:Recovery-Prototypes}, we showed that running the assignment flow entails learning of latent prototypes that can be explicitly recovered if weighted means in the data space are well-defined and computationally feasible.
Let us temporarily adopt the Euclidean situation \eqref{eq:meanPrototypes-Euclidean}. With these recovered prototypes at hand, we get back to \cref{sec:scatter-matrices} and ask how our approach  relates to the \textit{supervised} situation where the quality of the clustering can be assessed by objectives like \eqref{eq:Sw-1-Sb}. Assuming a labeling $W = W(T) \in \mc{W}_{\ast}^{c}$ has been determined, let the recovered prototypes $f^{\ast}_{j},\,j \in \mc{J}$ play the role of the empirical means $m_{j},\, j \in \mc{J}$. We compute in terms of the data matrix $F$ \eqref{eq:def-F-matrix} the quantities \eqref{eq:def-m-k}
\begin{subequations}
	\begin{align}
	P_{j} &= \frac{1}{n} \la W^{j}, \eins_n \ra =  \frac{1}{n} |\mc{I}_j|,\qquad j \in \mc{J}
	&&(\text{prior probabilities})
	\\
	f^{\ast}_{j} &= F^{\T}\big(W C(W)^{-1}\big)^{j},\qquad j \in \mc{J}
	&&(\text{class-conditional mean vectors})
	\\
	f^{\ast} &=  \frac{1}{n} F^{\T} \eins_{n},
	&&
	(\text{mean vector})
	\end{align}
\end{subequations}
and in turn the scatter matrices \eqref{eq:def-scatter-matrices}
\begin{subequations}
	\begin{align}
	S_{t} &= \frac{1}{n} \sum_{i \in \mc{I}} (f_{i}-f^{\ast})(f_{i}-f^{\ast})^{\T}
	= \frac{1}{n} F^{\T} \big(I - \frac{1}{n} \eins_n \eins_n^{\T} \big)F,
	\\
	S_{w}(W) &= \frac{1}{n}\sum_{j \in \mc{J}}\sum_{i \in \mc{I}_j}(f_{i}-f^{\ast}_{j})(f_{i}-f^{\ast}_{j})^{\T}
	= \frac{1}{n} F^{\T} \big(I - A_{0}(W) \big)F,
	\\
	S_{b}(W) &= \sum_{j \in \mc{J}} P_{j} (f^{\ast}_{j}-f^{\ast})(f^{\ast}_{j}-f^{\ast})^{\T}
	= \frac{1}{n} F^{\T} \big(A_{0}(W) - \frac{1}{n} \eins_n \eins_n^{\T} \big)F.
	\end{align}
\end{subequations}
Regarding the dependency on $W$, we observe that the within-class scatter matrix $S_{w}(W)$ involves the term $F^{\T} A_{0}(W) F$ and the between-class scatter $S_{b}(W)$ the term $-F^{\T} A_{0}(W) F$. Hence, by minimizing the objective \eqref{eq:partitioning-problem}, we \textit{simultaneously} minimize $\tr(S_w)$ and maximize $\tr(S_b)$:
\begin{equation}
\operatornamewithlimits{\arg \min}_{W} \ \tr\big(S_w(W)\big) \quad \Leftrightarrow \quad \operatornamewithlimits{\arg \max}_{W} \ \tr\big(S_b(W)\big) \quad \Leftrightarrow \quad \operatornamewithlimits{\arg \max}_{W} \ \tr \big( A_{0}(W)FF^{\T} \big).
\end{equation}
We conclude that the latent prototypes determined by the self-assignment flow turns a completely unsupervised scenario into a supervised one, in agreement with established measures for class separability like \eqref{eq:Sw-1-Sb}. This interpretation also remains valid when the relaxation with $s=1$ and objective \eqref{eq:trSimLk} is used to compute a labeling $W$, due to \cref{cor:Wt}(iii).

Moreover, since the approach only depends on the inner product matrix $F F^{\T}$, it generalizes to data embeddings into a reproducing kernel Hilbert space and a corresponding data affinity matrix $K_{\mc{F}}$ with entries \eqref{eq:K-mcF-entries}.


\subsection{Geometric Numerical Integration}\label{sec:Geometrical-Integration}
We distinguish the two cases \eqref{eq:self-assignment-flow-b}.

\vspace{0.25cm}
\noindent
\textbf{Case $s=0$.} We directly apply the methods studied by \cite{Zeilmann:2018aa}. To make this paper self-contained, we merely state the simplest scheme, the geometric Euler method. This explicit scheme with fixed step-size $h>0$ reads
\begin{equation}\label{eq:geom-integration}
W_{i}^{(k+1)} = \Exp_{W_{i}^{(k)}}\big(h R_{W_{i}^{(k)}} S(W^{(k)}) \big), \quad i \in \mc{I}.
\end{equation}
It ensures that the self-assignment flow \eqref{eq:self-assignment-flow-a} evolves properly on the assignment manifold $\mc{W}$. See \cite{Zeilmann:2018aa} for more advanced numerical schemes that run `automatically' through adaptive stepsize control. The iteration \eqref{eq:geom-integration} stops when the average entropy of the assignments $W^{(K)}$ drops at some iteration $k=K$ below the predefined threshold $10^{-3}$, which indicates (almost) unique label assignments and hence stationarity of the flow evolution. Then numerical integration is terminated and a labeling $W \in \mc{W}_{\ast}^{\hat c},\, \hat c \leq c$, is determined using $W^{(K)}$ in a trivial postprocessing step by selecting the most likely label for each row $W^{(K)}_{i},\, i \in \mc{I}$ and removing the $c-\hat c$ zero-columns (corresponding to empty clusters) from the resulting labeling $W\in \mc{W}_{\ast}^{\hat c}$.
\begin{definition}[\textbf{Effective Number $\hat c$ of Clusters (Labels)}]\label{def:hat-c}
We call the just described number
\begin{equation}\label{eq:def-hat-c}
\hat c \leq c
\end{equation}
the \textit{effective number of clusters or labels}, respectively. It is determined by the homogeneity of the data $\mc{F}_{n}$ and by the scale
\begin{equation}\label{eq:scale}
|\mc{N}_{i}|,\quad i \in \mc{I} \qquad\qquad(\textbf{scale})
\end{equation}
at which regularization is performed by the assignment flow through the similarity map \eqref{eq:def-Si}. We denote the corresponding index set of labels by
\begin{equation}\label{eq:hat-J}
\hat{\mc{J}} \subset \mc{J},\qquad |\hat{\mc{J}}| = \hat c.
\end{equation}
\end{definition}
\begin{remark}
The assertions of \cref{cor:Wt} as well as the considerations in \cref{sec:Self-Supervision} remain valid after replacing the upper bound $c$ of the number of prototypes (labels) and the corresponding index set $\mc{J}$ by $\hat c$ and $\hat{\mc{J}}$, respectively, according to \cref{def:hat-c}.
\end{remark}

\vspace{0.25cm}
\noindent
\textbf{Case $s=1$.} Integration of the self-assignment flow \eqref{eq:self-assignment-flow-a} restricted to the open submanifold $\mc{W}^{c}$ of full-rank assignments \eqref{eq:def-Wc} is more involved. Corresponding geodesics only locally exist on $\mc{W}$, i.e.~full-rank assignment matrices cannot be guaranteed during the numerical integration process \eqref{eq:geom-integration}. Clearly, if the data affinity matrix $K_{\mc{F}}$ has high rank (induced by heterogeneous data) and if the scale \eqref{eq:scale} for regularization is not chosen too large, a full-rank labeling $W \in \mc{W}^{c}$ may be returned by the self-assignment flow, that is well-defined in view of the relation \eqref{eq:grad-Euclidean-Stiefel}.

In order to handle other cases while still using the numerical scheme \eqref{eq:geom-integration} or more sophisticated ones \cite{Zeilmann:2018aa}, we simply replace the inverse normalizing matrix by its pseudo-inverse,
\begin{equation}\label{eq:WTW-pseudoinverse}
(W^{\T} W)^{-1} \quad\longleftarrow\quad
(W^{\T}W)^{\dagger}.
\end{equation}
Whenever this regularization of the normalizing matrix becomes `active', we extract the effective number $\hat c$ in a postprocessing step, as described above in the case $s=0$.

\section{Related Work and Discussion}\label{sec:Related-Work}
The literature on clustering is vast. We therefore restrict the discussion to few major methodological directions in the literature: Graph cuts and spectral relaxation (\cref{sec:Related-Work:Spectral}), discrete regularized optimal transport (\cref{sec:Discrete-OT}) and combinatorial optimization for graph partitioning (\cref{sec:Combinatorial-Optimization}).
\subsection{Graph Cuts and Spectral Relaxation}\label{sec:Related-Work:Spectral}
Summing up the weights (affinities) of edges that are cut provides a natural quality measure for graph partitioning. To avoid unbalanced partitions, such measures are normalized in various ways, and spectral relaxations of the resulting combinatorial optimization problem renders the computation of good suboptimal solutions feasible. We refer to \cite{Luxburg:2007aa} for a survey.

We focus on two basic balanced cut-criteria that can be expressed by the graph Laplacian
\begin{equation}
L_{\mc{F}} = D_{K,\mc{F}}-K_{\mc{F}},\qquad
D_{K,\mc{F}} = \Diag( K_{\mc{F}} \eins_n )
\end{equation}
and indicator vectors. The \textit{ratio-cut criterion }reads
\begin{equation}\label{eq:ratio-cut}
  \min_{U \in \R^{n \times c}}\, \tr( U^{\T} L_{\mc{F}} U ) \quad \textnormal{subject to} \quad  U \geq 0, \quad U^{\T} U = I_c,
\end{equation}
whereas the \textit{normalized-cut (Ncut) criterion} \cite{Shi2000} additionally uses the degree matrix $D_{K,\mc{F}}$ for normalization,
\begin{equation}\label{eq:normalized-cut}
  \min_{U \in \R^{n \times c}}\, \tr( U^{\T} L_{\mc{F}} U ) \quad \textnormal{subject to} \quad  U \geq 0, \quad U^{\T} D_{K,\mc{F}} U = I_c.
\end{equation}
Due to the conjunction of nonnegativity and orthogonality constraints, both problems \eqref{eq:ratio-cut} and \eqref{eq:normalized-cut} are difficult to optimize globally. \textit{Spectral relaxation} means to drop the element-wise nonnegativity constraint. Then the relaxed problems \eqref{eq:ratio-cut} and \eqref{eq:normalized-cut} amount to solving an eigenvalue problem and a generalized eigenvalue problem, respectively. The price to pay in either case is that the physical interpretation of $U$ as indicator variables is lost and must be recovered by an additional post-processing step, which is usually done by applying the classical k-means algorithm.

A direct relation to the proposed self-assignment flow is apparant in the case $s=1$. Substituting $Y=D_{K,\mc{F}}^{1/2}U$ in the spectral relaxation of \eqref{eq:normalized-cut} results in the problem
\begin{equation}
  \max_{Y \in \R^{n \times c}}\, \tr \big( Y^{\T} \tilde{K}_{\mc{F}} Y \big) \quad \textnormal{subject to} \quad  Y^{\T} Y = I_c,
\end{equation}
that is, the Rayleigh quotient of the \textit{normalized} affinity matrix $\tilde{K}_{\mc{F}} = D_{K,\mc{F}}^{-1/2} K_{\mc{F}} D_{K,\mc{F}}^{-1/2}$ has to be maximized over the compact Stiefel manifold \eqref{eq:Stiefel}. As already discussed for $s=1$ in connection with \eqref{eq:A1-Rayleigh}, assignments $W$ following the self-assignment flow parametrize points $Y(W) \in \mrm{St}(c,n)$ on the compact Stiefel manifold that maximize the Rayleigh quotient: \Cref{eq:relate-Stiefel-c} shows that the driving force of the self-assignment flow (generalized likelihood map) is directly linked to the gradient ascent of the Rayleigh quotient over the compact Stiefel manifold. Finally, when the numerical integration of the self-assignment flow terminates, then the resulting labeling $W \in \mc{W}_{\ast}^{c}$ together with \eqref{eq:CW=WTW} ensures $Y(W) \geq 0$. Hence, after re-substitution, $U(W)=D_{K,\mc{F}}^{-1/2}Y(W)$ is directly feasible for the original problem \eqref{eq:normalized-cut} and hence no `projection' by $k$-means is required as post-processing.

The common way to take into account \textit{spatial regularization} in spectral clustering is to augment given features by spatial coordinates. However, this strategy suffers from a conceptual shortcoming, since augmentation makes the \textit{same} feature vector differ when it is observed at two different spatial locations. In contrast, the self-assignment flow performs unbiased spatial regularization by smooth  geometric averaging and recognizes closeness of features no matter \textit{where} they are observed.

\subsection{Discrete Parametrized and Regularized Optimal Transport}
\label{sec:Discrete-OT}
The theory of optimal transport \cite{Villani:2009aa,Santambrogio:2015aa} has become a major modeling framework for data analysis. Here we focus on discrete optimal transport and computational aspects \cite{Burkhard:1999aa,Peyre:2018aa}.

We consider the case $s=0$ and the self-affinity matrix $A_{0}(W)$. Since $A_{0}(W)$ is doubly stochastic (\cref{prop:propSelfAssignments}), maximizing the objective $E_{0}(W)$ \eqref{eq:def-Es-b} may be interpreted as a discrete optimal transport problem with cost matrix $K_{\mc{F}}$ and uniform marginal measures \eqref{eq:P-uniform}. These marginals correspond to the data $\mc{F}_{n}$ and a copy of the data, respectively, resulting in data self-assignment as discussed in \cref{sec:InterSelfAffinityMatrix}.

For further interpretation, we consider the Euclidean case $K_{\mc{F}}=F F^{\T}$. Inserting the explicit form \eqref{eq:Def_AW} of $A_{0}(W)$ into the objective $E_{0}(W)$ and using \eqref{eq:def-FStar}, we obtain
\begin{equation}
E_{0}(W) = \tr(K_{\mc{F}} W C(W)^{-1} W^{\T})
= \tr(W F^{\ast} F^{\T}).
\end{equation}
Maximizing this objective function reveals what this problem relaxation actually means: A linear assignment problem in terms of the assignment matrix $W$ with \textit{varying} inner product matrix $F^{\ast}(W) F^{\T}$ as costs. Moreover, since $W \in \mc{W}$, we have a fixed marginal $W\eins_{c}=\eins_{n}$ and a the second marginal $W^{\T}\eins_{n}=\diag\big(C(W)\big)$ which is \textit{free}. Alltogether, a quite difficult problem is solved in terms of $W$: latent prototypes $F^{\ast}$ are formed by \textit{transporting} the uniform prior measure to the support of the respective clusters, so as to maximize the correlation $E_{0}(W)$ of the assignments $W$ and the inner product matrix $F^{\ast} F^{\T}$.

We point out a key property of the assignment flow that makes this approach work: It is the spatial regularization performed by the similarity map \eqref{eq:def-Si} that drives the entire process, in addition to the underlying geometry that makes $W(t)$ converge towards hard assignments (labelings). In fact, without spatial regularization, the self-affinity matrix $A_{0}(W)=I_{n}$ would maximize $E_{0}(W)$ assuming the similarity $k_{\mc{F}}(f_{i},f_{k})$ is maximal if $f_{i}=f_{k}$, which means that every given data point $f_{i}$ forms its own cluster. This trivial solution is ruled out, by construction, through the factorization with rank upper bounded by $c$ and  through geometric spatial averaging of the assignments. The corresponding \textit{scale} in terms of the sizes of the neighborhoods \eqref{eq:def-Ni}  determines how coarse or fine the spatial arrangement of the resulting clusters will be.

We informally summarize this discussion: Data self-assignment is defined by uniform marginal measures and a coupling measure parametrized by the assignment flow. Structure in the data is induced by imposing a low-rank constraint (factorization) on the coupling measure (transport plan) and through spatial regularization of the flow of assignments.

\subsection{Combinatorial Optimization}
\label{sec:Combinatorial-Optimization}

Zass and Shashua \cite{Zass:2005aa} studied the formulation of the clustering problem
\begin{subequations}\label{eq:combOptZass}
\begin{align}
\max_{Y \in \R^{n \times c}}&\tr(K_{\mc{F}} Y Y^{\T})
\qquad\text{subject to}
\\
&\mrm{(a)}\; Y \geq 0,\qquad
\mrm{(b)}\; \rank(Y)=c,\qquad
\mrm{(c)}\; Y^{\T} Y=I_{c},\qquad
\mrm{(d)}\; Y Y^{\T} \eins_{n}=\eins_{n}
\end{align}
\end{subequations}
in terms of the completely positive factorization $Y Y^{\T}$ and the constraints (a)--(d).
We notice that the orthogonality constraint (c) with respect to the columns of $Y$ implies (b), and that (a) together with (d) says that $Y Y^{\T}$ is doubly stochastic. The authors show that (a)--(d) imply that $W = YC(Y)^{\tfrac{1}{2}} \in \mc{W}_{\ast}^{c}$ is a labeling. This problem formulation differs from more classical conditions ensuring $W \in \mc{W}_{\ast}^{c}$ \cite[Lemma 2.1]{Rendl:1995aa},
\begin{equation}
W \geq 0,\qquad
W\eins_{c} = \eins_{n},\qquad
W^{\T}\eins_{n} = (n_{1},\dotsc,n_{c})^{\T},\qquad
\tr(W^{\T} W) = n,
\end{equation}
in that the cluster sizes (third constraint) do not have to be specified beforehand.

Regarding relaxation, the authors of \cite{Zass:2005aa} argue that the orthogonality constraint (c) is the weakest one. They propose a two-step procedure after dropping the constraints (b) and (c): approximation of the data similarity matrix $K_{\mc{F}}$ by a doubly stochastic matrix using the Sinkhorn iteration, followed by a gradient ascent iteration with stepsize control so as to respect the remaining constraints. The same set-up was proposed by \cite{Yang2016} except for determining a locally optimal solution by a single iterative process using DC-programming. Likewise, \cite{Kuang2015} explored symmetric nonnegative factorizations but ignored the constraint enforcing that $W W^{\T}$ is doubly-stochastic, which is crucial for cluster normalization.

Our approach uses the factorization $A_{s}(W)$ given by \eqref{eq:def_SelfAssignmentMatrix} instead of $Y Y^{\T}$ in \eqref{eq:combOptZass}. We can relate the two factorizations by identifying the factor
\begin{equation}
 Y(W)=W \gamma_{s}(W)^{-\tfrac{1}{2}},
\end{equation}
that is parametrized by assignments. While the rank constraint (b) and orthogonality constraint (c) are dropped for $s=0$, the constraints (a) and (d) are `built in' by construction of
\begin{equation}
 Y(W)=W C(W)^{-\tfrac{1}{2}} \, \geq \, 0,
\end{equation}
which results in a completely positive and doubly stochastic factorization. 

Conversely, for $s=1$, spectral properties are retained (cf.~\cref{sec:Related-Work:Spectral}). The orthogonality constraint (c) which implies the rank constraint (b), holds for
\begin{equation}
Y(W)=W (W^{\T}W)^{-\tfrac{1}{2}} \, \in \, \mrm{St}(c,n),
\end{equation}
whereas constraints (a) and (d) are ignored. This agrees with the observation that the constraints (a) and (c) cause the combinatorial difficulty of formulation \eqref{eq:combOptZass}, which renders them to be mutually exclusive ((a) ``physical quantity'' vs. (c) ``exclusive decisions''). However, by \cref{def:SelfAssignmentMatrix} of the one-parameter family of self-assignment matrices, we can smoothly interpolate between combinatorial and spectral properties.

Furthermore, optimization is achieved by a single \textit{smooth and continuous} process, the self-assignment flow \eqref{eq:self-assignment-flow}, which enables to apply numerous discrete numerical schemes \cite{Zeilmann:2018aa}, all of which respect the constraints.  Finally, geometric regularization within local neighborhoods of each vertex of the underlying graph through the similarity map \eqref{eq:def-Si} enforces the formation of `natural' clusters, whenever assigning the same label to close vertices is more likely to be correct.

\section{Experiments}\label{sec:Experiments}
In this section, we demonstrate and evaluate the performance of the proposed one-parameter family \eqref{eq:self-assignment-flow} of \textit{self-assignment flows (SAF)} for unsupervised data labeling, using various datasets and feature spaces (\cref{fig:inputImages}).

After describing specific details of the implementation (\cref{sec:Implementation-Details}), we report the study of the two model parameters in \cref{sec:Influence-Parameters}, and the influence of affinity matrix sketching for data reduction in a preprocessing step, to make learning from large data sets computationally feasible. In \cref{sec:Comparison-Other-Methods}, we compare our approach to various methods: basic clustering, normalized spectral cuts with spatial regularization, and partitioning using a variational decomposition of the piecewise constant Mumford-Shah model. We focus on an attractive application of our approach in \cref{sec:Experiments-Patches}: Learning patch dictionaries using the SAF based on a locally invariant distance function. Finally, as a sanity check, we report the application of the SAF to problem data on a graph from a domain that is unrelated to image analysis, to substantiate our claim that our approach applies to any data given on any graph, in principle.

\subsection{Implementation Details}\label{sec:Implementation-Details}
Throughout this paper, the SAF \eqref{eq:self-assignment-flow} was numerically integrated using the geometric explicit Euler scheme \eqref{eq:geom-integration} with step-size $h=0.1$, as described in \cref{sec:Geometrical-Integration}. For parameter values $s \in (0,1]$, we applied \eqref{eq:WTW-pseudoinverse} to avoid numerical problems when the effective number of clusters $\hat c < c$ (\cref{def:hat-c}) actually was smaller than $c$. The SAF with $s=0$ does not encounter any such problems, due to the different normalization involved in \eqref{eq:Def_AW}.
We adopted from \cite{Astrom:2017ac} the numerical renormalization step for the assignments with $\varepsilon=10^{-10}$, to avoid numerical issues for assignments very close to the boundary of the assignment manifold. Numerical integration was terminated when the average entropy of the assignments dropped below the threshold of $10^{-3}$, which indicates that the current iterate is very close to an almost unique assignment (labeling) $W^{(k)} \in \mc{W}_{\ast}^{\hat{c}}$.
\begin{figure}[htpb]
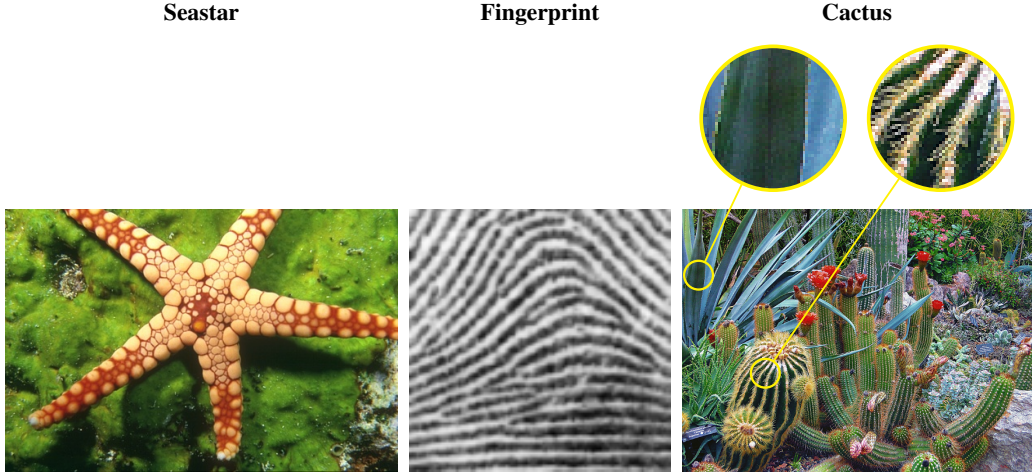

	\begin{center}
		\showExpInput{0.16}{0.2611}
	\end{center}
		\caption{Input image data used in the numerical experiments (\cref{fig:Seastar,fig:CactusI,fig:CactusII,fig:Fingerprint,fig:Fingerprint100}). Close-up views enable to compare the influence of model parameters on local image structure in comparison to alternative approaches from related work. Both the Euclidean RGB-space and locally invariant patch spaces were used as feature spaces. Regarding the latter, additional real image data are processed in \cref{fig:AbandonedTrain,fig:AbandonedEval}. The results of graph network data are depicted by \cref{fig:Football} in order to highlight that our approach more generally applies to data on graphs, beyond image feature data.}
		\label{fig:inputImages}
\end{figure}

Unless specified otherwise, the default value $\rho = 0.1$ (distance normalization in \eqref{schnoerr-eq:def-Li}) and uniform weights $w_{i,k}=1/|\mc{N}_{i}|$ \eqref{eq:weights-Omega-i} for assignment regularization were used in all experiments, with  neighborhoods $\mc{N}_{i}$ of equal size
\begin{equation}\label{eq:def-mcN}
|\mc{N}| := |\mc{N}_{i}|,\quad \forall i \in \hat{\mc{I}},
\end{equation}
for interior pixels $\hat{\mc{I}} \subset \mc{I}$.

Data $\mc{F}_{n}$ were embedded using the standard Gaussian kernel \eqref{eq:K-mcF-Gaussian} with parameter $\sigma=\sqrt{0.1}$, in order to compute the affinity matrix $K_{\mc{F}}$ \eqref{eq:K-mcF-entries}. For larger datasets, a sketch of $K_{\mc{F}}$ was used as described in \cref{sec:sketching}, with parameters $q=1$ and $\ell=100$ random samples drawn without replacement;  see \cref{sec:Influence-Parameters} for a validation. Finally, the initial value  $W(0)$ of \eqref{eq:self-assignment-flow-a} was chosen as small perturbation of the barycenter \eqref{eq:self-assignment-flow-a} with $\varepsilon = 10^{-2}$ and initial distance matrix $D_{\mc{F},0}$, computed with the inexpensive greedy $k$-center clustering algorithm, as explained in \cref{sec:greedy-clustering}.

\subsection{Influence of Model Parameters}\label{sec:Influence-Parameters}
The self-assignment flow (SAF) has three model parameters: The parameter $s$ of the self-assignment matrix $A_{s}(W)$ \eqref{eq:def-As}, the neighborhood size $|\mc{N}|$ controlling the \textit{scale} of regularization, and the upper bound $c$ on the effective number $\hat c$ of labels \eqref{eq:def-hat-c}.
\subsubsection{Influence of $s$, $|\mc{N}|$ and $c$} \label{sec:influence-s-mcN-c}
\Cref{fig:Seastar} shows both labelings and recovered prototypes below each panel, depending on $s$ and $|\mc{N}|$. We set $c=16$ which is sufficiently large, since $\hat c < c$ quickly happens when lowering $s$ even at the smallest scale of $3 \times 3$ pixels. $\hat c$ further drops down with larger scale. Regarding the parameter $s$, we observe:
\begin{description}
\item[Small $s$] Spatial regularization is more aggressively enforced, leading to compact codes in terms of smaller numbers $\hat c$ of prototypes.
\item[Large $s$] Distances in the feature space have more impact. Local image structure is better preserved at the cost of a larger number $\hat c$ of prototypes.
\end{description}
The second observation underlines the relation of the self-assignment flow, for $s=1$, to spatially regularized normalized cuts as worked out in \cref{sec:Related-Work:Spectral}.

\Cref{fig:Seastar} illustrates that depending on the application, the properties of the SAF can be continuously controlled by setting the parameter $s$, thanks to the geodesic interpolation \eqref{eq:def_SelfAssignmentMatrix}.

\begin{figure}[htpb]
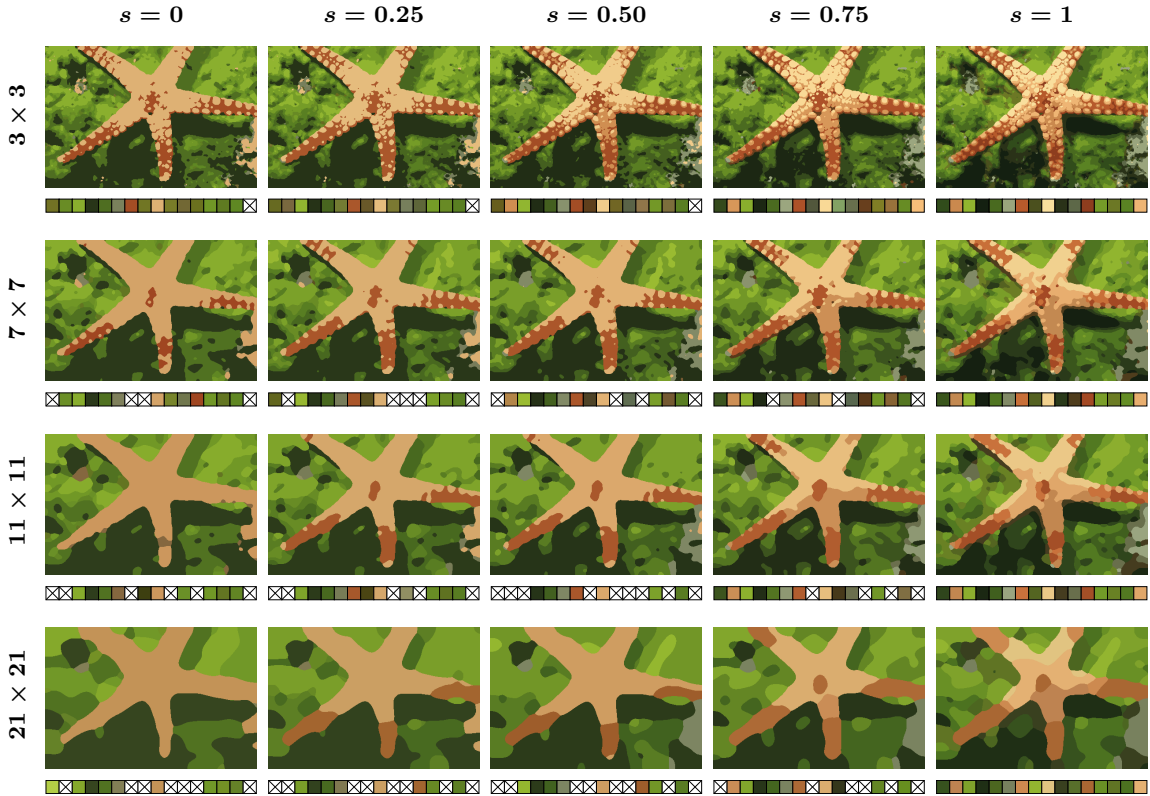

	\begin{center}
		\showExpRGBi{0.180}{12003}
	\end{center}
		\caption{Influence of the model parameters $s \in [0,1]$ parametrizing the SAF in terms of the self-assignment matrix \eqref{eq:def_SelfAssignmentMatrix}, the neighborhood size $|\mc{N}|$ controlling the scale of spatial regularization, and the effective number $\hat c \leq c=16$ of labels. Recovered prototypes are displayed below each labeling and aligned to each other (using linear assignment of the clusters) to ease visual comparison. Prototypes that `died out' are marked by a cross. We observe that due the geodesic interpolation \eqref{eq:def_SelfAssignmentMatrix}, the influence of spatial regularization (small $s$: compact image codes) relative to the influence of distances in the feature space (large $s$: preserving local image structure) can be continuously controlled.}
		\label{fig:Seastar}

\end{figure}
\subsubsection{Evolution of Cluster Sizes, Entropy, and Rank Lower Bound}
\Cref{fig:SeastarPlots} illustrates the \textit{evolution} of the SAF in terms of the following measurements.
\begin{description}
\item[Cluster sizes] For smaller values of $s$, more iterations are required for cluster formation. This conforms with the observation in \cref{sec:influence-s-mcN-c} that the SAF then promotes spatial regularization.
Conversely, larger values of $s$ yield more balanced (uniform) cluster sizes. This is consistent with the observation made in \cref{sec:influence-s-mcN-c} that, in this case, the SAF more carefully explores the feature space and preserves local image structure.
\item[Average entropy] The panels illustrate that the initial assignment is an $\veps$-perturbation of the barycenter on the assignment manifold, and that the termination criterion was reached in all experiments. In agreement with the preceding point, the SAF converges faster for larger values of $s$.
\item[Rank lower bound] The third row of \cref{fig:SeastarPlots} displays the lower bound $\tr \big( B(W^{(k)}) \big)$ of $\rank(W^{(k)})$ due to \cref{prop:BW}(d). After termination of the SAF, this lower bound becomes sharp at $W \in \mc{W}_{\ast}^{\hat{c}}$ and attains the number $\hat{c}$ of effective prototypes.
\end{description}

\begin{figure}[htpb]
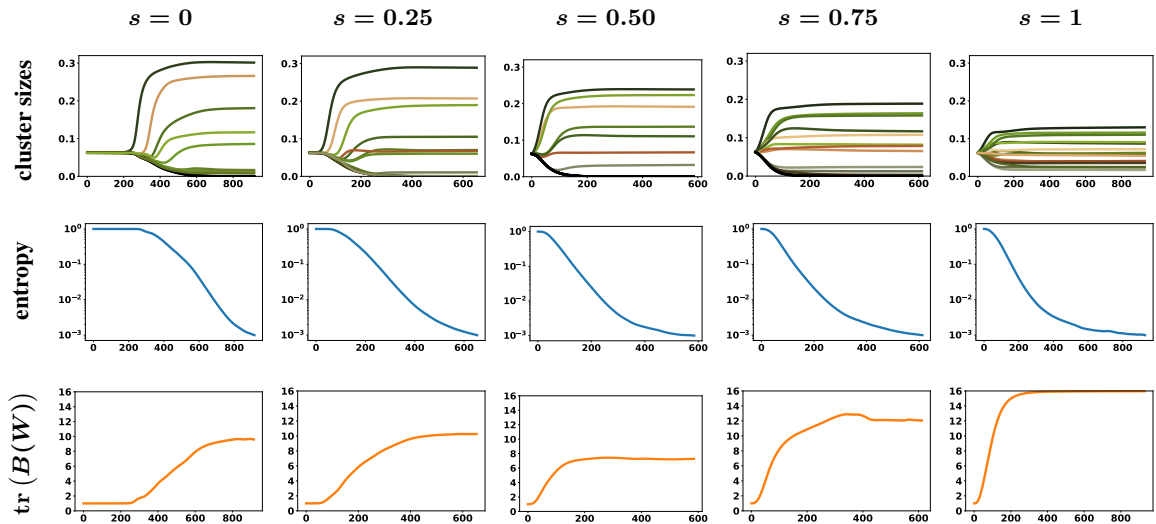

	\begin{center}
		\showExpRGBLabelHist{0.18}{12003}{11}
		\caption{Evolution of relative cluster sizes, average entropy and lower bound of $\rank(W^{(k)})$ as a function of the SAF, depending on the iterations $k$ for the experiment with $|\mc{N}|=11 \times 11$ depicted by \cref{fig:Seastar}. \textsc{top:} Smaller values of $s$ promote spatial regularization. Hence more iterations are required to form clusters. Larger values of $s$ yield more uniform cluster sizes which reflects the stronger influence of feature similarity and the preservation of local image structure. \textsc{center:} The average entropy illustrates the random initialization $\veps$-close to the barycenter and that the termination criterion is reached in all experiments. The entropy decays faster for larger values of $s$.  \textsc{bottom:} The lower rank bound due to \cref{prop:BW}(d) becomes sharp when the SAF terminates at some labeling $W \in \mc{W}_{\ast}^{\hat{c}}$ and attains the number $\hat c$ of effective labels.}
		\label{fig:SeastarPlots}
	\end{center}
\end{figure}
\subsubsection{Influence of Affinity Matrix Sketching}
We evaluate the influence of sketching the data affinity matrix $K_{\mc{F}}$ in a preprocessing step, as described in \cref{sec:sketching}, using the parameter value $q=1$ and varying sample sizes $\ell$.

To this end, we focused on the experiment with $s=0$, $|\mc{N}|=3\times 3$ depicted by \cref{fig:Seastar} and compared the labelings obtained with and without sketching $K_{\mc{F}}$. To handle the latter case where $K_{\mc{F}}$ requires $\approx 177$ GB of memory, we computed on the fly the entries  for every matrix-vector multiplication on GPUs using the software library KeOps\footnote{B. Charlier, J. Feydy, and J.-A. Glaunès, \textit{KeOps Kernel Operations on the GPU}, 2018, \\ https://www.kernel-operations.io/keops/index.html}, rather than holding the matrix in memory.

\Cref{fig:NystroemError} displays the relative error of different label assignments after sketching, depending on the sample size $\ell$, where $100\%$ corresponds to all $n = 321\times 481$ columns of $K_{\mc{F}}$. For each value $\ell$, 100 runs were made using different random seeds. \Cref{fig:NystroemError} displays the \textit{average} error along with the standard deviation. The corresponding curves show that $\ell=100$ samples, i.e.~merely $0.065\%$ of all data points, suffice to eliminate the effect of data reduction by sketching the input affinity matrix.

\begin{figure}[htpb]
	\begin{center}
		\includegraphics[width=0.8\textwidth]{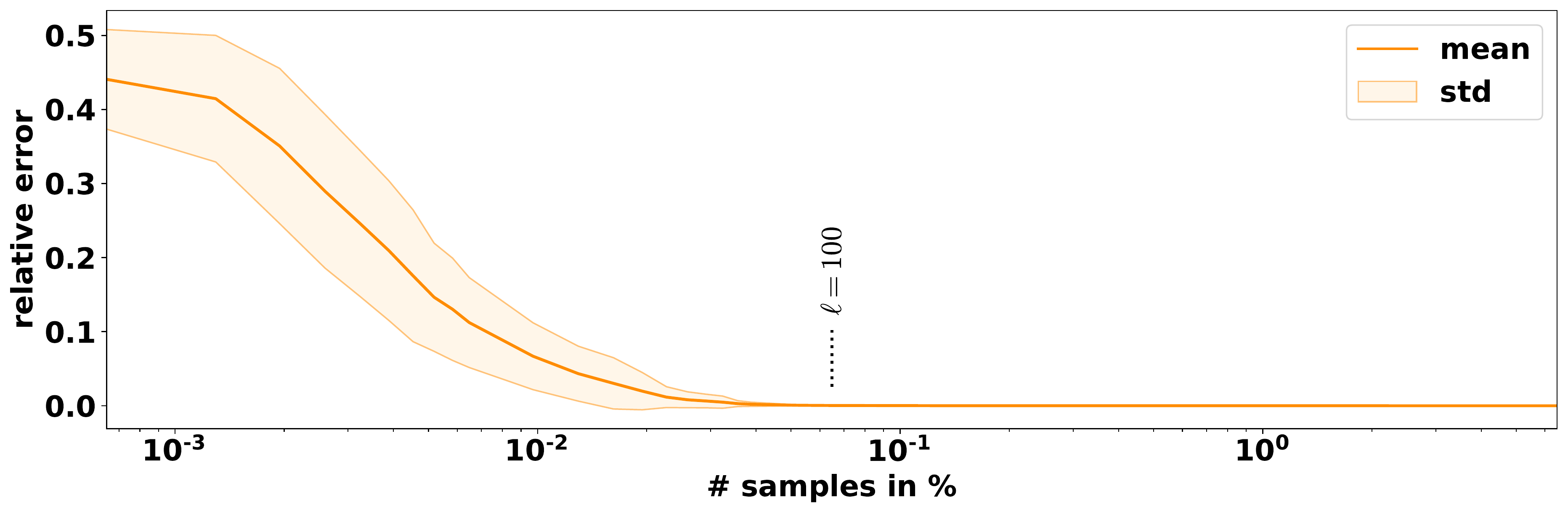}
		\caption{This plot shows the \textit{average relative labeling error} together with the standard deviation, that result from data reduction by sketching the data affinity matrix $K_{\mc{F}}$ in a preprocessing step. for SAF is approximated by the matrix sketching method (see \cref{sec:sketching}) in dependency of the number of sampled pixels $l$ represented in $\%$. The curves show that merely $0.065 \%$ of all data points (corresponding to $\ell=100$ randomly sampled columns of $K_{\mc{F}}$) suffice to eliminate the effect of data reduction. }
		\label{fig:NystroemError}
	\end{center}
\end{figure}

\subsection{Comparison to Other Methods}\label{sec:Comparison-Other-Methods}
We compared the SAF to the following methods:
\begin{description}
\item[Nearest neighbor clustering] \textbf{$k$-means} and \textbf{$k$-center} clustering (no spatial regularization), to show the influence of spatial regularization performed by the SAF on both labeling and prototype formation;
\item[AF] \textit{supervised} assignment flow \cite{Astrom:2017ac} with spatial regularization, using \textit{fixed} prototypes computed beforehand using nearest neighbor clustering, to highlight that the SAF \textit{simultaneously} performs unsupervised label \textit{learning} and label \textit{assignment};
\item[Spectral clustering] We computed partitions using normalized spectral cuts \cite{Shi2000} after augmenting feature vectors by spatial coordinates $x_{i},\,i \in \mc{I}$ for spatial regularization. The resulting data affinity matrix was given by
\begin{equation}\label{eq:k-mcF-spectral}
   {K_{\mc{F}}}_{i,k} = \exp \big(-( \tfrac{1}{\sigma^{2}} \| f_i - f_k \|^{2}_{2}+ \alpha \| x_i - x_k \|^{2}_{2} ) \big),\quad i,k \in \mc{I},
\end{equation}
with parameter $\alpha >0$ controlling the influence of spatial regularization.
\item[Fast partitioning] A variational decomposition of the piecewise-constant Mumford-Shah approach to image partitioning proposed by \cite{Storath2014}, using the publicly available implementation ``Pottslab'' from the authors. The method operates directly on the values in the feature space instead of using a reformulation with labels. Therefore, the number of clusters can be large. For this reason, we  applied an additional $k$-means clustering step to the (over-segmented) results in order to have a direct comparison in terms of labels and prototypes.
\end{description}
Two variants of the SAF were evaluated for comparison: (i) using \textit{uniform} weights for spatial regularization; (ii) using \textit{nonuniform} weights determined in "non-local means fashion" by
\begin{equation} \label{eq:non-local-means-weights}
   w_{i,k} = \frac{\tilde{w}_{i,k}}{\la \tilde{w}_{i}, \eins_{n} \ra} \quad \textnormal{ with } \quad \tilde{w}_{i,k} =
   \begin{cases}
   \exp \big( -\frac{1}{\rho}\| P_i - P_k \|^{2}_{F} \big), &\text{if}\; k \in \mc{N}_{i}, \\
   0, &\text{else},
   \end{cases}
\end{equation}
where $P_i$ denotes the patch centered at pixel $i$. Throughout, the patch size as well as the neighborhood size $|\mc{N}|$ for geometric averaging was chosen to be $5 \times 5$ pixels.

The user parameters of all other methods were manually tuned so as to obtain best comparable results.

\begin{figure}[tpb]
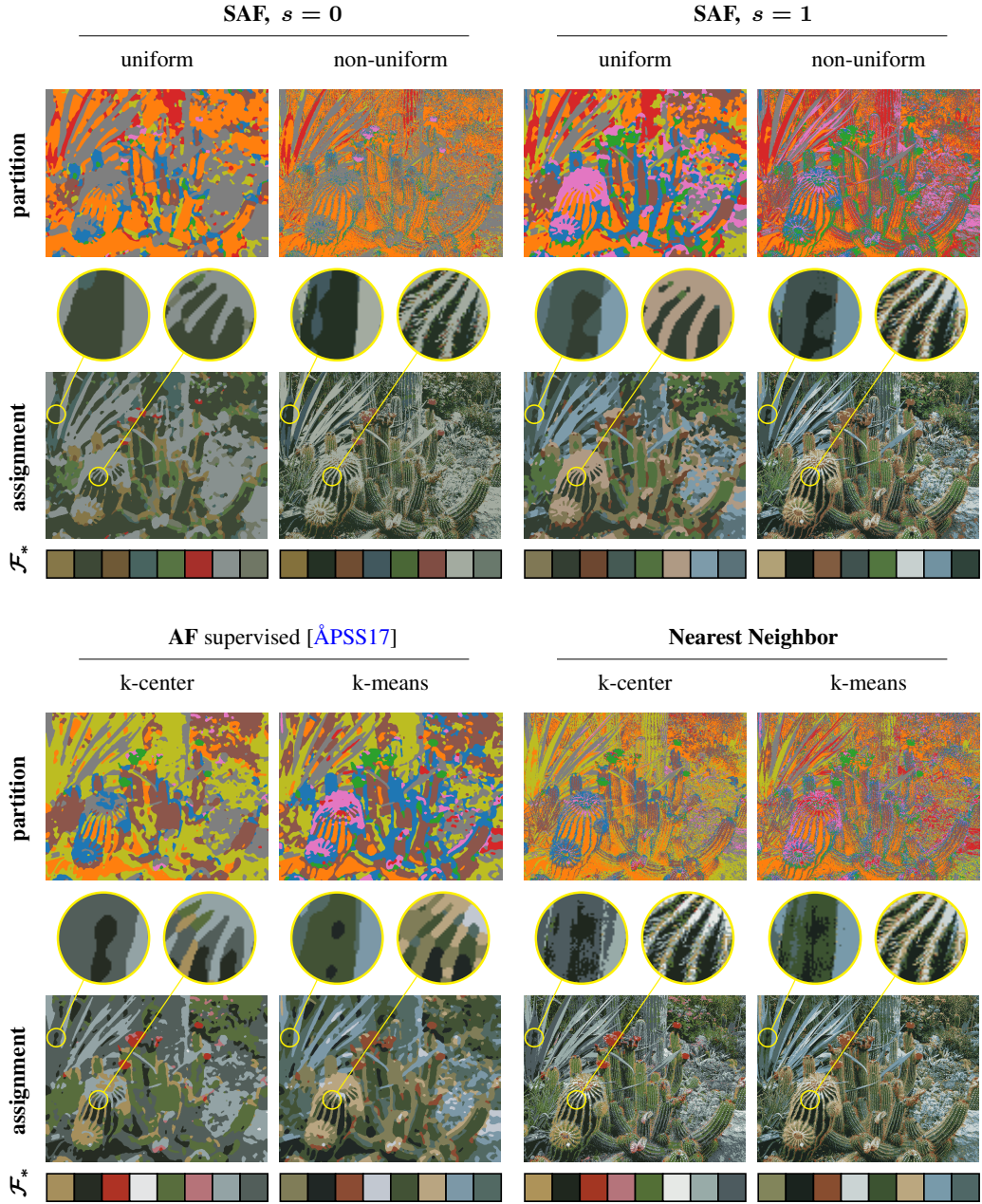

	\begin{center}
		\showExpRGBiiia{0.2}
	\end{center}
		\caption{Comparison of the SAF to nearest neighbor clustering and supervised assignment flow (AF). Inspecting the results and the close-up views shows: Nearest neighbor clustering yields noisy label assignments due to the absence of spatial regularization. The AF returns spatially coherent partitions that may locally look unnatural (see close-up views), since the prototypes are fixed and do not adapt to the spatial components of the resulting partition. The unsupervised SAF learns labels adaptively during label assignment. The resulting partitions have a natural spatial structure with increased details if $s=1$. The latter effect is considerably enhanced, independent of $s$, when nonuniform weights are used.}
		\label{fig:CactusI}

\end{figure}

\subsubsection{Nearest Neighbor Clustering, Supervised Assignment Flow}
\Cref{fig:CactusI} displays the results obtained using the SAF, the supervised assignment flow (AF) and nearest neighbor clustering, respectively. The close-up view of the results of nearest neighbor clustering shows noisy label assignments even in homogeneous regions, due to the absence of spatial regularization. By contrast, the AF returns spatially coherent labelings. However, since the labels (prototypes) are fixed beforehand, their assignments yield partitions that may locally look unnatural (see close-up views). Note that the prototypes displayed for the AF were recomputed after convergence from the resulting partition and, therefore, differ from the nearest neighbor prototypes that were used as input labels for computing the AF.

In comparison with these methods, the SAF yields more natural partitions due to forming the labels \textit{during} label assignment and preserves fine structure for $s=1$, in agreement with the experiments discussed in \cref{sec:Influence-Parameters}. This latter effect is considerably enhanced when nonuniform weights are used, independently of $s$, without compromising the quality of the spatial structure of the resulting partitions.

\begin{figure}[tpb]
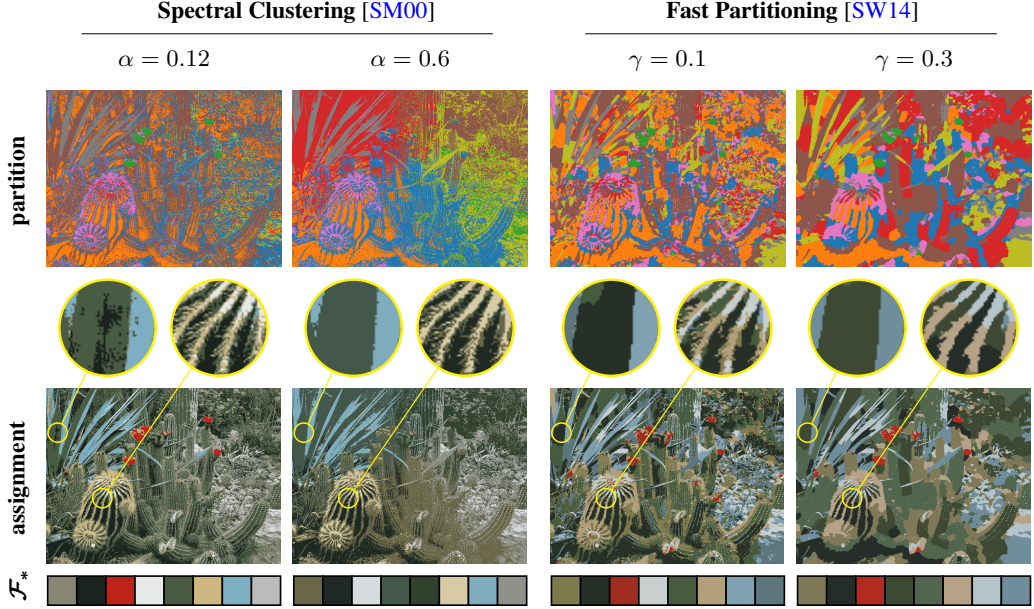

	\begin{center}
		\showExpRGBiiib{0.2}
		\caption{Comparison of the SAF to spectral clustering using feature vectors augmented by spatial coordinates and normalized cuts, and to fast partitioning that approximates the piecewise constant Mumford-Shah model. Spatial regularization as performed by spectral clustering is clearly suboptimal, since weak regularization returns noisy partitions where strong regularization yields biased clusters (e.g.~red cluster). See the last paragraph of \cref{sec:Related-Work:Spectral} for an explanation. Fast partitioning yields good labelings but does not consistently enforce the scale of spatial regularization through the choice of $\gamma$ -- see, e.g.~the small red clusters in the panel on the right-hand side. This reflects that fast partitioning directly operates on the feature space rather then separating data representation from inference, as does the SAF.}
		\label{fig:CactusII}
	\end{center}
\end{figure}
\subsubsection{Spatial Feature Augmentation and Normalized Spectral Cuts}
\Cref{fig:CactusII} displays the corresponding results for spectral clustering and fast partitioning, respectively, using two parameter values enforcing weak and strong spatial regularization in either case.

We observe that spectral clustering is highly sensitive to the value of $\alpha$. Small values yield noisy partitions, whereas larger values yield biased partitions (e.g.~red cluster). We attribute this strange behavior to the conceptual deficiency of spatial regularization performed by feature augmentation, as discussed in the last paragraph of \cref{sec:Related-Work:Spectral}.

Fast partitioning returned the closest labelings to those computed by the SAF. The scale of spatial regularization is not consistently enforced everywhere, however, as e.g.~the small red dots on the cactus arms reveal. We attribute this to the above-mentioned fact that fast partitioning directly operates on the feature space, rather than separating data representation from inference using labels and label assignments. In addition, the variational decomposition may be susceptible to getting stuck in suboptimal minima.

\subsection{Unsupervised Learning and Assignment of Locally Invariant Patch Dictionaries}\label{sec:Experiments-Patches}
In this section, we base the self-assignment flow (SAF) on more advanced features, viz.~feature \textit{patches}, and a corresponding locally invariant distance function.

\subsubsection{Locally Invariant Patch Distances}\label{sec:locally-invariant-distances}
Let
\begin{equation}\label{eq:def-mcN-mcP}
\mc{N}_{\mc{P},i},\quad i \in \hat{\mc{I}},\qquad\qquad
n_{\mc{P}} := |\mc{N}_{\mc{P},i}|,\quad \forall i
\end{equation}
denote quadratic sections centered at pixel (vertex) $i$ of the underlying image grid graph, with uniform size $n_{\mc{P}}=2 k+1$ for some $k \in \N$, for every $i$. We only consider region centers at interior grid points $i \in \hat{\mc{I}} \subset \mc{I}$ such that no section $\mc{N}_{\mc{P},i}$ extends beyond the boundary of the graph, which implies
\begin{equation}
\mc{N}_{\mc{P},i} \subset \mc{I},\quad \forall i \in \hat{\mc{I}}.
\end{equation}

\begin{figure}[htpb]
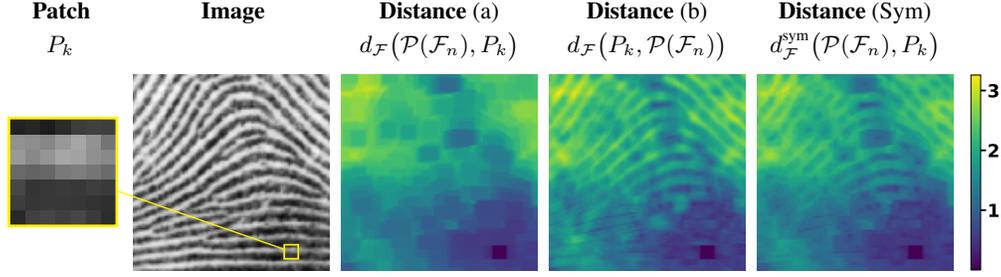

	\begin{center}
		\showPatchDist{0.12}
	\end{center}
		\caption{Visualization of the distance functions \eqref{eq:patch-dist-asym} and \eqref{eq:patch-dist-sym}) evaluated for a single patch $P_{k}$ and all patches $\mc{P}(\mc{F}_{n})$ of size $n_{\mc{P}} = 7 \times 7$ of the depicted image.  The evaluation of distance (a) amounts to determine the minimal distance of $P_{k}$ to \textit{all} equivalence classes of patches generated by the patches of the entire image. As a consequence, equivalence classes close to $P_{k}$ generate the `blocky' graph of the distance function. Conversely, evaluation of distance (b) amounts to compare the \textit{single} equivalence class generated by $P_{k}$ to all image patches. As a consequence, the graph of the distance function reflects the original image structure in more detail. The symmetric distance (rightmost panel) is the pointwise minimum of distance (a) and (b). It is apparent that neither distance (a) nor (b) dominates the other distance.}
		\label{fig:PatchDist}
\end{figure}

We define a \textit{patch centered at pixel} $i$ as the ordered tuple of data points
\begin{equation}\label{eq:def-patch-Pi}
	P_{i}=\big(f_{k_1}, \dots, f_{i}, \dots, f_{k_{n_{\mc{P}}}} \big), \qquad k_{1},\dots, k_{n_{\mc{P}}} \in \mc{N}_{\mc{P},i},\quad i \in \hat{\mc{I}},
\end{equation}
where the particular chosen order does not matter, but should be fixed for all patches. The individual patch features are denoted by
\begin{equation}
P_{i;m} = f_{m},\quad m \in \mc{N}_{\mc{P},i}
\end{equation}
and the collection of all patches induced by the data $\mc{F}_{n}$ is denoted by
\begin{equation}\label{eq:all-image-patches}
	\mc{P}(\mc{F}_{n})=\big\{ P_{i} \in \mc{F}_{n}^{n_{\mc{P}}} \colon \, i \in \hat{\mc{I}} \big\}.
\end{equation}

In order to define invariant distance functions,
we consider the dihedral group
\begin{equation}
	 \mc{D}_{4} =\Big\{
	 \begin{psmallmatrix} 1 & 0  \\ 0 & 1  \\ \end{psmallmatrix},
	 \begin{psmallmatrix} 0 & -1  \\ 1 & 0  \\ \end{psmallmatrix},
	 \begin{psmallmatrix} -1 & 0  \\ 0 & -1  \\ \end{psmallmatrix},
	 \begin{psmallmatrix} 0 & 1  \\ -1 & 0  \\ \end{psmallmatrix},
	 \begin{psmallmatrix} -1 & 0  \\ 0 & 1  \\ \end{psmallmatrix},
	 \begin{psmallmatrix} 1 & 0  \\ 0 & -1  \\ \end{psmallmatrix},
	 \begin{psmallmatrix} 0 & 1  \\ 1 & 0  \\ \end{psmallmatrix},
	 \begin{psmallmatrix} 0 & -1  \\ -1 & 0  \\ \end{psmallmatrix}
	 \Big\} \subset \mc{O}(2)
\end{equation}
generated by the following elements of the two-dimensional orthogonal group $\mc{O}(2)$: four two-dim\-en\-sional rotations by $\{0^{\circ},90^{\circ},180^{\circ},270^{\circ}\}$ and the two reflections with respect to the local coordinate axes, using the center pixel as origin. Since local grid coordinates are mapped onto each other,
we can identify each transformation of the group $\mc{D}_{4}$ with a corresponding permutation $\sigma$ of the pixel locations within the patch domain. Accordingly, writing with abuse of notation $\sigma \in \mc{D}_{4}$, the corresponding transformed patch \eqref{eq:def-patch-Pi} is given and denoted by
\begin{equation}\label{eq:Tsigma-D4}
T_{\sigma} P_{i}=\big(f_{\sigma(k_1)}, \dots, f_{i}, \dots, f_{\sigma(k_{n_{\mc{P}}})} \big) \qquad k_{1},\dots, k_{n_{\mc{P}}} \in {\mc{N}}_{\mc{P},i},\qquad \sigma \in \mc{D}_{4}.
\end{equation}
We point out that no interpolation is required to compute these patch transformations.

In addition to the transformations \eqref{eq:Tsigma-D4}, we consider all translations $P_{i} \mapsto P_{k},\, k \in \hat{\mc{N}}_{\mc{P},i}$ of patch $P_{i}$ mapping the center location $i$ to some grid location $k$ within its own region $\hat{\mc{N}}_{\mc{P},i} := \mc{N}_{\mc{P},i} \cap \hat{\mc{I}}$  restricted to interior pixels.
We factor out these $|\mc{D}_{4}| \cdot n_{\mc{P}} = 8 \cdot n_{\mc{P}}$ degrees of freedom by considering all corresponding transformations of patch $P_{i}$ as \textit{equivalent}. These equivalence classes of patches provide the basis for invariant patch distances as defined next.

We define the \textit{asymmetric patch distance} between two patches centered at pixel $i \in \hat{\mc{I}}$ and $k \in \hat{\mc{I}}$ by
\begin{equation}\label{eq:patch-dist-asym}
	d_{\mc{F}}(P_{i}, P_{k}) = \min_{\substack{ \sigma \in \mc{D}_{4} \\ j \in \hat{\mc{N}}_{\mc{P},i}   }} \, \sum_{m \in [n_{\mc{P}}]} d_{\mc{F}}\big((T_{\sigma} P_{j})_{m}, P_{k;m}\big)
\end{equation}
and the \textit{symmetric patch distance} by
\begin{equation}\label{eq:patch-dist-sym}
	d^{\textnormal{sym}}_{\mc{F}}(P_{i}, P_{k}) = \min \big\{d_{\mc{F}}(P_{i}, P_{k}), d_{\mc{F}}(P_{k}, P_{i}) \big\}.
\end{equation}
\Cref{fig:PatchDist} illustrates these locally invariant distance functions.

\subsubsection{Recovery of Patch Prototypes and Images}
Distance \eqref{eq:patch-dist-sym} defines the affinity matrix \eqref{eq:K-mcF-entries} by \eqref{eq:K-mcF-Gaussian} and in turn the likelihood map \eqref{eq:def-Lsi} and the similarity map \eqref{eq:def-Si}. As a consequence, the self-assignment flow can be integrated to obtain the assignment $W(t)$. We focus in this section on the recovery of prototypical patches and on `explanations' of input images by assigning these prototypical patches. The corresponding results are illustrated by numerical examples in the subsequent \cref{sec:patch-based-SAF,sec:patch-assignment-novel}.

According to \cref{sec:Recovery-Prototypes}, prototypical patches representing each cluster are determined as weighted averages
\begin{equation}\label{eq:prototypical-patches}
P_{j}^{\ast} = \operatornamewithlimits{\arg \min}_{P \in \mc{P}(\mc{F})} \sum_{i \in \mc{\hat{I}}} \big(C(W)^{-1} W^{\T} \big)_{j,i} d^{2}_{\mc{F}}(P_{i}, P),\qquad j \in \mc{J},
\end{equation}
with respect to the \textit{asymmetric} patch distance \eqref{eq:patch-dist-asym}, since the prototypical patch $P \in \mc{P}(\mc{F})$ is not contained in the set of all image patches $\mc{P}(\mc{F}_{n})$ \eqref{eq:all-image-patches}.

Using these prototypes, the corresponding image is computed as follows. For each prototypical patch $P_{j}^{\ast}$, the optimal transformation for the assignment to pixel $i$ is determined as
\begin{equation}\label{eq:optimal-transformations}
(\sigma_{i,j}^{\ast}, l_{i,j}^{\ast}) = \operatornamewithlimits{\arg \min}_{\substack{ \sigma \in \mc{D}_{4} \\ l \in \hat{\mc{N}}_{\mc{P},i}  }} \, \sum_{m \in [n_{\mc{P}}]} d_{\mc{F}}\big((T_{\sigma} P_{l})_{m}, P^{\ast}_{j;m}\big).
\end{equation}
Using these transformations, a prototypical patch is assigned to every pixel $i \in \hat{\mc{I}}$. This implies that, for each pixel $i$, patches assigned to pixels $j \in \mc{N}_{\mc{P};i}$ may assign a corresponding patch entry to pixel $i$. Averaging these entries, normalized by the number of values contributed to pixel $i$, defines the restored image value at pixel $i$.

\begin{figure}[tpb]
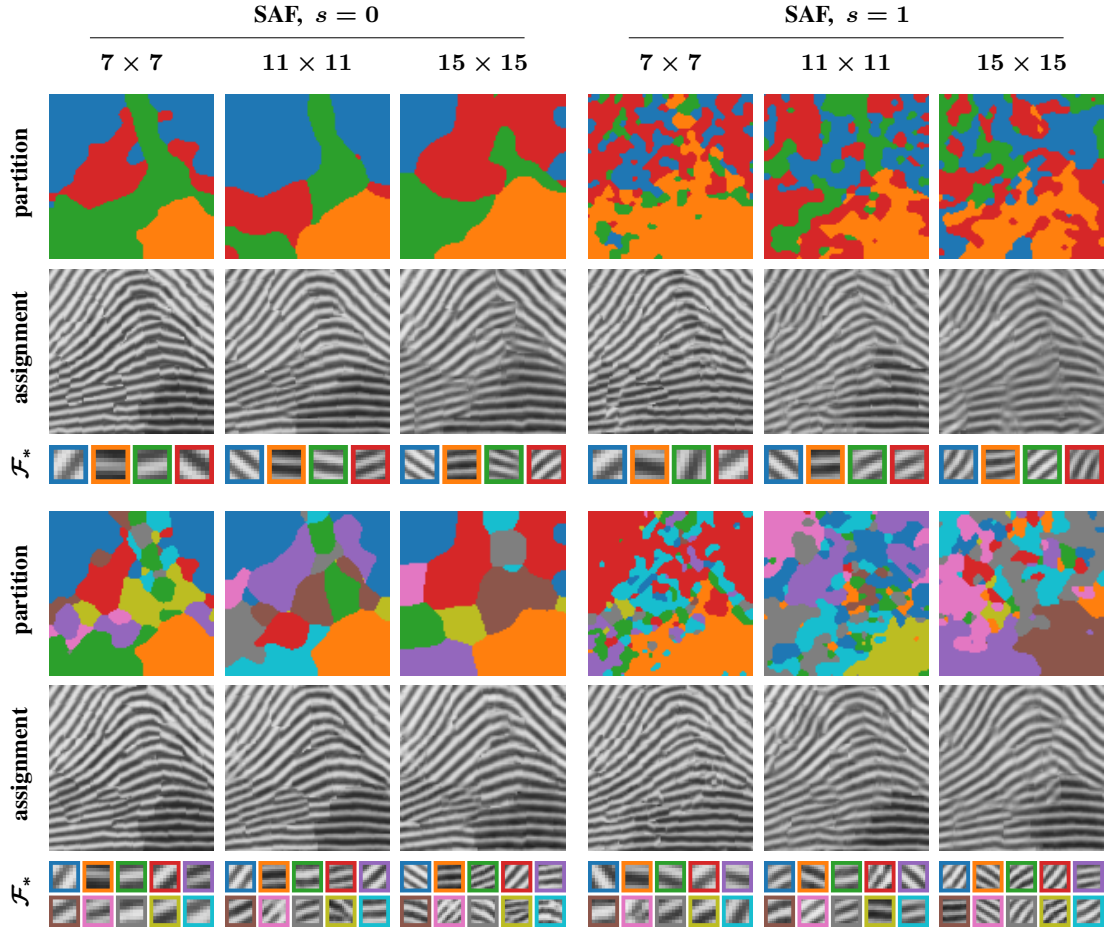

	\begin{center}
		\showExpPatchesII{0.14}{fingerprint2}{3}
	\end{center}
		\caption{Determination of locally invariant patch prototypes, their assignment to the original image data and the corresponding partitions (depicted with pseudo-colors), using the SAF ($s=0$ and $s=1$), different patch sizes ($7\times 7$, $11\times 11$, $15\times 15$) and numbers of prototypes ($c=4$ and $c=10$). The underlying transformation group enables accurate image representations even with $c=4$ patches only, provided the patch size is close to the spatial scale of local image structure (here: $7 \times 7$ pixels). This performance deteriorates for larger patch sizes. The SAF with $s=0$ yields partitions that are spatially more regular than the partitions computed with $s=1$, since the latter tend to cover the feature space more uniformly, in agreement with the result depicted by \cref{fig:Seastar}.}
		\label{fig:Fingerprint}
\end{figure}
\begin{figure}[tpb]
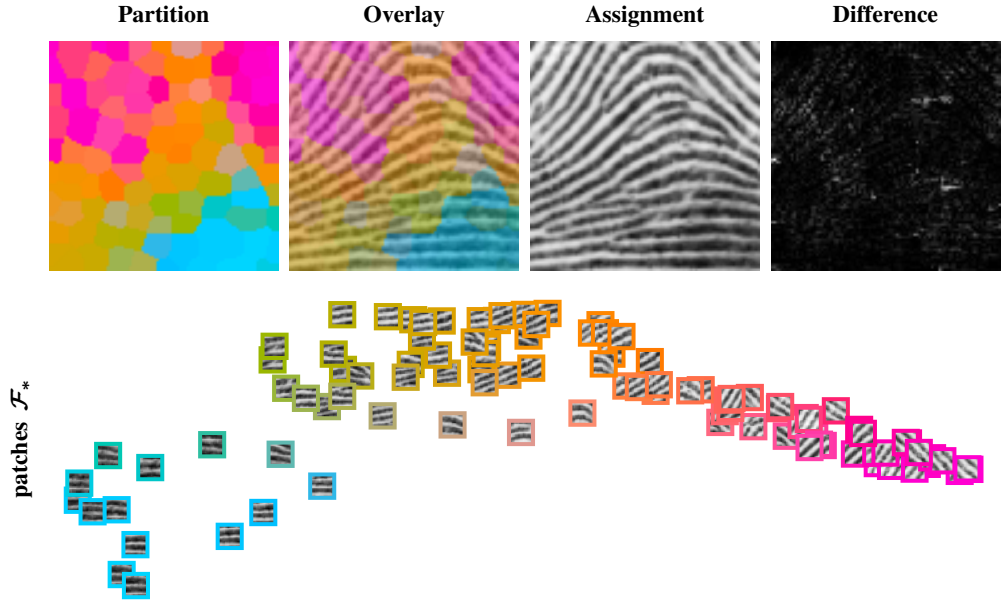

	\begin{center}
		\showPatchFingerprintVisu{0.14}{0.786}
	\end{center}
		\caption{Experiment of \cref{fig:Fingerprint} repeated with a larger patch dictionary leads to a detailed representation of local image structure. Although overlapping regions of assigned prototypical patches are averaged at each pixel in order to restore an image, the result `Assignment' is quite close to the input data `Image' of \cref{fig:PatchDist}, due to using the locally invariant patch distance. Panel `Difference' shows the difference as grayvalue plot (range $[0,0.3]$). The lower panel displays a 2D embedding of the learned prototypical patches. The corresponding colors indicate their assignment in `Partition' and `Overlay'. Clusters in the lower panel, e.g.~those colored pink and blue, illustrate the invariance under discrete rotations and reflections.
		}
		\label{fig:Fingerprint100}
\end{figure}
\begin{figure}[tpb]
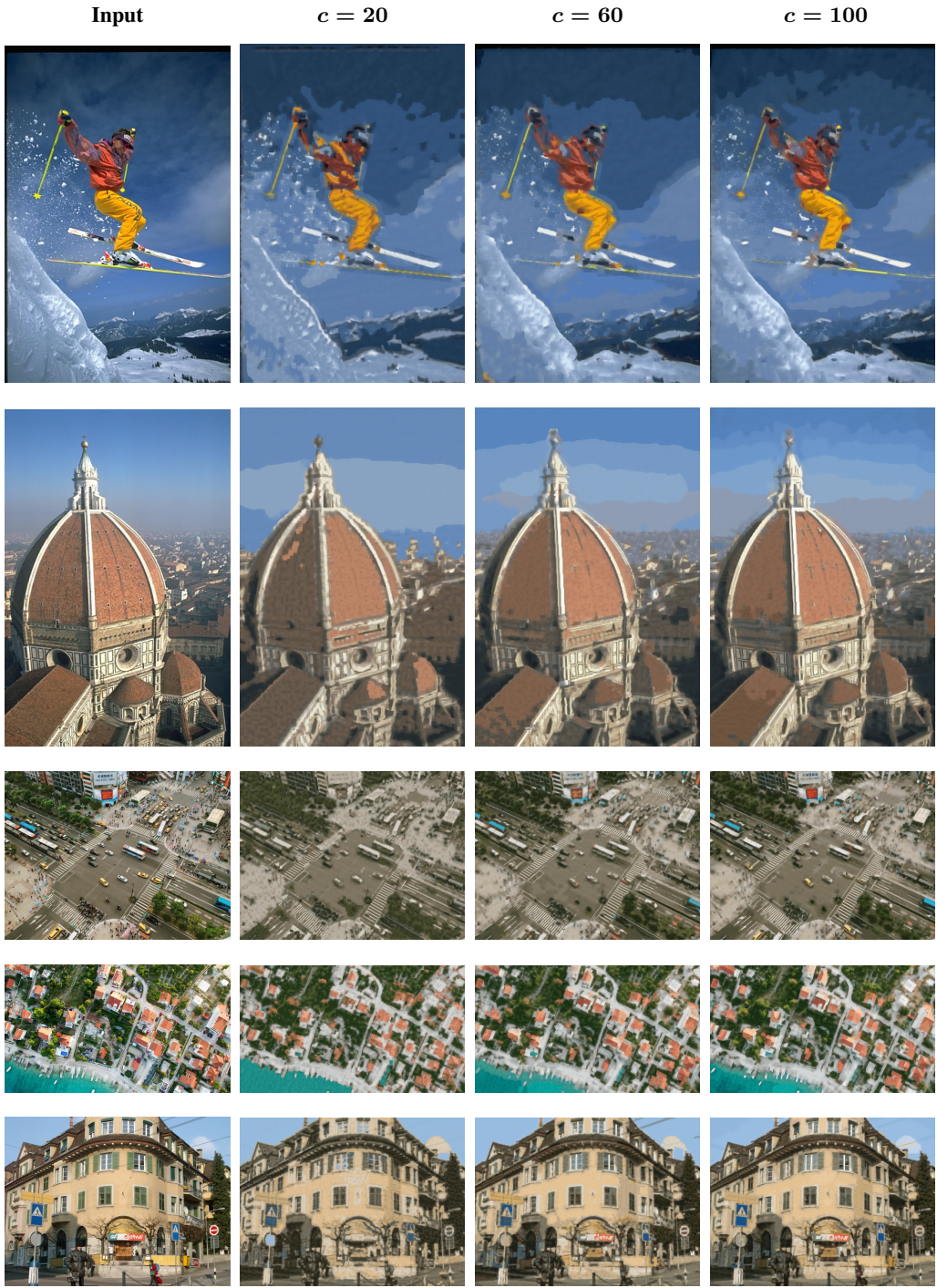

	\begin{center}
		\showNewPatchExp{0.21}
	\end{center}
		\caption{'Input' images (left-most column) are represented in a compact way by unsupervised patch learning and assignment using the SAF with $s=1$, $|\mc{N}|=3 \times 3$ for geometric regularization, and with increasing dictionary sizes $c \in \{20,60,100\}$ of locally invariant patches of size $7 \times 7$, as described in \cref{sec:Experiments-Patches}. The recovered images are shown in the remaining three columns. We observe that for more complex real-world scenarios, a larger number of patches is required for representing all local details (e.g., see the arcs of the dome in the second row). This suggests to extend the local patch invariance towards affine transformations with arbitrary rotations and scalings, which requires more expensive interpolation of the pixel-grid, however.}
		\label{fig:PatchLabeling}
\end{figure}
\begin{figure}[tpb]
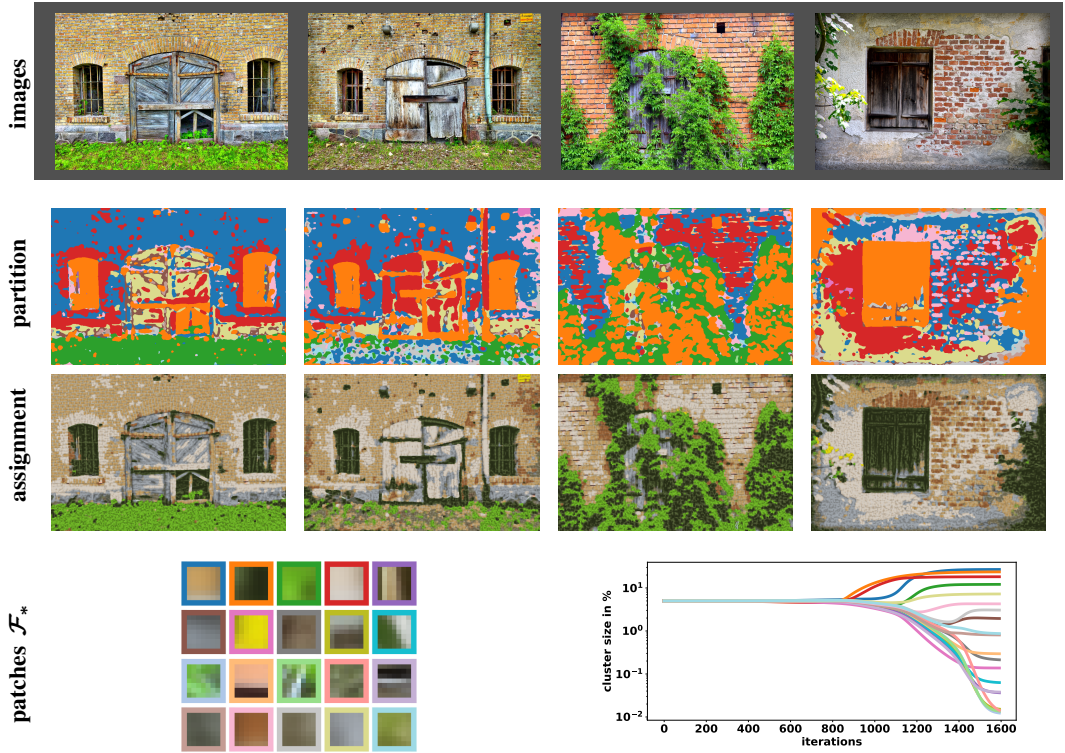

	\begin{center}
		\showExpPatchTransferTrain{0.2}{0.36}
	\end{center}
		\caption{The bottom row shows a dictionary of $c=20$ locally invariant patches of size $7 \times 7$ pixels, learned from the four images shown in the top row using the SAF with $s=0$ and $|\mc{N}|=3\times 3$ pixels. The second and third row illustrate the patch assignments with pseudo-colors and the recovered image data, respectively. Closeness of the restored images to the input data, despite the small size of the patch dictionary, demonstrates the effectiveness of the underlying discrete transformation group. The evolution of cluster sizes (bottom row, right panel) illustrates the ability of the SAF to resolve `conflicting' assignments due to mutually overlapping patches successfully, along with the formation of invariant patch prototypes, in a completely unsupervised way.}
		\label{fig:AbandonedTrain}
\end{figure}

\subsubsection{Patch-Based Self-Assignment Flow} \label{sec:patch-based-SAF}
\Cref{fig:Fingerprint} illustrates image partitions, the corresponding $c=4$ and $c=10$ prototypical patches of sizes $n_{\mc{P}} \in \{7 \times 7, 11 \times 11, 15 \times 15\}$,  their assignment to the input image data as described in the preceding section, based on integrating the SAF with $s=0$ and $s=1$ and spatial regularization parameter $|\mc{N}|=3 \times 3$.

In agreement with the discussion of the results depicted by \cref{fig:Seastar}, we observe that the SAF with $s=0$ returns partitions with a more regular spatial structure, whereas the SAF with $s=1$ tends to cover the feature space more uniformly which is achieved with partitions that have a irregular spatial structure.

The image recovered by assigning the prototypical patches exhibits relatively sharp spatial structures, despite the small number of prototypes ($c \in \{4,10\}$) and the pixel-wise averaging of grayvalues assigned by multiple patches. This illustrates that the small transformation group defined in \cref{sec:locally-invariant-distances} that does not even require image interpolation, actually is quite powerful. For example, the large blue region of the partition shown in \cref{fig:Fingerprint} that results from the SAF with $s=0$ and $7\times 7$ patches, indicates the optimal assignment of patches from a \textit{single} equivalence class only. These patches fit quite accurately to image structures with different orientations and local edge profiles. This effect deteriorates when using patch sizes that are much larger than the typical variations of local image structure, as a comparison of the results for the patch size $15 \times 15$ with $c=4$ and $c=10$ shows.

For comparison, \cref{fig:Fingerprint100} shows the result for a larger number $c=100$ of prototypes, which leads to a detailed representation of local image structure. The lower panel displays a two-dimensional embedding of the weighted graph with prototypes as patches and the similarities \eqref{eq:K-mcF-Gaussian} as weights. Representatives of equivalence classes of patches that are close to each other, are grouped together. Factoring out the group of transformations effectively copes with different edge profiles and orientations. Panel `Difference' shows the absolute difference between the input image and labeling, ranging from $0$ (black) to $0.3$ (white).

We additionally evaluated the unsupervised patch-based SAF approach using various real-world images. \Cref{fig:PatchLabeling} depicts the input data as well as the resulting patch assignments for an increasing number of labels $c \in \{20,60,100\}$.

\subsubsection{Patch Assignment to Novel Data} \label{sec:patch-assignment-novel}
We repeated the experiment illustrated by \cref{fig:Fingerprint} using the data shown in \cref{fig:AbandonedTrain}. $c=20$ locally invariant prototypical patches of size $7 \times 7$ pixels were learned from $4$ images using the SAF with $s=0$ and $|\mc{N}|=3\times 3$ pixels. The restored images shown in the third row are remarkably close to the input data (first row), despite the small size $c=20$ of the patch dictionary. This demonstrates again the effectiveness of the underlying discrete transformation group.

\Cref{fig:AbandonedEval} shows in the top row \textit{novel} image data. These four images that are semantically similar to the training images of \cref{fig:AbandonedTrain} regarding the local image structure and texture (brick/stone, door/window, grass/ivy). The corresponding partitions and recovered images solely resulted from assigning the patch dictionary depicted by \cref{fig:AbandonedTrain} to the data by the supervised assignment flow. Again, the quality of image represention using this small dictionary is remarkable, except for the stone wall texture shown in column (c) of \cref{fig:AbandonedEval}, that is not present in the training data depicted by \cref{fig:AbandonedTrain}.

\begin{figure}[tpb]
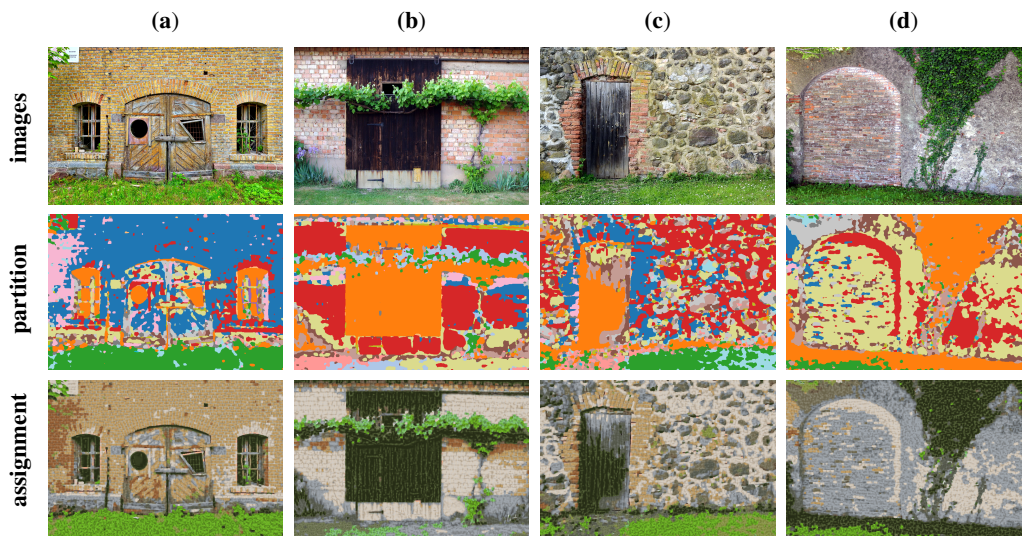

	\begin{center}
		\showExpPatchTransferEval{0.2}{0.8}
	\end{center}
		\caption{Supervised regularized assignment of the locally invariant patch dictionary from \cref{fig:AbandonedTrain} using the AF, to four \textit{novel} images (top row). Since these images are semantically similar to the training data from \cref{fig:AbandonedTrain}, the restored images are close to the input data, except for image (c) whose stone wall texture is not present in the training data. }
		\label{fig:AbandonedEval}
\end{figure}

\subsection{Regularized Clustering of Weighted Graph Data}\label{sec:Graph-Clustering}
Our approach can be applied to any data given on any undirected weighted graph. For illustration, we included an additional experiment using data not related to image analysis.

\Cref{fig:Football} shows data in terms of a weighted graph $(\mc{I}, \mc{E}, K_{\mc{E}})$ adopted from \cite{Girvan2002}. It represents the network of American football games between Division IA colleges during the regular season fall 2000. Teams are subdivided into 12 conferences, mainly based on the geographical distance, that primarily play against each other in a first period. Afterwards, the conference champions play against each other in the final games. Each node of the network represents a team. Edge weights ${K_{\mc{E}}}_{i,k}$ represent the number of games played between two teams. Labels for each vertex indicate the conference to which a team belongs, displayed by a corresponding color in \cref{fig:Football} (ground truth). We considered this labeling as ground truth for the task to partition the graph into $c=12$ classes.
The initial perturbation of the barycenter \eqref{eq:self-assignment-flow-a} in terms of a distance matrix $D_{\mc{F},0}$ was computed by assigning feature vectors to each node based on the $c$ dominant eigenvectors of $K_{\mc{E}}$, followed by greedy $k$-center clustering (\cref{sec:greedy-clustering}). Markers $\PlusMarker$ indicate nodes that were assigned to a conference different from ground truth. Weights were defined as
\begin{equation}\label{eq:graph-regularization-weights}
   w_{i,k} = \frac{\tilde{w}_{i,k}}{\la \tilde{w}_{i}, \eins_{n} \ra} \quad \textnormal{ with } \quad \tilde{w}_{i,k} = {K_{\mc{E}}}_{i,k} + \Diag(K_{\mc{E}} \eins_{n}),
\end{equation}
i.e.~by adding the total number of games played by each team to the diagonal.

The nearest neighbor assignment of the initial distance matrix contains many erroneous assignments (\cref{fig:Football}, initialization).
The results of the SAF with $s=1$ reproduces almost the ground-truth labeling and is also close to the result of applying spectral clustering \cite{Shi2000} directly to $K_{\mc{E}}$. The SAF with $s=0$ enforces assignments with a more regular spatial structure. Both findings agree with observations made in preceding experiments; see e.g. \cref{fig:Seastar}.

\begin{figure}[tpb]
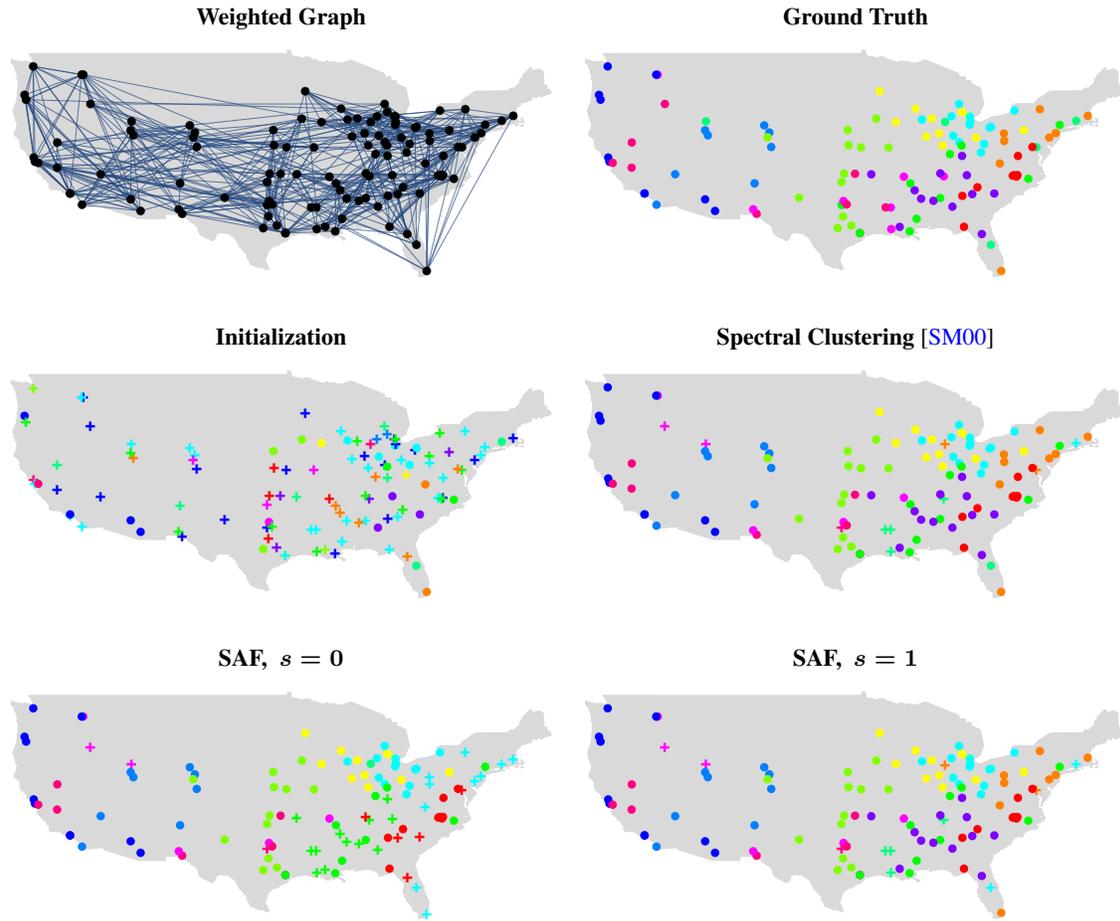

	\begin{center}
		\showExpGraph{0.48}
	\end{center}
		\caption{Weighted graph data of American football games between Division IA colleges during the regular season fall 2000 are clustered. Each node represents a team and edge weights indicate the number of games played between two teams. The colored nodes in `Ground Truth' show the subdivision of the teams into 12 conferences (clusters), that primarily play against each other in a first period. Graph partitioning with $c=12$ was performed using the SAF with $s=0$ and $s=1$, and with weights defined by \eqref{eq:graph-regularization-weights}. Markers $\PlusMarker$ indicate labels assigned to nodes that differ from ground truth. Starting from the initialization (2nd row, left panel) which is noisy, the SAF with $s=1$ returns almost the ground-truth labeling and is also close to the result of directly applying spectral clustering to $K_{\mc{E}}$.  The SAF with $s=0$ enforces label assignments that are spatially more regular, and with empty clusters orange and purple.
		}
		\label{fig:Football}
\end{figure}
%

\section{Conclusion}

We extended the assignment flow approach to supervised image labeling introduced by \cite{Astrom:2017ac} to unsupervised scenarios where no labels are available. The resulting self-assignment flow takes a pairwise affinity matrix as input data and maximizes the correlation (inner product) with a low-rank self-assignment matrix, corresponding to a factorization determined by the variables of the assignment flow. A single parameter $s \in [0,1]$ determines the self-assignment matrix as smooth geodesic interpolation of the self-affinity matrix ($s=0$) and the self-influence matrix ($s=1$), which enables to control the relative influence of spatial regularization and the preservation of feature-induced local image structure, respectively. A second parameter, the size $|\mc{N}|$ of local neighborhoods for geometric averaging of assignments, controls the scale of the resulting image partition, as in the supervised case, and the resulting number of clusters.

The compositional design of the approach, informally expressed as `regularization $\circ$ data likelihood' as opposed to `regularization $+$ data likelihood' as in traditional variational approaches, merely required to generalize the likelihood map (cf.~\eqref{eq:def-Lsi}) in order to extend the approach to the unsupervised case. In particular, numerical techniques developed by \cite{Zeilmann:2018aa} for integrating the assignment flow still apply. Learning patch dictionaries with a locally invariant patch distance function demonstrated exemplarily, together with a range of further numerical experiments, that our approach can flexibly cope with all common feature representations, including RKHS embeddings.

We characterized mathematically our approach from different relevant viewpoints, depending on the parameter $s$: As rank-constrained discrete optimal transport and as normalized spectral cuts that are spatially regularized in an unbiased way (rather than adding spatial coordinates as `features'). Additionally, we showed that the formation of prototypes automatically optimizes a basic class separability measure. Finally, from the viewpoint of combinatorial optimization, our approach successfully handles completely positive factorizations of self-assignments in large-scale scenarios, subject to spatial regularization.

Promising directions of further research include application-dependent extensions of the invariance group in order to learn compact patch dictionaries using the self-assignment flow in various scenarios. An open challenging problem concerns the extension of weight parameter estimation for application-specific adaptive regularization
\cite{Huhnerbein:2019ab} to the unsupervised self-assignment flow approach.


\appendix
\section{Scatter Matrices}\label{app:scatter-matrices}

Let $\mc{F}_{n} = \{f_{i}\in\mc{F} \colon i \in \mc{I}\}$, $n=|\mc{I}|$, denote given data. Consider a partition $\mc{I}=\dot\cup_{j\in\mc{J}}\mc{I}_{j}$ with $n_{j}=|\mc{I}_{j}|$ and $\sum_{j \in \mc{J}} n_{j}=n$.
We define the empirical quantities
\begin{subequations}\label{eq:def-m-k}
\begin{align}
P_{j} &= \frac{n_{j}}{n},\qquad j \in [c]
&&(\text{prior probabilities})
\\
m_{j}
&= \frac{1}{n_{j}} \sum_{i \in \mc{I}_{j}} f_{i},\qquad j \in [c]
&&(\text{class-conditional mean vectors})
\\
m &= \sum_{j \in [c]} P_{j} m_{j}
= \frac{1}{n} \sum_{i \in [n]} f_{i}
&&
(\text{mean vector})
\end{align}
\end{subequations}
and the \textit{scatter matrices} (empirical covariance matrices)
\begin{subequations}\label{eq:def-scatter-matrices}
\begin{align}
S_{t} &= \frac{1}{n} \sum_{i \in [n]} (f_{i}-m)(f_{i}-m)^{\T},
\\
S_{w} &= \sum_{j \in [c]} P_{j} \cdot \frac{1}{n_{j}} \sum_{i \in \mc{I}_{j}}(f_{i}-m_{j})(f_{i}-m_{j})^{\T}
= \frac{1}{n}\sum_{j \in [c]}\sum_{i \in \mc{I}_{j}}(f_{i}-m_{j})(f_{i}-m_{j})^{\T},
\\
S_{b} &= \sum_{j \in [c]} P_{j} (m_{j}-m)(m_{j}-m)^{\T}.
\end{align}
\end{subequations}
$S_{w}$ is called the \textit{within-class} scatter matrix, whereas $S_{b}$ is called the \textit{between-class} scatter matrix. $S_{t}$ is called the \textit{total} scatter matrix due to the decomposition \eqref{eq:St-decomposition}, that can be shown by an elementary calculation.

\vspace{1cm}
\centerline{\textbf{Acknowledgements}}
Financial support by the German Science Foundation (DFG), grant GRK 1653, is gratefully acknowledged. This work has also been stimulated by the Heidelberg Excellence Cluster STRUCTURES, funded by the DFG under Germanys Excellence Strategy EXC-2181/1 - 390900948.

\bibliographystyle{amsalpha}
\bibliography{TexInput/bibliography}

\providecommand{\bysame}{\leavevmode\hbox to3em{\hrulefill}\thinspace}
\providecommand{\MR}{\relax\ifhmode\unskip\space\fi MR }
\providecommand{\MRhref}[2]{%
  \href{http://www.ams.org/mathscinet-getitem?mr=#1}{#2}
}
\providecommand{\href}[2]{#2}
\begin{thebibliography}{BCPD99}

\bibitem[AMS09]{Absil2009}
P.-A. Absil, R.~Mahony, and R.~Sepulchre, \emph{{Optimization Algorithms on
  Matrix Manifolds}}, Princeton University Press, 2009.

\bibitem[AN00]{Amari:2000aa}
S.-I. Amari and H.~Nagaoka, \emph{{Methods of Information Geometry}}, Amer.
  Math. Soc. and Oxford Univ. Press, 2000.

\bibitem[And79]{Ando:1979aa}
T.~Ando, \emph{{Generalized Schur Complements}}, Lin. Algebra Appl. \textbf{27}
  (1979), 173--186.

\bibitem[{\AA}PSS17]{Astrom:2017ac}
F.~{\AA}str{\"o}m, S.~Petra, B.~Schmitzer, and C.~Schn{\"{o}}rr, \emph{{Image
  Labeling by Assignment}}, Journal of Mathematical Imaging and Vision
  \textbf{58} (2017), no.~2, 211--238.

\bibitem[BCPD99]{Burkhard:1999aa}
R.~E. Burkhard, E.~Cela, P.~M. Pardalos, and D.~Z. Du, \emph{{Linear Assignment
  Problems and Extensions}}, pp.~75--149, Kluwer Acad. Publ., 1999.

\bibitem[Bha06]{Bhatia:2006aa}
R.~Bhatia, \emph{{Positive Definite Matrices}}, Princeton Univ. Press, 2006.

\bibitem[Bom18]{Bomze:2018aa}
I.~M. Bomze, \emph{{Building a Completely Positive Factorization}}, Central
  Europ. J. Oper. Res. \textbf{26} (2018), no.~2, 287--305.

\bibitem[BP94]{Plemmons:1994aa}
A.~Berman and R.~J. Plemmons, \emph{{Nonnegative Matrices in the Mathematical
  Sciences}}, SIAM, 1994.

\bibitem[BSM18]{Berman:2018aa}
A.~Berman and N.~Shaked-Monderer, \emph{{Completely Positive Matrices: Real,
  Rational, and Integral}}, Acta Math. Vietnam \textbf{43} (2018), no.~4,
  629--639.

\bibitem[DK82]{Devyver:1982aa}
P.~A. Devijver and J.~Kittler, \emph{{Pattern Recognition: A Statistical
  Approach}}, Prentice Hall, 1982.

\bibitem[DM05]{Drineas:2005aa}
P.~Drineas and M.~W. Mahoney, \emph{{On the Nystr{\"o}m Method for
  Approximating a Gram Matrix for Improved Kernel-Based Learning}},
  J.~Mach.~Learning Res. \textbf{6} (2005), 2153--2175.

\bibitem[GM16]{Gittens:2016aa}
A.~Gittens and M.~W. Mahoney, \emph{{Revisiting the Nystr\"{o}m Method for
  Improved Large-Scale Machine Learning}}, J. Mach. Learning Res. \textbf{17}
  (2016), no.~1, 3977--4041.

\bibitem[GN02]{Girvan2002}
M.~Girvan and M.~E.-J. Newman, \emph{{Community Structure in Social and
  Biological Networks}}, Proceedings of the National Academy of Sciences
  \textbf{99} (2002), no.~12, 7821--7826.

\bibitem[HM96]{Helmke:1996aa}
U.~Helmke and J.~B. Moore, \emph{{Optimization and Dynamical Systems}}, 2nd
  ed., Springer, 1996.

\bibitem[HP11]{Har-Peled:2011aa}
S.~Har-Peled, \emph{{Geometric Approximation Algorithms}}, AMS, 2011.

\bibitem[HS03]{Hofbauer:2003aa}
J.~Hofbauer and K.~Siegmund, \emph{{Evolutionary Game Dynamics}}, Bull. Amer.
  Math. Soc. \textbf{40} (2003), no.~4, 479--519.

\bibitem[HSPS19]{Huhnerbein:2019ab}
R.~H{\"{u}}hnerbein, F.~Savarino, S.~Petra, and C.~Schn\"{o}rr, \emph{{Learning
  Adaptive Regularization for Image Labeling Using Geometric Assignment}}, CoRR
  abs/1910.09976 (2019).

\bibitem[HSS08]{Hofmann:2008aa}
T.~Hofmann, B.~Sch{\"o}lkopf, and A.~J. Smola, \emph{{Kernel Methods in Machine
  Learning}}, Ann. Statistics \textbf{36} (2008), no.~3, 1171--1220.

\bibitem[Jos17]{Jost:2017aa}
J.~Jost, \emph{{Riemannian Geometry and Geometric Analysis}}, 7th ed.,
  Springer-Verlag Berlin Heidelberg, 2017.

\bibitem[KYP15]{Kuang2015}
D.~Kuang, Sa. Yun, and H.~Park, \emph{{Sym{NMF}: Nonnegative Low-Rank
  Approximation of a Similarity Matrix for Graph Clustering}}, Journal of
  Global Optimization \textbf{62} (2015), no.~3, 545--574.

\bibitem[Lau87]{Lauritzen:1987aa}
S.~L. Lauritzen, \emph{{Chapter 4: Statistical Manifolds}}, {Differential
  Geometry in Statistical Inference} (Shanti~S. Gupta, S.~I. Amari, O.~E.
  Barndorff-Nielsen, R.~E. Kass, S.~L. Lauritzen, and C.~R. Rao, eds.),
  Institute of Mathematical Statistics, Hayward, CA, 1987, pp.~163--216.

\bibitem[NT02]{Nesterov:2002aa}
Y.~E. Nesterov and M.~J. Todd, \emph{{On the Riemannian Geometry Defined by
  Self-Concordant Barriers and Interior-Point Methods}}, Found. Comp. Math.
  \textbf{2} (2002), 333--361.

\bibitem[PC18]{Peyre:2018aa}
G.~Peyr\'{e} and M.~Cuturi, \emph{{Computational Optimal Transport}}, CNRS,
  2018.

\bibitem[RW95]{Rendl:1995aa}
F.~Rendl and H.~Wolkowicz, \emph{{A Projection Technique for Partitioning the
  Nodes of a Graph}}, Ann. Operations Res. \textbf{58} (1995), 155--179.

\bibitem[San15]{Santambrogio:2015aa}
F.~Santambrogio, \emph{{Optimal Transport for Applied Mathematicians}},
  Birkh{\"a}user, 2015.

\bibitem[Sch20]{Schnorr:2019aa}
C.~Schn\"{o}rr, \emph{{Assignment Flows}}, {Variational Methods for Nonlinear
  Geometric Data and Applications} (P.~Grohs, M.~Holler, and A.~Weinmann,
  eds.), Springer (in press), 2020.

\bibitem[SM00]{Shi2000}
J.~Shi and J.~Malik, \emph{{Normalized Cuts and Image Segmentation}}, IEEE
  Transactions on Pattern Analysis and Machine Intelligence \textbf{22} (2000),
  no.~8, 888--905.

\bibitem[SW14]{Storath2014}
M.~Storath and A.~Weinmann, \emph{{Fast Partitioning of Vector-Valued Images}},
  SIAM Journal on Imaging Sciences \textbf{7} (2014), no.~3, 1826--1852.

\bibitem[Vil09]{Villani:2009aa}
C.~Villani, \emph{{Optimal Transport: Old and New}}, Springer, 2009.

\bibitem[vL07]{Luxburg:2007aa}
U.~von Luxburg, \emph{{A Tutorial on Spectral Clustering}}, Statistics and
  Computing \textbf{17} (2007), no.~4, 395--416.

\bibitem[WS01]{Williams:2001aa}
C.~K.~I. Williams and M.~Seeger, \emph{{Using the Nystr{\"o}m Method to Speed
  up Kernel Machines}}, Proc. NIPS, 2001, pp.~682--688.

\bibitem[YC16]{Yang2016}
Z.~Yang and E.~Corander, J.and~Oja, \emph{{Low-Rank Doubly Stochastic Matrix
  Decomposition for Cluster Analysis}}, Journal of Machine Learning Research
  \textbf{17} (2016), no.~1, 6454--6478.

\bibitem[ZS05]{Zass:2005aa}
R.~Zass and A.~Shashua, \emph{{A Unifying Approach to Hard and Probabilistic
  Clustering}}, Proc. ICCV, 2005.

\bibitem[ZSPS20]{Zeilmann:2018aa}
A.~Zeilmann, F.~Savarino, S.~Petra, and C.~Schn{\"{o}}rr, \emph{{Geometric
  Numerical Integration of the Assignment Flow}}, Inverse Problems \textbf{36}
  (2020), no.~3, 034004 (33pp).

\bibitem[ZZPS19]{Zisler:2019aa}
M.~Zisler, A.~Zern, S.~Petra, and C.~Schn\"{o}rr, \emph{{Unsupervised Labeling
  by Geometric and Spatially Regularized Self-Assignment}}, Proc. SSVM,
  Springer, 2019.

\bibitem[ZZPS 7]{Zern:2019aa}
A.~Zern, M.~Zisler, S.~Petra, and C.~Schn\"{o}rr, \emph{{Unsupervised
  Assignment Flow: Label Learning on Feature Manifolds by Spatially Regularized
  Geometric Assignment}}, Journal of Mathematical Imaging and Vision (2020; in
  press: https://doi.org/10.1007/s10851-019-00935-7).

\bibitem[ZZS20]{Zern:2020aa}
A.~Zern, A.~Zeilmann, and C.~Schn\"{o}rr, \emph{{Assignment Flows for Data
  Labeling on Graphs: Convergence and Stability}}, CoRR abs/2002.11571 (2020).

\end{thebibliography}
\clearpage
\end{document}